\newcommand{\id}{\mathtt{id}} 
\newcommand\mId[1]{\mathrm{Id}_{#1}}
\newcommand\eq[1]{\begin{equation}#1\end{equation}}
\newcommand{\RR}{\mathbb{R}}
\newcommand{\RD}{\mathbb{R}^D} 
\newcommand{\Rd}{\mathbb{R}^d} 
\newcommand{\ouv}{\Omega} 
\newcommand{\bigouv}{\tilde{\Omega}}
\newcommand{\vdim}{\mathrm{dim}}
\newcommand{\linspan}{\mathrm{span}}
\newcommand{\lie}{\mathrm{Lie}}
\newcommand{\R}{\mathbb{R}}
\newcommand{\W}{\mathcal{W}_{\phi}}
\newcommand{\x}{\theta} 
\newcommand{\z}{\alpha}
\newcommand{\smallvec}[1]{ \left(\begin{smallmatrix}#1\end{smallmatrix}\right) }
\newcommand{\diag}{\mathrm{diag}}
\theoremstyle{plain}
\newtheorem{theorem}{Theorem}[section]
\newtheorem{proposition}[theorem]{Proposition}
\newtheorem{lemma}[theorem]{Lemma}
\theoremstyle{definition}
\newtheorem{definition}[theorem]{Definition}
\newtheorem{assumption}[theorem]{Assumption}
\theoremstyle{remark}
\newtheorem{remark}[theorem]{Remark}
\newtheorem{example}[theorem]{Example} 
\icmltitlerunning{Conservation Laws beyond Euclidean Gradient Flows}
\begin{document}

\twocolumn[
\icmltitle{Keep the Momentum: Conservation Laws beyond Euclidean Gradient Flows}



\icmlsetsymbol{equal}{*}

\begin{icmlauthorlist}
\icmlauthor{Sibylle Marcotte}{ens}
\icmlauthor{Rémi Gribonval}{lyon}
\icmlauthor{Gabriel Peyré}{ens,cnrs}
\end{icmlauthorlist}

\icmlaffiliation{ens}{ENS - PSL Univ.}
\icmlaffiliation{lyon}{Univ Lyon, EnsL, UCBL,
  CNRS, Inria,  LIP.}
\icmlaffiliation{cnrs}{CNRS}

\icmlcorrespondingauthor{Sibylle Marcotte}{sibylle.marcotte@ens.fr}

\icmlkeywords{Machine Learning, ICML}

\vskip 0.3in
]



\printAffiliationsAndNotice{}  
%
\begin{abstract}
Conservation laws are well-established in the context of Euclidean gradient flow dynamics, notably for linear or ReLU neural network training. Yet, their existence and principles for non-Euclidean geometries and momentum-based dynamics remain largely unknown.  In this paper, we characterize ``all'' conservation laws in this general setting.  In stark contrast to the case of gradient flows, we prove that the conservation laws for momentum-based dynamics exhibit temporal dependence. Additionally, we {often observe a} ``conservation loss'' when transitioning from gradient flow to momentum dynamics. Specifically, for linear networks, our framework allows us to identify all momentum conservation laws, which are less numerous than in the gradient flow case {except in sufficiently over-parameterized regimes}. With ReLU networks, no conservation law remains. This phenomenon also manifests in non-Euclidean metrics, used e.g. for Nonnegative Matrix Factorization (NMF): all conservation laws can be determined in the gradient flow context, yet none persists in the momentum case.
\end{abstract}

\section{Introduction}

Discovering functions that remain unchanged during the optimization of neural networks is important to gain insight into the properties of trained models. 
While these laws are understood in Euclidean gradient flows, they are much less studied in non-Euclidean metrics or momentum dynamics. 
\paragraph{Conservation laws of Euclidean gradient flows} 
are known and extensively used to study the training of linear and ReLU networks without momentum.
These laws correspond to ``balancedness properties'' between neurons across layers~\cite{Saxe, Du, Arora18a}.  %
They can be leveraged to understand which specific attributes (e.g. sparsity, low-rank, etc.) the optimization process tends to select from the potentially infinite pool of solutions~\cite{Saxe, Bah, Arora18b, Tarmoun, Min}.
They can also be used to prove under restrictive conditions the global convergence of gradient descent~\cite{Du, Arora18a, Bah, chizat, Ji, Min}.
For linear and ReLU neural networks, the number of conservation laws for a gradient flow dynamic is determined by the dimension of a Lie algebra and there are no more conservation laws than these ``balancedness'' laws \cite{marcotte2023abide}.

\paragraph{Momentum and non-Euclidean metrics.}
Momentum and non-Euclidean metrics are two key ideas to accelerate the convergence of optimization schemes, enable the use of larger step sizes, and take into account constraints on the weights. 
The initial idea dates back to Polyak's heavy ball~\cite{polyak1964some}, which introduces a momentum in the gradient descent algorithm to perform an extrapolation step. 
In the continuous-time limit of using small step sizes, this corresponds to using a second-order differential equation. 
Nesterov's acceleration~\cite{nesterov1983method} goes one step further by progressively increasing the momentum strength during the dynamics, reaching a faster convergence rate on the class of smooth functions. 
A complementary idea to better capture the curvature of the loss is to use spatially varying metrics. The simplest cases are data-independent metrics, which are Hessian of some potential function~\cite{raskutti2015information}. This corresponds to the continuous-time counterpart to the mirror descent algorithm~\cite{nemirovskij1983problem}, which is closely related to optimization schemes using Bregman's divergences~\cite{bregman1967relaxation}. Another advantage of these non-Euclidean metrics is that they can naturally enforce constraints, such as positivity when using the mirror descent metric associated with the Shannon entropy potential~\cite{bubeck2015convex}. 
Data-dependent metrics estimate the local curvature using variations around the idea of natural gradient~\cite{amari1998natural}. They are popular for training neural networks, using efficient low-rank approximations of the metric \cite{martens2010deep, martens2012training, martens2015optimizing}.
\paragraph{Conservation laws and momentum.}
In sharp contrast with first-order flows, conservation laws of momentum flows and non-Euclidean metrics remain mostly unexplored. 
The simplest approach to derive conservation laws is to apply Noether's theorem \cite{Noether1918} to transformations leaving the loss invariant. Leveraging Lagrangian's formulations of the flows~\cite{jordan}, this leads to preserving some form of inertial quantities~\cite{Tanaka}. 
While gradient flows can be seen as a small-momentum limit of second-order flow, this limit is singular. 
In particular, invariances of the loss do not immediately lead to conservation laws for gradient flows~\cite{zhao,Tanaka, Kunin}.
One of the goals of this paper is to expose the fundamental differences in the structure and number of the conservation laws for gradient and momentum flows.
\paragraph{Contributions.}  
 We define the {\bf concept of conservation laws for Momentum flows} ( \Cref{section:Phase-spacelifting}) and show how to extend the framework from paper \cite{marcotte2023abide} for non-Euclidean gradient flows and momentum flow settings (\Cref{prop:link_mom_grad}).
We prove the {\bf time independence of conservation laws in the gradient flow case}, and {\bf a non-trivial time dependence in the momentum case } (\Cref{thm:structure}).
We uncover {\bf new conservation laws for linear networks in the Euclidean momentum case} (\Cref{thm:conservedfunctionsEuclidean}). These new laws {\bf are complete}, as proved theoretically for depth-two cases (\Cref{theorem:dimliealgebra}, \Cref{prop:counting}), and algorithmically through formal computations for deeper cases (\Cref{section:linear});
We show that, in contrast, there is {\bf no conservation law for ReLU networks in the Euclidean momentum case}, as proved theoretically for two-layer cases, and by formal computations for deeper cases (\Cref{section:ReLU}).
We shed light on a {quasi systematic} {\bf loss of conservation} when transitioning from Euclidean gradient flows to the Euclidean momentum setting (\Cref{section:loss});
this loss also occurs in a non-Euclidean context, such as in Non-negative Matrix Factorization \textbf{(NMF)} or for Input Convex Neural Networks (\textbf{ICNN}) implemented with two-layer ReLU networks
where we find out {\bf new conservation laws for gradient flows} (\Cref{thm:conservedfunctionsNMF} and \Cref{thm:ICNN}, and find {\bf none in the momentum case} (See \Cref{section:NMF}, \Cref{section:ICNN}) ;
{\bf We obtain new conservation laws in the Natural Gradient Flow case} (\Cref{section:naturalgradient}). 

\section{Conservation Laws for Momentum Flows}
We formalize the concept of 
conservation laws for momentum dynamics and establish their generic time-dependence properties. We characterize their most important properties in relation to certain linear spaces of vector fields.
\subsection{Momentum dynamics} \label{Context}
We explore learning problems with features 
$x_i \in \RR^m$ and targets $y_i \in \mathcal{Y}$ (typically for regression, with $\mathcal{Y} = \RR^n$) or labels (for classification) within the scope of supervised learning. In the context of unsupervised or self-supervised learning, the $y_i$ can be treated as constant.
We denote $z_i \coloneqq (x_i, y_i)$ and $Z = (x_i, y_i)_i$.
Prediction is accomplished through a parametric function $g(\theta, \cdot): \RR^m \to \RR^n$ (such as a neural network), which is trained by empirically minimizing a \textbf{cost} over the parameters $\x \in \Theta \subseteq \RD$ 
\begin{equation}\label{eq:erm} 
        \mathcal{E}_{Z}(\theta) \coloneqq \textstyle\sum_i \ell(g({\x},x_i),y_i),     
\end{equation} 
with $\ell$ a \textbf{loss} function. The parameter set $\Theta \subseteq \RD$ is typically either the whole parameter space 
$\RD$ or an open set of ``non-degenerate'' parameters for linear or ReLU networks.

This paper studies quantities 
that are preserved during the minimization of $\mathcal{E}_Z$ defined in~\eqref{eq:erm} using dynamical flows. 
First-order dynamics corresponds to gradient flows 
\eq{\label{gradientflow}
        \dot{\x}(t) =  -M_Z(\x(t))\nabla \mathcal{E}_Z (\x(t))
}
where $M_Z(\theta) \in \mathbb{R}^{D \times D}$ is typically a positive semi-definite matrix.
Conservation laws $h(\theta)$ for these flows 
have been studied in-depth in the Euclidean case ($M$ is the identity matrix $\mId{D}$) \cite{marcotte2023abide}, and we study what happens in a non-Euclidean setting.
Another goal is to go beyond first-order dynamics and to analyze which functions $h(t, \x,\dot{\x} )$ are preserved during the momentum flow of $\mathcal{E}_Z$:
\begin{equation} \label{momentumflow}
        \ddot {\x}(t) + 
       \tau(t) \dot{\x}(t) = - M_Z (t, \x(t), \dot{\x}(t) ) \nabla \mathcal{E}_Z (\x(t)).
\end{equation}
{We will always assume that $t \mapsto \tau(t)$ is infinitely smooth.}
To anticipate our findings, let us immediately mention that  
introducing momentum leads to {\em conserved quantities that depend on time and velocity}, and results {in {many} 
cases {of interest}} in a {\em reduction of conservation properties}. The latter phenomenon consistently emerges across all our examples, whether in Euclidean ($M=\mId{D}$) or non-Euclidean settings, such as in non-negative matrix factorization. Consequently, we will draw comparisons with the gradient flow scenario. 

\subsection{Main Examples}
We consider several settings of practical interest. 
\paragraph{Examples of models.} Prime examples are {\bf \em  two-layer linear or ReLU networks}, where $\x = (U,V)$ with matrices $U,V$ of appropriate sizes and $g(\x,x) \coloneqq U \sigma(V^\top x)$. Here $\sigma = \id$ for the linear case, while $\sigma$ is the entrywise ReLU activation function for ReLU networks. In the latter case, deeper examples as well as biases can also be considered.

\paragraph{Example of flows.}
In terms of flows, we consider: 
\textbf{Gradient flows (GF)},  corresponding to~\eqref{gradientflow}. This can informally be thought of as using $\tau=\infty$ in \eqref{momentumflow} {(noticing that the matrix $M_Z$ in \eqref{momentumflow} can depend on $\tau$ via $t$)}. 
\textbf{Heavy ball}, i.e. \eqref{momentumflow} with a fixed $\tau(t)=\tau < \infty$. This corresponds to a continuous-time limit of Poliak's heavy ball acceleration method~\cite{polyak1964some}. 
\textbf{Nesterov acceleration}, i.e. \eqref{momentumflow} with $\tau(t)=3/t$. This corresponds to the flow introduced by~\cite{su2016differential} as a continuous-time limit of Nesterov accelerated gradient descent scheme~\cite{nesterov1983method}. 

\paragraph{Example of metrics.}
We illustrate our findings on:
\textbf{Euclidean geometry}, i.e.  $M_Z(t,\theta,\dot\theta) = \mId{D}$. 
%
\textbf{Mirror geometry}, {associated to the Shannon entropy potential,}  uses $M_Z(\x)= \diag(\x)$ for gradient flows, and 
in the Heavy ball case, it uses $M_Z(t,\theta,\dot \theta)=\diag(\dot \theta + \tau \theta)$ \cite{jordan}.
The associated flow is a continuous time limit of mirror descent~\cite{nemirovskij1983problem}.
While such approaches were initially developed for first-order schemes, their extension to second-order flows that we use is derived in~\cite{jordan} as a flow for a Bregman-type Lagrangian. 
{If this paper particularly focuses on the case of the Shannon entropy potential as an example, note that our theory applies to any mirror potential.}
\textbf{Natural gradient}~\cite{amari1998natural} avoids the issues of Newton's method using a first-order estimation of the curvature (hence the naming ``Hessian free''). 
Assuming again $\tau(t)=\tau$ is constant, it uses a data-dependent metric $M_Z(t,\theta,\dot\theta) = H_Z(\dot \theta + \tau \theta)^\dagger$, where $A^{\dag}$ denotes the pseudo-inverse of $A$ and where (for the mean square loss function) $H_Z(\xi) \coloneqq \frac{1}{n} \sum_i  \partial_1 g(\xi,x_i)^\top \partial_1 g(\xi,x_i)$ is a proxy for the Hessian that captures the curvature of the loss. Here  $\partial_1 g$ is the differential of $g$ with respect to its first variable.

\paragraph{Running examples.}
We consider several examples.
\textbf{Principal component analysis (PCA)} corresponds to linear neural networks ($\sigma=\id$) with Euclidean geometry to perform dimensionality reduction via matrix factorization. 
\textbf{Multilayer Perceptrons (MLP)} use $\sigma=$ReLU and Euclidean geometry. 
\textbf{Non-negative matrix factorization (NMF)}  uses a linear network (i.e. $\sigma=$Id) with mirror geometry {(for the Shannon entropy)} to impose positivity on the factors~\cite{lee1999learning}.  
\textbf{Input Convex Neural Networks (ICNN)}~\cite{amos2017input} use 
a hybrid Euclidean/Mirror {(for the Shannon entropy)} geometry with $\sigma=$ReLU to impose positivity on some weights, to represent convex functions. It finds applications in implicit deep learning~\cite{amos2017input} and to compute optimal transport in high dimension~\cite{makkuva2020optimal}.

\begin{minipage}{.54\linewidth}
This table indicates the number of conservation laws that we characterize (for two layers and $r$ hidden neurons) for {\color{blue} gradient flow ($\tau(t)=\infty$)} and for {\color{red} momentum flows ($\tau(t) < +\infty$)}. 
\end{minipage}
\begin{minipage}{.45\linewidth}
\includegraphics[width=\linewidth]{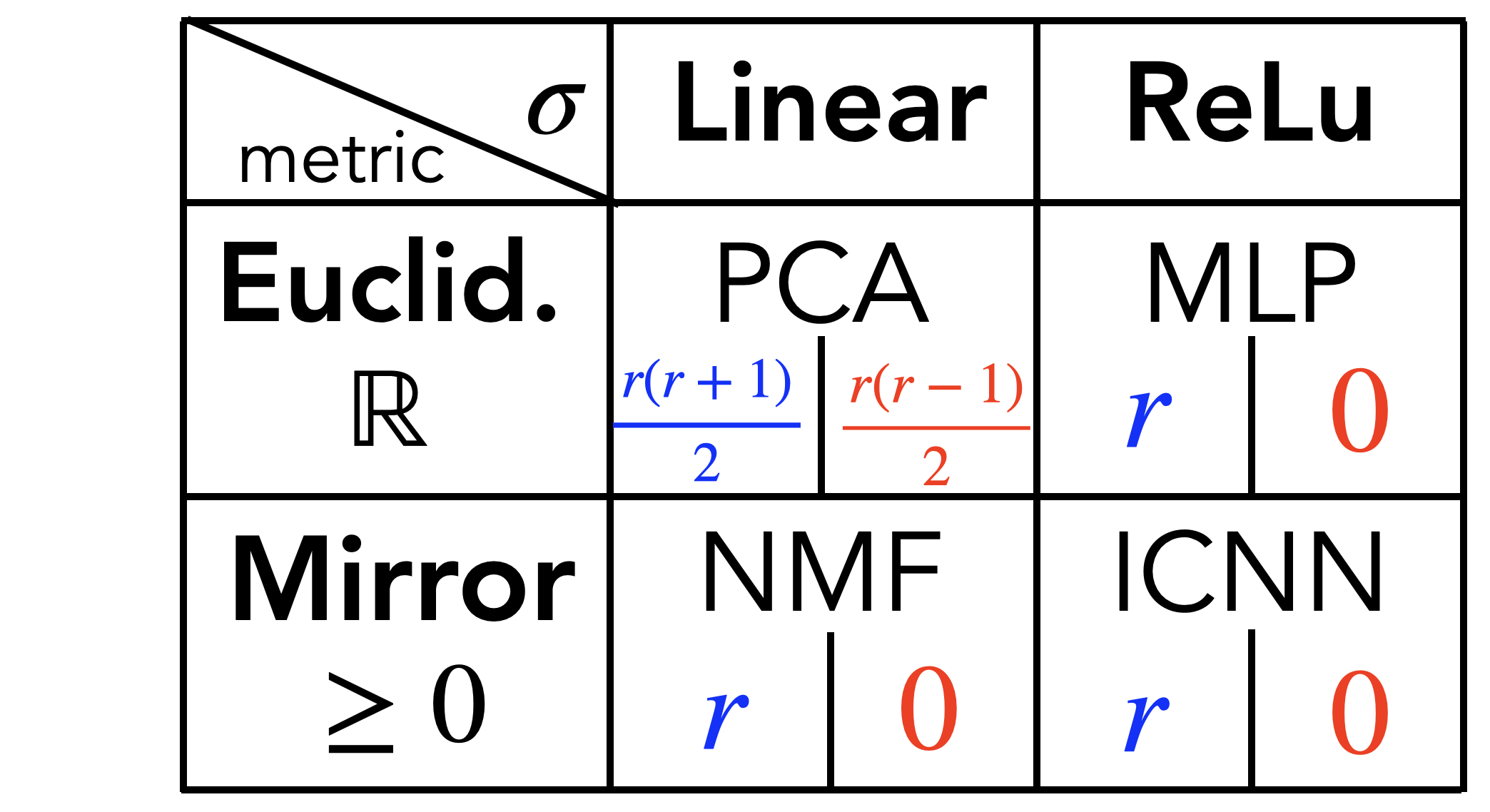}
\end{minipage}

{For the example of PCA, the information presented in the table corresponds to the case where $r$ is small enough, and the general case is fully addressed in \Cref{prop:lossnb}.}
\if 0
\subsection{Summary of main findings}
\todo{mise en page; table plus grande; pourquoi "H"; expliciter signigication colonne de gauche.}
\begin{minipage}{.53\linewidth}
This table recaps the examples we consider, indicating the typical number of conservation laws that we characterize (for a two-layer model with $r$ hidden neurons) for {\color{blue} gradient flow ($\tau(t)=\infty$)} and for {\color{red} momentum flows ($\tau(t)=0$)}. 
\end{minipage}
\begin{minipage}{.46\linewidth}
\includegraphics[width=\linewidth]{figures/recap-fig.pdf}
\end{minipage}
\fi 
\subsection{Conserved functions}
\vspace{-0.2cm} 
In the GF (resp. MF -- momentum flow) scenario, a function $h(t, \x)$ (resp. $h(t, \x, \dot \x)$) is conserved if for each solution\footnote{The existence of such a solution is discussed in \Cref{def:conserved_through_flow}.} 
 $\x(\cdot)$ to the ODE \eqref{gradientflow} (resp. \eqref{momentumflow}) with arbitrary initialization, the quantity $h(t, \x(t))$ (resp. $h(t, \x(t), \dot \x(t))$) remains constant in time.

\subsubsection{Time-dependence: GF {\em vs} MF}
\vspace{-0.2cm}
We postpone the formal definition of conserved functions and their characterization to first highlight an important fact (See \Cref{appendix:structurethm} for a proof): in the momentum setting conservation laws can depend non-trivially on time. This is in sharp contrast with the gradient flow case.
 
\begin{restatable}[Structure theorem]{theorem}{structurethm} \label{thm:structure}
Let $h(t, \x)$ (resp. $h(t, \x, \dot \x)$) be a conserved function for the ODE \eqref{gradientflow} (resp. \eqref{momentumflow} with $\tau(t) = \tau$) when its right-hand side is zero. For all $t$ and $\x$, one has $h(t, \x) = h(0, \x)$ (resp. $h(t, \x, \dot \x) = {H}(\x + \frac{\dot \x}{\tau}, \dot \x \exp(\tau t))$, where ${H}(a, b) \coloneqq h(0, a-\frac{b}{\tau}, b)$). In particular, conserved functions in the MF scenario can be expressed with $2D$ variables (instead of $2D +1$).
\end{restatable}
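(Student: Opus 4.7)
The plan is to observe that when the right-hand side vanishes, the ODEs become explicitly solvable, so conservation laws are fully determined by the first integrals of these trivial flows.

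For the gradient flow case the ODE \eqref{gradientflow} with zero right-hand side reads $\dot\theta=0$, whose solutions are constants $\theta(t)\equiv \theta_0$ for arbitrary $\theta_0$. I would then just use the definition of a conserved function: $h(t,\theta_0)=h(0,\theta_0)$ for every $t$ and every $\theta_0\in\Theta$, which is exactly the claim.

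For the momentum case with $\tau(t)\equiv\tau$, the ODE \eqref{momentumflow} with zero right-hand side is the linear scalar ODE (applied coordinatewise) $\ddot\theta + \tau\dot\theta = 0$. I would solve this explicitly: setting $v=\dot\theta$ gives $\dot v=-\tau v$, hence $\dot\theta(t)=\dot\theta(0)\exp(-\tau t)$ and then $\theta(t)=\theta(0)+\tfrac{\dot\theta(0)}{\tau}(1-\exp(-\tau t))$. A direct differentiation then shows that along any solution the two quantities
\[
a(t) \coloneqq \theta(t)+\tfrac{\dot\theta(t)}{\tau}, \qquad b(t)\coloneqq \dot\theta(t)\exp(\tau t)
\]
are constant in $t$, since $\dot a=\dot\theta+\ddot\theta/\tau=0$ and $\dot b=(\ddot\theta+\tau\dot\theta)\exp(\tau t)=0$. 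Moreover, the map $(\theta(0),\dot\theta(0))\mapsto (a(0),b(0))=(\theta(0)+\dot\theta(0)/\tau,\dot\theta(0))$ is a linear bijection of $\R^{2D}$, so $(a,b)$ parametrizes initial conditions.

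Now I would conclude as follows. Fix an arbitrary $(t,\theta,\dot\theta)\in\R\times\R^D\times\R^D$. Take the unique solution of the zero-right-hand-side ODE passing through $(\theta,\dot\theta)$ at time $t$; its conserved pair equals $(a,b)=(\theta+\dot\theta/\tau,\dot\theta\exp(\tau t))$, and its value at time $0$ is $(\theta(0),\dot\theta(0))=(a-b/\tau,b)$. Applying the definition of a conserved function along this solution yields
\[
h(t,\theta,\dot\theta)=h(0,a-b/\tau,b)=H(a,b)=H\!\left(\theta+\tfrac{\dot\theta}{\tau},\,\dot\theta\exp(\tau t)\right),
\]
which is the claimed formula; the reduction from $2D+1$ to $2D$ variables is immediate from $H$ depending only on $(a,b)$.

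The only subtle point is justifying that one may invoke the definition of a conserved function for the trivial flow obtained by zeroing out $\nabla\mathcal{E}_Z$: this is where the convention/definition in \Cref{def:conserved_through_flow} (which the theorem's hypothesis refers to) is used to legitimately treat the ODE with vanishing right-hand side as an admissible flow. Beyond that the argument is just solving a linear ODE, so I expect the main work in writing the clean proof to be bookkeeping rather than any real obstacle; the edge case $\tau=0$ is excluded by the appearance of $1/\tau$ in the statement.
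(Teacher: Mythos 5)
Your proof is correct and takes essentially the same route as the paper: solve the trivial ODEs explicitly, then use the definition of a conserved function along the resulting trajectories to express $h(t,\theta,\dot\theta)$ in terms of its value at time $0$. The paper phrases the momentum step by directly inverting the time-$t$ solution map to recover $(\theta_0,\dot\theta_0)$, whereas you package the same inversion via the first integrals $a,b$, which is a cleaner framing of an identical computation.
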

Conservation laws (to be soon formally defined) are notably conserved functions of the ODE when the right-hand side is zero, hence the above theorem directly applies.
\subsubsection{Formal definition via phase-space lifting} \label{section:Phase-spacelifting}
\vspace{-0.2cm}
To formally define conserved functions in the momentum case, 
we generalize the related notions from \cite{marcotte2023abide} to encompass dependencies on time $t$ and velocity $\dot{\x}$. 

\textbf{Notations} Given an open subset $\ouv \subseteq \Theta \subseteq \RD$, we denote $\bigouv \coloneqq \R \times \ouv \times \RD \subseteq \R^{2D+1}$. In particular $\tilde{\Theta} \coloneqq \R \times \Theta \times \RD$.
\begin{definition}[Conservation through a flow]  \label{def:conserved_through_flow}
Consider an open subset $\Omega \subseteq \RD $ and a function $F \in \mathcal{C}^1(\bigouv, \R^{2D} )$. 
By the Cauchy-Lipschitz theorem, for each initial condition $\texttt{init} \coloneqq (t_{\text{init}}, \omega_{\text{init}}) \in \bigouv$, there exists a unique maximal solution $t \in \left(t_{\text{init}} -  \eta_{\text{init}}, t_{\text{init}} +  \eta_{\text{init}}\right) \mapsto \omega(t,\texttt{init})$ of the ODE $ \dot \omega(t) = F(t, \omega(t))$ with $\omega(t_{\text{init}}) = \omega_{\text{init}}$. 
A function $h: \bigouv  \subseteq \R^{2D +1} \to \RR$ is {\em conserved on $\ouv$ through the flow $F$}  if $h(t, \omega(t,\texttt{init}))=h(t_{\text{init}}, \omega_{\text{init}})$ for each choice of $\texttt{init} \in \bigouv$  and every $t \in \left(t_{\text{init}} -  \eta_{\text{init}}, t_{\text{init}} +  \eta_{\text{init}}\right)$.
\end{definition}
Rewrite \eqref{momentumflow} as $\dot{\omega}(t) = F(t,\omega(t))$ with $\omega \coloneqq (\x,\dot{\x})$ and
\begin{equation}\label{eq:DefFZ}
    F_Z\left(t, \begin{pmatrix}
   {\theta} \\ \dot{\theta}
 \end{pmatrix}\right)\coloneqq \begin{pmatrix}
    \dot{\theta} \\ -\tau(t) \dot \x
     -M_Z(t, \x, \dot{\x}) \nabla \mathcal{E}_Z(\theta) \end{pmatrix}.
     \end{equation}
     With this expression, \Cref{def:conserved_through_flow} can be specialized.
\begin{definition}[Conservation during the flow \eqref{momentumflow} with a given dataset]
Given an open subset $\ouv \subseteq \Theta \subseteq \R^D$ and a dataset $Z$ such that $(t, \x, \dot \x) \mapsto M_Z(t, \x, \dot \x) \nabla \mathcal{E}_Z (\x) \in \mathcal{C}^{1}(\bigouv, \R^D)$, a function $h: \bigouv \subseteq \R^{2D +1} \to \RR$ is {\em conserved on $\ouv$ during the flow} \eqref{momentumflow} if 
it is conserved through $F_Z$.
\end{definition}
The next definition allows us to study which functions are conserved during {\em ``all'' flows} defined by the ODE \eqref{momentumflow}. The smoothness assumptions will enable simpler characterizations of such functions in due time.
\begin{definition}[Conservation during the flow \eqref{momentumflow} with ``any'' dataset]
  Consider an open subset $\Omega \subset \Theta$ and a loss $\ell(z, y)$ such that $\ell(\cdot, y)$ is $\mathcal{C}^2$-differentiable for all $y \in \mathcal{Y}$. 
  A function $h:  \bigouv \subseteq \R^{2D+1} \to \RR$ is {\em conserved on $\ouv$ during the flow \eqref{momentumflow} for any data set} if, for each data set $Z$ {\em such that} $g(\cdot, x_i) \in \mathcal{C}^{2}(\ouv, \R)$ for each $i$ and $(t, \x, \dot \x) \mapsto M_Z(t, \x, \dot \x)  \in \mathcal{C}^{1}(\bigouv, \RD)$, the function $h$ is conserved on $\ouv$ during the flow \eqref{momentumflow}. 
\end{definition}
While the above definitions are local, we are rather interested in functions defined {\em for the whole parameter space} $\Theta$, hence the following definition which mimics its equivalent in the gradient flow case \cite{marcotte2023abide}. 
\begin{definition} \label{def:locally_conserved_any}
    A function $h: \tilde{\Theta} \mapsto \R$ is {\em locally conserved during the flow \eqref{momentumflow} on $\Theta$ for any data set} if for each open subset $\ouv \subseteq \Theta$, $h$ is conserved on $\ouv$ for any data set.   
\end{definition}
\begin{example} \label{example:firsttriviallaw}
As a first simple example, consider
a two-layer {\em linear} neural network in dimension 1 (both for the input and output), with two hidden neurons. 
In that case, the parameter is $\x = (u_1,u_2, v_1, v_2) \subseteq \R^4 = \Theta$ and the model writes $g(\x,x) = (u_1v_1 + u_2v_2)x$.
Computing the derivative of $t \mapsto h(t,\x(t),\dot{\x}(t))$, where $h(t, \x, \dot \x) \coloneqq u_1 \dot u_2 - \dot u_1 u_2 + v_1 \dot v_2 - \dot v_1 v_2$, one can directly check that it vanishes, hence $h$ is {\em locally conserved on $\tilde{\Theta} = \R^{9}$ for any data set during the flow \eqref{momentumflow} for $M= \mId{4}$ and $\tau = 0$}.
\end{example}
A characterization of $\mathcal{C}^1$ 
conserved functions 
(see proof in \Cref{appendix:prop:orthogonality})
is the ``orthogonality'' of their gradients to an associated vector field. This is the momentum analog to a similar property for gradient flows \cite{marcotte2023abide}.
\begin{restatable}[Smooth functions conserved through a given flow] {proposition}{proporthogonality} \label{prop:orthogonality}
Given $F \in \mathcal{C}^1(\bigouv, \R^{2D})$, a function $h \in \mathcal{C}^1(\bigouv, \mathbb{R})$ is conserved through the flow induced by $F$ 
if and only if $\langle \nabla h (\z), (1, F(\z)^\top)^\top\rangle = 0$ for all $\z \in \bigouv$.
\end{restatable}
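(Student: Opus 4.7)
The plan is to apply the chain rule to $t \mapsto h(t, \omega(t))$ along integral curves of $F$, using Cauchy--Lipschitz to ensure the resulting identity holds pointwise on all of $\bigouv$.

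First I would write any point of $\bigouv \subseteq \R^{2D+1}$ as $\zeta = (t, \omega)$ with $\omega \in \R^{2D}$, so that $\nabla h(\zeta) = (\partial_t h(\zeta), \nabla_\omega h(\zeta))$ and the claimed condition reads
\[
\partial_t h(t, \omega) + \langle \nabla_\omega h(t, \omega), F(t, \omega)\rangle = 0 \quad \text{for all } (t, \omega) \in \bigouv.
\]
For the ``if'' direction, fix an initial condition $\texttt{init} = (t_{\text{init}}, \omega_{\text{init}}) \in \bigouv$ and let $t \mapsto \omega(t, \texttt{init})$ be the associated maximal solution provided by Cauchy--Lipschitz (applicable since $F \in \mathcal{C}^1$). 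Since $h \in \mathcal{C}^1$, the chain rule gives
\[
\tfrac{d}{dt} h(t, \omega(t, \texttt{init})) = \partial_t h(t, \omega(t, \texttt{init})) + \langle \nabla_\omega h(t, \omega(t, \texttt{init})), F(t, \omega(t, \texttt{init}))\rangle,
\]
which vanishes by assumption. Integrating in $t$ yields $h(t, \omega(t, \texttt{init})) = h(t_{\text{init}}, \omega_{\text{init}})$, so $h$ is conserved through $F$.

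For the ``only if'' direction, fix an arbitrary $\zeta_0 = (t_0, \omega_0) \in \bigouv$ and let $\omega(\cdot, \zeta_0)$ denote the maximal solution with $\omega(t_0) = \omega_0$. Conservation means $t \mapsto h(t, \omega(t, \zeta_0))$ is constant on an open interval around $t_0$, so its derivative at $t_0$ vanishes. Applying the chain rule exactly as above at $t = t_0$ and using $\dot\omega(t_0, \zeta_0) = F(t_0, \omega_0)$ gives
\[
\partial_t h(t_0, \omega_0) + \langle \nabla_\omega h(t_0, \omega_0), F(t_0, \omega_0)\rangle = 0,
\]
which is the desired orthogonality condition at $\zeta_0$. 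Since $\zeta_0$ was arbitrary, the identity holds on all of $\bigouv$, completing the equivalence.

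The only subtle point, and the only place where real content beyond the chain rule enters, is ensuring that the pointwise identity holds at every $\zeta_0 \in \bigouv$ rather than only along a single trajectory. This is handled by the fact that Cauchy--Lipschitz produces an integral curve through \emph{every} point of $\bigouv$, so I expect no significant obstacle beyond invoking $F \in \mathcal{C}^1$ to guarantee local existence and uniqueness.
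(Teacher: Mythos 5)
Your proof is correct and uses the same core ingredients as the paper's (chain rule along integral curves, Cauchy--Lipschitz for existence through every point). The one difference is in the ``only if'' direction: you argue directly that since $t \mapsto h(t,\omega(t,\zeta_0))$ is constant near $t_0$, its derivative at $t_0$ vanishes, and evaluate the chain rule at $t_0$; the paper instead argues by contrapositive, assuming the orthogonality fails at some $\zeta_0$ and then invoking continuity of $\z \mapsto \partial h(\z)(1,F(\z)^\top)^\top$ and of $t \mapsto \omega(t,\zeta_0)$ to obtain a nonzero derivative on an open time interval, contradicting conservation. Your direct version is actually a bit leaner, since a nonzero derivative at a single point already contradicts local constancy; the paper's extra continuity step buys nothing essential here.
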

The following characterization is proved in \Cref{appendix:tracerewritten}.
\begin{restatable}{proposition}{proptracerewritten} \label{proptracerewritten}
    A function $h: \tilde{\Theta} \mapsto \R \in \mathcal{C}^1(\tilde{\Theta}, \R)$ is locally conserved on $\Theta$ for any data set if and only if $\nabla h(\z) \perp W_\z^{\mathtt{mom}} $ for all $\z \in \tilde{\Theta}$,
where for all $\z\coloneqq(t, \x, \dot \x) \in \tilde{\Theta}$:
    $$
W_\z^{\mathtt{mom}} \coloneqq \underset{Z \in \mathcal{Z}_\x}{\linspan} \left\{  (1, F_Z(\z)^\top)^\top\right\}, 
$$
with $\mathcal{Z}_\x$ the set of all data sets $Z=(x_i, y_i)_i$ such that there exists a neighborhood ${\ouv}$ of $\x$ such that for all $i$, $g(\cdot,x_i)\in \mathcal{C}^{2}(\ouv, \R^n)$ and such that 
$M_Z(\cdot) \in \mathcal{C}^{1}(\tilde{\ouv}, \R^{D \times D})$.
\end{restatable}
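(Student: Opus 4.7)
The plan is to chain together \Cref{prop:orthogonality} with the definitions of ``locally conserved for any data set'' and of the linear span $W_\z^{\mathtt{mom}}$. The key observation is that the span defining $W_\z^{\mathtt{mom}}$ is computed from the \emph{single} point $\z = (t,\x,\dot\x)$, and the indexing set $\mathcal{Z}_\x$ depends only on $\x$, so we need to match this pointwise structure with the local-on-$\Omega$-and-for-every-$Z$ structure of \Cref{def:locally_conserved_any}.

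\textbf{Forward direction.} Assume $h$ is locally conserved on $\Theta$ for any data set. Fix $\z_0 = (t_0, \x_0, \dot\x_0) \in \tilde\Theta$ and $Z \in \mathcal{Z}_{\x_0}$. By definition of $\mathcal{Z}_{\x_0}$, there is an open neighborhood $\Omega \subseteq \Theta$ of $\x_0$ on which $g(\cdot,x_i) \in \mathcal{C}^2(\Omega, \R^n)$ and $M_Z \in \mathcal{C}^1(\tilde\Omega, \R^{D\times D})$. By hypothesis $h$ is conserved on $\Omega$ for this particular $Z$, i.e.\ through the flow $F_Z$ of~\eqref{eq:DefFZ}. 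Since $h \in \mathcal{C}^1$, \Cref{prop:orthogonality} gives $\langle \nabla h(\z), (1, F_Z(\z)^\top)^\top\rangle = 0$ for all $\z \in \tilde\Omega$, in particular at $\z_0$. As this holds for every $Z \in \mathcal{Z}_{\x_0}$, and $W_{\z_0}^{\mathtt{mom}}$ is by definition the linear span of the vectors $(1, F_Z(\z_0)^\top)^\top$ with $Z$ ranging over $\mathcal{Z}_{\x_0}$, we conclude $\nabla h(\z_0) \perp W_{\z_0}^{\mathtt{mom}}$.

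\textbf{Reverse direction.} Assume $\nabla h(\z) \perp W_\z^{\mathtt{mom}}$ for all $\z \in \tilde\Theta$. Fix an open subset $\Omega \subseteq \Theta$ and a data set $Z$ satisfying $g(\cdot,x_i) \in \mathcal{C}^2(\Omega,\R^n)$ for each $i$ and $M_Z \in \mathcal{C}^1(\tilde\Omega, \R^{D\times D})$. Then $Z \in \mathcal{Z}_\x$ for every $\x \in \Omega$, so by the orthogonality hypothesis $\langle \nabla h(\z), (1, F_Z(\z)^\top)^\top\rangle = 0$ at every $\z \in \tilde\Omega$. Applying \Cref{prop:orthogonality} in the other direction yields that $h$ is conserved on $\Omega$ through $F_Z$, i.e.\ during the flow~\eqref{momentumflow} with data set $Z$. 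Since $\Omega$ and $Z$ were arbitrary, $h$ is locally conserved on $\Theta$ for any data set in the sense of \Cref{def:locally_conserved_any}.

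\textbf{Anticipated difficulty.} The argument is essentially a bookkeeping of quantifiers, and no serious obstacle is expected. The only point that requires a bit of care is that $W_\z^{\mathtt{mom}}$ is defined as a span over data sets in $\mathcal{Z}_\x$ (which depends on the \emph{spatial} component of $\z$ but not on $t$ or $\dot\x$), whereas \Cref{prop:orthogonality} needs the pointwise orthogonality to hold on an open neighborhood $\tilde\Omega$ in the $(t,\x,\dot\x)$ variables. This is handled by the fact that any admissible $Z$ in the neighborhood definition belongs to $\mathcal{Z}_\x$ for every $\x \in \Omega$ simultaneously, so the pointwise assumption $\nabla h(\z) \perp W_\z^{\mathtt{mom}}$ automatically delivers the orthogonality uniformly on $\tilde\Omega$ needed to apply \Cref{prop:orthogonality}.
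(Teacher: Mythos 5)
Your proof is correct and follows essentially the same route as the paper's: both directions unwind \Cref{def:locally_conserved_any} and the definition of $\mathcal{Z}_\x$, then apply \Cref{prop:orthogonality} to the restricted flow $F_Z$ on a suitable $\tilde\ouv$. The "anticipated difficulty" you flag (matching the pointwise span over $\mathcal{Z}_\x$ with the neighborhood orthogonality needed for \Cref{prop:orthogonality}) is real but handled in the same way the paper handles it, so there is nothing to add.
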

The subspace $W_\z^{\mathtt{mom}} \subseteq \R^{2D+1}$ (that characterizes conserved functions for a momentum flow) is linked with its counterpart in a Euclidean GF dynamic \citep{marcotte2023abide}, a subspace denoted $W_{\x} \subseteq \RD$ where $\z = (t, \x, \dot{\x})$. The space $W_\x$ characterizes conserved functions for a gradient flow in an Euclidean setting, and its definition is recalled in \Cref{appendix:link_MG} where the following proposition is proved. 

\begin{restatable}{proposition}{linkMGflow} \label{prop:link_mom_grad}
Assume that $M_Z(t,\x,\dot{\x}) = M(t,\x,\dot{\x})$ does not depend on the data set $Z$ and that $M(\cdot) \in \mathcal{C}^1(\tilde{\Theta}, \R^{D \times D})$. Assume  that for each $y \in \mathcal{Y}$ the loss $\ell(z, y)$ is $\mathcal{C}^2$-differentiable with respect to $z \in \R^n$ and that for each $\x \in {\Theta}$, there exists a training dataset $Z=(x_i, y_i)_i$ such that $\nabla \mathcal{E}_Z(\x) = 0$ and such that for all $i$,  $\x \mapsto 
    g(\x,x_i)$ is $\mathcal{C}^{2}$ in a neighborhood of $\x$.
Then, for each $\z \coloneqq (t, \x, \dot{\x}) \in \tilde{\Theta}$:
\begin{equation} \label{eq:eqprop2.9}
    W_\z^{\mathtt{mom}} = \R 
    \smallvec{1 \\ \dot{\theta} \\ -\tau(t)  \dot{\theta}}
    + 
    \smallvec{ 0 \\ 0 \\ M(t, \x, \dot \x) W_{\x} }.
\end{equation}
\end{restatable}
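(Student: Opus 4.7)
My plan is to decompose each generator of $W_\z^{\mathtt{mom}}$ into a ``data-independent direction'' plus an ``$M$-image of a loss gradient,'' and then to use the zero-gradient dataset to isolate the two pieces.

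First, because $M_Z(\cdot)=M(\cdot)$ does not depend on $Z$, the definition~\eqref{eq:DefFZ} yields, for every $Z \in \mathcal{Z}_\x$ and with $\z=(t,\theta,\dot\theta)$,
\begin{equation*}
\smallvec{1 \\ F_Z(\z)} \;=\; \smallvec{1 \\ \dot\theta \\ -\tau(t)\dot\theta} \;-\; \smallvec{0 \\ 0 \\ M(t,\theta,\dot\theta)\nabla\mathcal{E}_Z(\theta)}.
\end{equation*}
Since $\nabla\mathcal{E}_Z(\theta) \in W_\x$ by the very definition of $W_\x$ from~\cite{marcotte2023abide} recalled in~\Cref{appendix:link_MG}, this decomposition immediately delivers the inclusion ``$\subseteq$'' in~\eqref{eq:eqprop2.9}.

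For the reverse inclusion I would use the standing assumption that there exists some $Z^\star \in \mathcal{Z}_\x$ with $\nabla\mathcal{E}_{Z^\star}(\theta)=0$: plugging $Z^\star$ into the decomposition above shows that the ``base direction'' $(1,\dot\theta^\top,-\tau(t)\dot\theta^\top)^\top$ itself lies in $W_\z^{\mathtt{mom}}$, so its whole real span $\R\smallvec{1 \\ \dot\theta \\ -\tau(t)\dot\theta}$ is contained in $W_\z^{\mathtt{mom}}$ by linearity. Subtracting a suitable multiple of this base direction from $(1,F_Z(\z)^\top)^\top$ for any other $Z \in \mathcal{Z}_\x$ then places $(0,0,M(\z)\nabla\mathcal{E}_Z(\theta))^\top$ in $W_\z^{\mathtt{mom}}$; taking linear combinations over admissible $Z$ produces exactly $\smallvec{0 \\ 0 \\ M(\z)W_\x}$, which gives the missing inclusion.

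The only subtle point I anticipate is checking that the class of datasets implicitly used to define $W_\x$ in~\cite{marcotte2023abide} matches $\mathcal{Z}_\x$ here. Under the assumption $M_Z=M\in \mathcal{C}^1(\tilde\Theta,\R^{D\times D})$, the ``$M_Z(\cdot)\in\mathcal{C}^1$'' condition entering $\mathcal{Z}_\x$ is automatic, so $\mathcal{Z}_\x$ collapses to the set of datasets with $g(\cdot,x_i)\in\mathcal{C}^2$ near $\theta$ for every $i$, which is precisely the admissibility class underlying $W_\x$. No step is genuinely hard; the only care needed is this class-matching verification and the use of linearity (rather than affine combinations), made possible by the existence of the zero-gradient dataset $Z^\star$, which frees the scalar multiplying the first coordinate.
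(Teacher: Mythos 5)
Your proof is correct and follows essentially the same route as the paper's own argument: same decomposition of $(1, F_Z(\z)^\top)^\top$ into the fixed direction plus an $M$-image of a loss gradient, same use of the zero-gradient dataset to peel off the base direction in the converse inclusion, and the same observation that $M_Z = M \in \mathcal{C}^1$ makes the two admissibility classes $\mathcal{Z}_\x$ and $\mathcal{Z}'_\x$ coincide so that $W_\x = \linspan_{Z}\{\nabla\mathcal{E}_Z(\x)\}$ can be invoked. The only presentational difference is that the paper spells out the final span identity $\linspan_Z\{M\nabla\mathcal{E}_Z(\x)\} = M W_\x$ a little more explicitly; you should likewise note that the ``$\subseteq$'' step goes through because the right-hand side of~\eqref{eq:eqprop2.9} is itself a linear subspace (so containing every generator suffices), but this is a cosmetic point and not a gap.
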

When the subspace $W_\x$ for Euclidean gradient flows is known \cite{marcotte2023abide}, the above link allows us to leverage this knowledge in {\em non-Euclidean, momentum} flow scenarios.
Extensions to matrices $M_Z$ that {\em depend} on the dataset $Z$, e.g. with natural gradient metrics, are briefly discussed (in the gradient case) in \Cref{section:naturalgradient}.
\begin{remark}
    The assumption on the loss $\ell$ 
    in \Cref{prop:link_mom_grad} holds for classical losses, see \Cref{appendix:link_MG}. 
\end{remark}
\begin{example}\label{example:firsttriviallawbis}
  {
Revisiting~\Cref{example:firsttriviallaw}, {we know that} 
  $h(t, \x, \dot \x) := u_1 \dot u_2 - \dot u_1 u_2 + v_1 \dot v_2 - \dot v_1 v_2$
  is conserved for any data set (with $M = \mId{4})$ during \eqref{momentumflow}, {hence by~\Cref{proptracerewritten}} one has for each $\z = (t, \x, \dot \x)$: $\nabla h(\z) \perp W^{\mathtt{mom}}_\z$. By \eqref{eq:eqprop2.9}, one has in particular  $\nabla h(\z) = (0, \dot u_2, -\dot u_1, \dot v_2, -\dot v_1, -u_2, u_1, -v_2, v_1)^\top \perp (1, \dot u_1, \dot u_2, \dot v_1, \dot v_2, 0,\ldots, 0) = \smallvec{1 \\ \dot{\theta} \\ 0}$ (remember that $\tau = 0$). {This will be further elaborated on in \Cref{example:firsttriviallawter}.}
  }
    \end{example}
 The analog of \Cref{prop:link_mom_grad} for gradient flows was established in \cite{marcotte2023abide} in the Euclidean case, and is naturally extended to the non-Euclidean case.
\begin{proposition}[Locally conserved function for any data set for \eqref{gradientflow}] \label{proplinkGFnonEuclidean}
Assume $M_Z(\cdot) \in \mathcal{C}^{1}(\Theta, \RD)$.
     A function $h:\Theta \mapsto \R \in \mathcal{C}^1(\Theta, \R)$ is locally conserved on $\Theta$ for any data set during the flow \eqref{gradientflow} if and only if $\nabla h(\x) \perp W_\x^{\texttt{grad}} \coloneqq \linspan \{ M_Z(\x) \nabla \mathcal{E}_Z (\x) : Z  \in \mathcal{Z}_\x'\}$, with $\mathcal{Z}_\x'$ the set of all data sets $Z=(x_i, y_i)_i$ such that there exists a neighborhood ${\ouv}$ of $\x$ such that for all $i$, $g(\cdot,x_i)\in \mathcal{C}^{2}(\ouv, \R^n)$.
When $M_Z$ does not depend on $Z$, we simple have $W_\x^{\texttt{grad}} = M(\x) W_\x$.
\end{proposition}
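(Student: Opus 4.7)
The plan is to mimic the strategy used for \Cref{proptracerewritten} in the momentum setting (and for its Euclidean gradient-flow analog in \cite{marcotte2023abide}), but without the phase-space lifting since \eqref{gradientflow} is first order and $h$ does not depend on time or velocity. The key reduction is: conservation of a smooth $h$ through a single vector field is equivalent to the pointwise orthogonality of $\nabla h$ with that vector field, so conservation for \emph{every} admissible dataset becomes orthogonality with the linear span of the associated vector fields.

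\textbf{Step 1 (single dataset).} Fix an open $\Omega \subseteq \Theta$ and a dataset $Z$ such that $g(\cdot,x_i)\in \mathcal{C}^2(\Omega,\R^n)$ for all $i$. Since $M_Z\in \mathcal{C}^1(\Theta,\R^{D\times D})$ by assumption and $\nabla \mathcal{E}_Z\in \mathcal{C}^1(\Omega,\R^D)$, the right-hand side $\theta\mapsto -M_Z(\theta)\nabla\mathcal{E}_Z(\theta)$ is $\mathcal{C}^1$ on $\Omega$ and Cauchy--Lipschitz applies. Arguing exactly as in the proof of \Cref{prop:orthogonality} (without the extra time coordinate), a function $h\in\mathcal{C}^1(\Omega,\R)$ is conserved on $\Omega$ through \eqref{gradientflow} if and only if
\[
\langle \nabla h(\theta),\, M_Z(\theta)\nabla \mathcal{E}_Z(\theta)\rangle = 0\quad\text{for all } \theta\in\Omega.
\]
Indeed, the ``only if'' direction is obtained by differentiating $t\mapsto h(\theta(t))$ at $t=0$ along a trajectory with arbitrary initial condition in $\Omega$, and the ``if'' direction is a direct integration.

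\textbf{Step 2 (every dataset, locally).} Assume $h$ is locally conserved on $\Theta$ for any dataset, and fix $\theta_0\in\Theta$. For every $Z\in \mathcal{Z}_{\theta_0}'$ there exists an open neighborhood $\Omega$ of $\theta_0$ on which $Z$ is admissible, so Step 1 applied on $\Omega$ gives $\langle \nabla h(\theta_0),M_Z(\theta_0)\nabla\mathcal{E}_Z(\theta_0)\rangle=0$. Taking the linear span over $Z\in\mathcal{Z}_{\theta_0}'$ yields $\nabla h(\theta_0)\perp W_{\theta_0}^{\texttt{grad}}$. Conversely, if this orthogonality holds at every $\theta\in\Theta$, then for any open $\Omega\subseteq\Theta$ and any $Z$ admissible on $\Omega$, the pointwise orthogonality of Step 1 holds throughout $\Omega$, hence $h$ is conserved on $\Omega$ through \eqref{gradientflow}. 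Since this works for every $\Omega$, $h$ is locally conserved on $\Theta$ for any dataset.

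\textbf{Step 3 (the data-independent case).} When $M_Z(\theta)=M(\theta)$ does not depend on $Z$, one has
\[
W_\theta^{\texttt{grad}} = \linspan\{M(\theta)\nabla\mathcal{E}_Z(\theta): Z\in \mathcal{Z}_\theta'\} = M(\theta)\,\linspan\{\nabla\mathcal{E}_Z(\theta):Z\in\mathcal{Z}_\theta'\} = M(\theta)W_\theta,
\]
by linearity of $v\mapsto M(\theta)v$ and by the definition of $W_\theta$ recalled from \cite{marcotte2023abide}.

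The only mildly delicate point is the back-and-forth between the pointwise condition at each $\theta$ and the local-on-each-open-set formulation in \Cref{def:locally_conserved_any}; this is handled in Step 2 by using that, for any $\theta\in\Theta$ and $Z\in\mathcal{Z}_\theta'$, the admissibility of $Z$ propagates to a whole neighborhood of $\theta$, so that one can freely switch between ``conservation on a neighborhood'' and ``orthogonality at a point''. Everything else is essentially bookkeeping of smoothness assumptions and is identical to the Euclidean argument.
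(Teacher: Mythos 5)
Your proposal is correct, and it follows exactly the strategy the paper uses for the momentum analog: establish the pointwise orthogonality characterization for a single dataset (mirroring \Cref{prop:orthogonality}), then pass to the span over admissible datasets (mirroring \Cref{proptracerewritten}), with the data-independent case being a direct consequence of linearity of $v \mapsto M(\theta)v$. The paper itself omits this proof, asserting only that the Euclidean argument from \cite{marcotte2023abide} ``naturally extends''; your Steps 1--3 are precisely the details of that extension, and the neighborhood-propagation point you flag at the end is the right thing to note to reconcile the pointwise orthogonality with the definition of local conservation on each open set.
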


\subsection{From conserved functions to conservation laws}
To provide an algorithmic procedure to determine these functions, \cite{marcotte2023abide} makes the fundamental hypothesis that the \textbf{model} $g(\theta, x)$ 
can be (locally) factored via a \textbf{reparametrization} $\phi$ as $
f(\phi(\x),x)$. They require that the model $g(\theta, x)$ satisfies the following central assumption.

\begin{assumption}[Local reparameterization] \label{as:main_assumption}
There exists a dimension $d$ and a function $\phi \in \mathcal{C}^2(\Theta,\Rd)$ such that: for each parameter $\theta_0$ in the open set $\Theta \subseteq \RD$, for each $x \in \mathcal{X}$ such that $\x \mapsto 
    g(\x,x)$ is $\mathcal{C}^{2}$ in a neighborhood of $\x_0$, there is a neighborhood $\ouv$ of $\theta_0$ and a function $f(\cdot, x) \in \mathcal{C}^2(\phi(\ouv), \R^n)$ such that
\begin{equation}
    \label{eq:elr-general}
    \vspace{-0.2cm}
      \forall \x \in \ouv, \quad  
     g(\theta,x) = 
     f(\phi(\theta), x).
\end{equation}
\end{assumption} 

Moreover, \cite{marcotte2023abide} shows that such reparametrizations exist for linear and layered ReLU neural networks of any depth. They are respectively denoted $\phi_{\mathtt{Lin}}(\cdot)$ and $\phi_{\mathtt{ReLU}}(\cdot)$, and detailed in \citep[Examples 2.10, 2.11 and Appendix C]{marcotte2023abide}. We recall the expression of such reparametrizations in the two-layer case.

\begin{example}(Factorization for two-layer {\em linear} neural networks) \label{ex:param-linear}
In the two-layer case, with $r$ neurons, denoting $\x=(U,V) \in \RR^{n \times r} \times \RR^{m \times r}$ (so that $D=(n+m)r$), we can factorize  $g(\x,x) \coloneqq U V^\top x$ by the reparametrization 
$\phi_{\mathtt{Lin}}(\x) \coloneqq U V^\top \in \R^{n \times m}$ (identified with $\Rd$, $d=mn$). 
\end{example}

\begin{example}[Factorization for two-layer ReLU networks] \label{ex:param-ReLU} 
Consider $g(\x,x) = \big( \sum_{j=1}^{r} u_{k, j} \sigma(\langle v_j, x \rangle + b_j)+ c_k\big)_{k = 1}^{n}
$, with $\sigma(t) \coloneqq \max(t,0)$ the ReLU activation function and $v_j \in \R^m$, $u_{k,j} \in \R$, $b_j, c_k \in \R$. 
Then, denoting $\x = (U,V, b, c)$ with $U = (u_{k,j})_{k, j} =: (u_1, \cdots, u_r) \in \R^{n \times r}$, $V = (v_1, \cdots, v_r) \in \R^{m \times r}$, $b = (b_1, \cdots, b_r)^\top \in \R^r$ and $c = (c_1, \cdots, c_n) \in \R^n$  (so that $D = (n+m+1)r + n$), we can locally factorize $g(\x,x)$ by the reparametrization:
$\phi_{\mathtt{ReLU}}(\x) =  ((u_j v_j^\top, u_j b_j )_{j=1}^r, c)$. In particular, in the case without bias ($(b, c) = (0, 0)$), the reparametrization is defined by
$\phi_{\mathtt{ReLU}}(\x) = (\phi_j)_{j=1}^r$ where $\phi_j = \phi_j(\x) \coloneqq u_j v_j^\top \in \R^{n \times m}$ (here $d =rmn$): the reparametrization $\phi_{\mathtt{ReLU}}(\x)$ contains $r$ matrices of size $m \times n$ (each of rank at most one).
\end{example}
Thanks to this reparametrization and under a mild assumption on the loss $\ell$, \citet[Theorem 2.14]{marcotte2023abide} show that for linear neural networks of any depth (resp. for two-layer ReLU neural networks), the functions that are locally conserved for any data set in a Euclidean gradient flow dynamic are entirely characterized by the {\em trace} $\mathcal{W}(\x) \coloneqq \linspan \{w(\x): w(\cdot) \in \mathcal{W}\} \subseteq \RD$ of a finite-dimensional linear space of functions, $\mathcal{W}$, determined by $\phi_{\mathtt{Lin}}$ (resp. $\phi_{\mathtt{ReLU}}$). For these cases, they show that for all $\x \in \Theta$, the space $W_\x$ defined in \Cref{appendix:link_MG} satisfies $W_{\x} = \W(\x)$ with
\eq{ \label{eq:W_phi_grad}
\W \coloneqq \linspan \{\nabla \phi_1(\cdot), \cdots, \nabla \phi_{d}(\cdot) \},
}
where the components  $\phi_i: \RD \to \R$, $1 \leq i \leq d$ of $\phi$ are assumed to satisfy  $\nabla \phi_i \in \mathcal{C}^1(\Theta, \RD)$.

Assuming that $M$ does not depend on $Z$, in
the following theorem, we combine 
 \citep[Theorem 2.14]{marcotte2023abide} and \Cref{prop:link_mom_grad} 
to show that, under an assumption on the loss $\ell$, 
$W_\z^{\mathtt{mom}}\subseteq \RD$ is the trace of 
$\mathcal{W}_\phi^{\mathtt{mom}} \coloneqq 
    \linspan \{\chi_i(\cdot): 0 \leq i \leq d\} \subseteq \mathcal{C}^1(\widetilde{\Theta}, \RD)$ where
\begin{equation} \label{eq:v-phi}
\chi_0(\z) \coloneqq 
    \smallvec{ 1 \\  \dot{\x} \\ -\tau (t) \dot{\x} }; 
\chi_i(\z) \coloneqq 
    \smallvec{ 0  \\ 0 \\ M(t, \x, \dot \x) \nabla \phi_i(\x) }, i \geq 1,
\end{equation}
with $\z = (t,\x,\dot \x)$.
Similarly the subspace $W_\x^{\texttt{grad}}$ of \Cref{proplinkGFnonEuclidean} is the trace of 
\begin{equation} \label{eq:v-phi-GF}
\vspace{-0.22cm}
    \mathcal{W}_\phi^{\texttt{grad}} \coloneqq 
   {\linspan}_i \{   
M(\cdot ) \nabla \phi_i(\cdot)\} \subseteq \mathcal{C}^1({\Theta}, \RD).
\end{equation}

\begin{theorem} \label{theorem:reformulation_pb}
Assume that the loss $(z,y) \mapsto\ell(z,y)$ is $\mathcal{C}^2$-differentiable with respect to $z \in \R^n$ for each $y \in \mathcal{Y}$, and that it satisfies the condition:
    \eq{ \label{eq:condition_loss}
    \underset{y \in \mathcal{Y}}{\linspan}\{\nabla_z \ell (z, y) \}= \R^n, \forall z \in \R^n.
    }
 Assume that $M(\cdot) \in \mathcal{C}^1(\tilde{\Theta}, \R^{D \times D})$ (resp. $M(\cdot) \in \mathcal{C}^1({\Theta}, \R^{D \times D})$). 
 Then, for linear neural networks, 
 one has for all $\z \in \tilde{\Theta}$ with $ \Theta \coloneqq \RD$ (resp. for all $\x \in \Theta$:
    \begin{equation}
        W_\z^{\mathtt{mom}} = \mathcal{W}_{\phi_{\mathtt{Lin}}}^{\mathtt{mom}}(\z) \quad \text{(resp. } W_\z^{\mathtt{mom}} = \mathcal{W}_{\phi_{\mathtt{Lin}}}^{\texttt{grad}}(\x)).
    \end{equation}
    The same holds for two-layer ReLU networks with $\phi_{\mathtt{ReLU}}$ from \Cref{ex:param-ReLU} and $\Theta$ the (open) set of all parameters $\x$ such that hidden neurons are associated to pairwise distinct  ``hyperplanes'' (see \citep[Theorem 2.8]{marcotte2023abide}).
\end{theorem}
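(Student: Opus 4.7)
The plan is to derive both identifications as immediate consequences of chaining three already-stated results: Theorem~2.14 of \cite{marcotte2023abide} (Euclidean gradient flow characterization), \Cref{proplinkGFnonEuclidean} (passage to a non-Euclidean metric), and \Cref{prop:link_mom_grad} (passage from the gradient to the momentum flow). No new intrinsic analytic step is required; the proof reduces to checking that the assumptions of these results are simultaneously satisfied and then assembling the pieces.

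First, for each of the two parametrizations at hand ($\phi_{\mathtt{Lin}}$ for linear networks of any depth, and $\phi_{\mathtt{ReLU}}$ for two-layer ReLU networks restricted to the open set $\Theta$ of parameters inducing pairwise distinct hyperplanes), Theorem~2.14 of \cite{marcotte2023abide} yields the Euclidean gradient flow identification $W_\x = \linspan\{\nabla \phi_i(\x) : 1 \leq i \leq d\}$ under the loss assumption~\eqref{eq:condition_loss}. The role of~\eqref{eq:condition_loss} is precisely to guarantee that, as $y_i$ varies, the vectors $\nabla_z \ell(g(\x,x_i),y_i)$ span $\R^n$, so that the admissible datasets $Z$ are rich enough for the span of $\nabla \mathcal{E}_Z(\x)$ to reach every element of $\linspan\{\nabla \phi_i(\x)\}$ via the chain rule through $\phi$.

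Second, for the non-Euclidean gradient flow statement, I would invoke \Cref{proplinkGFnonEuclidean}: since $M$ does not depend on $Z$, one has $W_\x^{\texttt{grad}} = M(\x) W_\x$, which combined with Step~1 gives
\[
W_\x^{\texttt{grad}} = M(\x)\linspan\{\nabla \phi_i(\x)\} = \linspan\{M(\x)\nabla \phi_i(\x)\} = \mathcal{W}_\phi^{\texttt{grad}}(\x),
\]
as in~\eqref{eq:v-phi-GF}. For the momentum case, I would apply \Cref{prop:link_mom_grad}, whose hypotheses require in particular that for each $\x \in \Theta$ some dataset $Z$ makes $\nabla \mathcal{E}_Z(\x)=0$; by the remark following that proposition this is the case for the usual losses (for example, with the squared loss, taking $y_i = g(\x,x_i)$ works). \Cref{prop:link_mom_grad} then gives
\[
W_\z^{\mathtt{mom}} = \R\,\smallvec{1 \\ \dot\x \\ -\tau(t)\dot\x} \;+\; \smallvec{0 \\ 0 \\ M(t,\x,\dot\x)\,W_\x},
\]
and substituting $W_\x = \linspan\{\nabla\phi_i(\x)\}$ from Step~1 produces exactly $\linspan\{\chi_0(\z),\chi_1(\z),\dots,\chi_d(\z)\} = \mathcal{W}_\phi^{\mathtt{mom}}(\z)$ as defined by~\eqref{eq:v-phi}.

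The main (and essentially only) obstacle is the bookkeeping on domains and regularity: one must check that the smoothness hypotheses of Theorem~2.14 of \cite{marcotte2023abide}, of \Cref{proplinkGFnonEuclidean}, and of \Cref{prop:link_mom_grad} are compatible on the chosen $\Theta$. For linear networks $\Theta = \RD$ and every coordinate of $\phi_{\mathtt{Lin}}$ is polynomial, so all required smoothness is automatic. For two-layer ReLU networks, restricting $\Theta$ to parameters with pairwise distinct hyperplanes is precisely what makes the local reparametrization $\phi_{\mathtt{ReLU}}$ of \Cref{ex:param-ReLU} $\mathcal{C}^2$ in a neighborhood of every $\x \in \Theta$, and thereby makes all three ingredients applicable there.
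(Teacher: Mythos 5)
Your proof is correct and is essentially what the paper itself prescribes (the main text explicitly says the theorem is obtained by ``combining \citep[Theorem 2.14]{marcotte2023abide} and \Cref{prop:link_mom_grad}''), so there is no separate proof in the appendix. Your additional invocation of \Cref{proplinkGFnonEuclidean} for the non-Euclidean gradient case and your verification that the hypotheses (loss condition, smoothness of $M$, validity of the reparametrization on the restricted $\Theta$ for ReLU) line up are exactly the bookkeeping the paper leaves implicit.
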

Condition~\eqref{eq:condition_loss} is the same as in \cite{marcotte2023abide} and is satisfied for standard losses such as the quadratic loss.
\Cref{theorem:reformulation_pb} motivates the following definition.
\begin{definition}[Conservation {\em law} for \eqref{momentumflow}] \label{defconservationGF}
    A real-valued function $h \in \mathcal{C}^1(\bigouv, \mathbb{R})$ is a \emph{conservation law of 
    $\phi$} for \eqref{momentumflow} if for all $\z \coloneqq (t, \x, \dot{\x}) \in \bigouv$,  $\nabla h(\z) \perp \W^{\mathtt{mom}} (\z)$.
\end{definition}
Combining \Cref{proptracerewritten} and \Cref{theorem:reformulation_pb}, the functions $h \in \mathcal{C}^1(\bigouv, \mathbb{R}) $ that are locally conserved on $\Theta$ for any data set are exactly the conservation laws of $\phi_{\mathtt{Lin}}$ (resp. of $\phi_{\mathtt{ReLU}}$ for linear (resp. ReLU) two-layer networks).
\begin{example}\label{example:firsttriviallawter}
 Revisiting again~\Cref{example:firsttriviallaw}, 
    the model $g$ is factorized by 
    $
    \phi_\mathtt{Lin}(\x) \coloneqq (u_1, u_2) \times (v_1, v_2)^\top = u_1v_1 + u_2v_2 \in \mathbb{R}$ with $\x \coloneqq (u, v) \in \Theta=\R^4$.  
    We know that $h(t, \x, \dot \x) \coloneqq u_1 \dot u_2 - \dot u_1 u_2 + v_1 \dot v_2 - \dot v_1 v_2$ is locally conserved during \eqref{momentumflow} (with $M=\mId{4}$) on $\tilde{\Theta}$ for any data set and indeed: {by \Cref{example:firsttriviallawbis}, we already have $\nabla h(\alpha) \perp \chi_0(\alpha)$ for all $\alpha \coloneqq (t, \x, \dot \x)$ {(remember that $\tau=0$)} and we also have}
    $\nabla h(\alpha) = (0, \dot u_2, -\dot u_1, \dot v_2, -\dot v_1, -u_2, u_1, -v_2, v_1)^\top \perp (0,\ldots, 0, v_1, v_2, u_1, u_2)^\top = (0,\ldots, 0, \nabla \phi(\x)^\top)^\top = {\chi_1(\alpha)}$. 
    {Thus $\nabla h(\z) \perp \W^{\mathtt{mom}} (\z)$: $h$ is a conservation law of~$\phi_{\mathtt{Lin}}$.}
\end{example}

Again, we generalize from \cite{marcotte2023abide} the notion of conservation laws for $\phi$ for a non-Euclidean GF, when $M$ does not depend on the data set $Z$.

\begin{definition}[Conservation law for \eqref{gradientflow}]
\label{CL:nonEuclideanGF}
     A function $h:\Theta \mapsto \R \in \mathcal{C}^1(\Theta, \R)$ is a conservation law of $\phi$ for \eqref{gradientflow} if 
     $\nabla h(\x) \perp   \mathcal{W}_{\phi}^{\texttt{grad}}(\x)$.
\end{definition} 
Conservation laws are known for the linear case $\phi_{\mathtt{Lin}}$ \cite{Arora18a, Arora18b}
and the ReLu case $\phi_{\mathtt{ReLu}}$ \cite{Du}.
\begin{example}[Conservation laws for linear and ReLu neural networks in Euclidean GF scenario] \label{conservation}
If $\x \coloneqq(U_1, \cdots, U_q)$ satisfies \eqref{gradientflow} with $M_Z(\theta) = \mId{D}$, then for each $i = 1, \cdots, q-1$ the function  
$\x \mapsto  U_{i}^\top U_{i} - U_{i+1} U_{i+1}^\top$ (resp. the function $\x \mapsto  \text{diag} \left(U_{i}^\top U_{i} - U_{i+1} U_{i+1}^\top  \right)$)
defines 
a set of conservation laws for $\phi_{\mathtt{Lin}}$ (resp. 
for $\phi_{\mathtt{ReLu}}$).
\end{example}

\section{Finding  Conservation Laws}
In this section, we propose a constructive way to build \textit{some} conservation laws. Then we explain how we can certify if there are conservation laws that are missing or not.

\subsection{Constructing Conservation Laws} \label{section:changeofvariable}
Conservation laws can be built using formal calculus, via the orthogonal relation that defines conservation laws (\Cref{CL:nonEuclideanGF} and \Cref{defconservationGF}), as done for Euclidean-GF in \cite{marcotte2023abide}.
One can also exploit invariances in the spirit of Noether's theorem \cite{Noether1918,jordan,Tanaka}.
\paragraph{Using formal calculus.}
By \Cref{defconservationGF} (resp. \Cref{CL:nonEuclideanGF}), $h$ is a conservation law for \eqref{momentumflow} (resp. \eqref{gradientflow}) if 
\eq{ \label{eq:PolyMF}  
\langle \nabla h (\z), \chi_i (\z) \rangle = 0,\ \forall \z = (t,\x,\dot \x) \in \tilde{\Theta},\ 0 \leq i \leq d,
}
where the $\chi_i$ are the $d+1$ vector fields defined in \eqref{eq:v-phi} that span the linear function space 
$W_\z^{\mathtt{mom}}$
(resp. if  
\eq{ \label{polynonE}
\langle \nabla h (\x), M(\x) \nabla \phi_i (\x) \rangle = 0,\ \forall \x \in \Theta,\ 1 \leq i \leq d.
}
One could seek conservation laws that belong to the finite-dimensional linear space of polynomials of a given degree, as in \cite{marcotte2023abide} for the Euclidean GF scenario. However, for MF scenarios,
\Cref{thm:structure} suggests that the conserved functions include a non-polynomial term in $t$, specifically $\exp(\tau t)$ when $\tau(t) = \tau \neq 0$.
Thus, in the MF case when  $\tau(t) = \tau$,  instead of considering conserved functions under the form $h(t, \x, \dot \x)$, we use a change a variable $s \coloneqq \exp(\tau t)$ to consider ``modified'' conserved functions $\tilde{h}$ that are under the form $
\tilde{h}(s, \x, \dot \x) \coloneqq h((\ln s)/\tau, \x, \dot \x)$.
Simple calculus yields
$
\nabla h(t, \x, \dot \x) =  
    \diag(\tau s, 1, \ldots 1) \nabla \tilde{h}(s, \x, \dot \x) ,
$
therefore, given any constant $c$ and any vector field $f: \ouv \times \RD \to \RD \times \RD$, we have: for every $(s, \x, \dot \x) \in \R_+^*\times \ouv \times \RD$:
$
\nabla h(t, \x, \dot \x) \perp 
\smallvec{ c \\ f(\x, \dot \x)  } 
\Leftrightarrow 
\nabla \tilde{h}( s, \x, \dot \x) \perp \smallvec{ \tau s c \\ f(\x, \dot \x)  }.
$
Recalling that we consider the case  $\tau(t) = \tau$, all the vector fields $\chi_i(t,\x,\dot{\x})$ from \eqref{eq:v-phi} involved in the definition of 
$W_\z^{\mathtt{mom}}$
do not depend on time, and their first coordinates are constant. 
Consequently, defining
$
\tilde{\chi}_i(s, \x, \dot \x) \coloneqq  
    \diag(\tau s, 1, \ldots 1) 
    \chi_i(0, \x, \dot \x),
$
we can solve\footnote{Concretely this is expressed as a linear system.} \eqref{eq:PolyMF} with the new vector fields $\tilde{\chi}_i$ and determine all polynomial $\tilde{h}$ of a given degree such that $\nabla \tilde{h}(s, \x, \dot \x) \perp \tilde{\chi}_i(s, \x, \dot \x)$ for all $i$. Finally, since $s=\exp(\tau t)$, we can determine all conservation laws that are polynomial in $\x, \dot \x$ and in $\exp(\tau t)$.

In the case where $\tau(t)$ is {\em not} constant, it remains possible to try directly to solve \eqref{eq:PolyMF} with the initial vector fields $\chi_i$ in $\W^{\mathtt{mom}} $ and see if there are polynomial conservation laws $h$ at a given degree. In particular, for a Nesterov flow with Euclidean metric, $\tau(t) = 3/t$, 
it turns out that there are polynomial conservation laws (see \Cref{section:linear}).
Our code to compute them is available at \url{https://github.com/sibyllema/Conservation_laws_ICML}.

\paragraph{Using invariances -- gradient flows.} 
%
For gradient flows, invariances of the cost \eqref{eq:erm} directly lead to conservation laws.

\begin{definition}[Invariant transformation on the cost \eqref{eq:erm}] 
A (one-parameter) transformation on an open set $\ouv \subseteq \Theta \subseteq \RD$ is a map $T: \R \times \ouv \to \RD$ such that $T(\cdot,\x)$ is differentiable for each $\x \in \ouv$ and $T(0,\cdot) = \id$. This transformation
leaves invariant the cost \eqref{eq:erm} if for all $\x \in \ouv$ and for all $\epsilon \geq 0$, $\mathcal{E}_Z(T(\epsilon,\x)) = \mathcal{E}_Z(T(0,\x)) = \mathcal{E}_Z(\x)$. When this holds, simple calculus yields 
for every $\x \in \ouv$:
\eq{ \label{eq:invarianceloss}
\Big\langle \nabla \mathcal{E}_Z(\x), \frac{\partial}{\partial \epsilon} T(\epsilon,\x) \Big|_{\epsilon =0} \Big\rangle= 0.
}
\end{definition}
We denote $\Delta_T (\cdot) \coloneqq \frac{\partial}{\partial \epsilon} T(\epsilon,\cdot) \Big|_{\epsilon =0}$.

\begin{example}[Fondamental example of linear transformation] \label{example:linear_transf}
Let us consider $\theta \coloneqq (U, V) \in \R^{n\times r} \times \R^{m\times r}$. 
We define $T^A$ the linear transformation where for all $\epsilon \geq 0$: 
\eq{ \label{linear_transf}
T^A(\epsilon,U, V) =
   \big(  U  \exp(\epsilon A), V \exp(-\epsilon A^\top)\big). 
}
Simple calculus yields $\Delta_{T^A }(U, V) =  (U A, -V A^\top)$. Considering $g(\x, x) \coloneqq U V^\top x$ and any $A \in \R^{r\times r}$, $T^A$ is an invariant transformation on \eqref{eq:erm}: for all $\epsilon$ and $\x$, 
$g(T^A(\epsilon,\x) , \cdot) = g(\x, \cdot)$ hence $\mathcal{E}_Z(T^A(\epsilon,\x)) = \mathcal{E}_Z(\x)$. Considering now $g(\x, x) \coloneqq U \sigma( V^\top x)$ with $\sigma$ the ReLU activation function, a similar reasoning shows that $T^A$ is an invariant transformation on \eqref{eq:erm} if $A \in \R^{r\times r}$ is \textit{diagonal}.
\end{example}

{\em From loss invariance to conservation laws for GF.} 
In the context of a gradient flow \eqref{gradientflow}, 
the consequence \eqref{eq:invarianceloss} of invariance rewrites as 
\eq{ \label{eq:invarianceandGF}
\langle M_Z \left(\x(t)\right)^{-1} \dot \x(t), \Delta_T (\x(t)) \rangle = 0
}
as soon as $M_Z(\x)$ is invertible for every $\x$. 

As a particular consequence, for gradient flows with linear networks in a Euclidean setting $(M \equiv \mId{D})$, since 
\eqref{eq:invarianceandGF}
holds for $T=T^A$ with {\em any} matrix $A \in \R^{r \times r}$, denoting $\langle M,N\rangle \coloneqq \mathtt{Tr}(M^\top N)$ we obtain $0 = \langle \dot U(t), U(t) A\rangle - \langle \dot V(t), V(t) A \rangle$ at each time and for any $A$. Specializing this to any \textit{symmetric} matrix, we obtain that $0 = \langle \dot U, U A\rangle - \langle \dot V, V A \rangle = 1/2 \frac{\mathrm{d}}{\mathrm{d}t} (\langle U, UA \rangle - \langle V, VA \rangle)$. Thus for every symmetric matrix $A$, $\langle U, UA \rangle - \langle V, VA \rangle$ is conserved, which coincides with all conservation laws in that case \cite{Arora18a, Arora18b, marcotte2023abide}. 

Similarly, for ReLU neural networks in a Euclidean setting (without bias for simplicity, this holds with bias too as detailed in the proof of \Cref{thm:ICNN}), by restricting ourselves to {\em elementary diagonal matrices} $A = E_{i, i}$ where $E_{i, i}$ is the one-hot matrix in $\R^{r \times r}$ with the $(i, i)$-th entry being $1$, $i =1, \cdots, r$, we obtain that for all $i$,  $0 = \langle \dot U_i(t), U_i(t)  \rangle - \langle \dot V_i(t), V_i(t)  \rangle = \tfrac{1}{2} \frac{\mathrm{d}}{\mathrm{d}t} (\langle U_i, U_i \rangle - \langle V_i, V_i\rangle)$. Thus for all $i$, $\| U_i \|^2 - \|V_i \|^2$ is conserved, recovering all conservation laws  \cite{Du, marcotte2023abide}.
\paragraph{Using invariances -- momentum flows.}
Invariances of the cost \eqref{eq:erm} are replaced in Noether's theorem by invariances of a {\em Lagrangian} compatible with the flow \eqref{momentumflow}. 

\begin{definition}[Lagrangian]
A Lagrangian \cite{Noether1918,jordan} is a function of three variables $\mathcal{L}\left(t, \x, v \right)$. The second order dynamical system \eqref{momentumflow} is compatible with this Lagrangian if for every solution $\x(t)$ of \eqref{momentumflow} the Euler-Lagrange equation of $\mathcal{L}$ holds:
 $$
    \frac{\mathrm{d}}{\mathrm{d}t} 
    \Big(\partial_{v} \mathcal{L} \big(t, \x(t), \dot{\x}(t)\big) \Big) = \partial_{{\x}} \mathcal{L}\big(t, \x(t), \dot{\x}(t) \big).
    $$
\end{definition}
\begin{example}[Euclidean momentum~\cite{jordan}] 
 In the case $M = \mId{D}$ and $\tau(t) = 
 \tau$ the map 
 \begin{equation} \label{eq:lagrangian}
\mathcal{L}\left(t, \x, v\right) \coloneqq \exp(\tau t) \big(  \tfrac{\|v\|^2}{2} - \mathcal{E}_Z(\x) \big)     
\end{equation}
 is a \textit{Lagrangian} of the second-order dynamical system \eqref{momentumflow}.  
 In the Euclidean case ($M = \mId{d}$) with a Nesterov flow ($\tau(t) =
 3/t$), 
 \begin{equation} \label{eq:lagrangian_nesterov}
\mathcal{L}\left(t, \x, v\right) \coloneqq \tfrac{t^3}{2} \big( \tfrac{\|v\|^2}{2} - \mathcal{E}_Z(\x) \big)     
\end{equation}
 is a \textit{Lagrangian} of the second-order dynamical system \eqref{momentumflow}. 
\end{example}

More generally, \cite{jordan} gives a Lagrangian associated to \eqref{momentumflow}, subject to: a) an hypothesis of an ``ideal scaling condition''  \citep[Equation 2.2b]{jordan}); and b) the assumption that the matrix $M$ is the inverse of the Hessian of an explicitly known metric $\psi$.

In general, a transformation leaving the cost invariant {\em does not necessarily} leave invariant the Lagrangian of the associated dynamic (e.g., revisiting \Cref{example:linear_transf}, $T^A$ leaves the cost invariant for {\em any} $A \in \R^{r \times r}$, yet only {\em skew} matrices $A$ leave the Lagrangian \eqref{eq:lagrangian} invariant). 
However, to obtain a conserved function with Noether theorem\footnote{The full version of Noether theorem with a non-zero right-hand side is recalled in \cite{Tanaka}.}, one needs to consider transformations leaving invariant the Lagrangian: in that case the function $\partial_{v} \mathcal{L} (t, \x(t), \dot{\x}(t)) \cdot \Delta_T (\x(t))$ is conserved. 
\begin{theorem}[Noether theorem]\label{thm:noether}
Let $(\x(t), \dot{\x}(t))$ be a solution of \eqref{momentumflow}. Then for each transformation $T$ that leaves $\mathcal{L}$ invariant,
 $
    \frac{\mathrm{d}}{\mathrm{d}t} \left(\partial_{v} \mathcal{L} (t, \x(t), \dot{\x}(t)) \cdot \Delta_T (\x(t)) \right) = 0.
    $
\end{theorem}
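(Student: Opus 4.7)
The plan is to follow the classical proof of Noether's theorem, adapted to the one-parameter transformation setting already used in the paper for invariances of the cost.

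First I would lift the transformation $T$ to trajectories: given a solution $\theta(t)$ of \eqref{momentumflow}, define the perturbed curve $\theta_\epsilon(t) \coloneqq T(\epsilon, \theta(t))$, whose velocity is $\dot\theta_\epsilon(t) = \partial_\theta T(\epsilon, \theta(t)) \dot\theta(t)$. The hypothesis that $T$ leaves $\mathcal{L}$ invariant should be understood (as usual for Noether) as the identity
\begin{equation*}
\mathcal{L}\bigl(t, \theta_\epsilon(t), \dot\theta_\epsilon(t)\bigr) = \mathcal{L}\bigl(t, \theta(t), \dot\theta(t)\bigr),
\end{equation*}
valid for all admissible $\epsilon$ and $t$. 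Setting $\epsilon = 0$ recovers the original trajectory since $T(0, \cdot) = \mathrm{id}$.

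Next I would differentiate this identity with respect to $\epsilon$ at $\epsilon = 0$. Writing $\Delta_T(\theta) = \partial_\epsilon T(\epsilon, \theta)|_{\epsilon = 0}$ and using Schwarz's theorem to commute $\partial_\epsilon$ and $\partial_t$ on the velocity term (so that $\partial_\epsilon \dot\theta_\epsilon|_{\epsilon=0} = \frac{\mathrm{d}}{\mathrm{d}t}\Delta_T(\theta(t))$), the chain rule yields
\begin{equation*}
0 = \partial_{\theta}\mathcal{L}\bigl(t, \theta(t), \dot\theta(t)\bigr) \cdot \Delta_T(\theta(t)) \;+\; \partial_{v}\mathcal{L}\bigl(t, \theta(t), \dot\theta(t)\bigr) \cdot \tfrac{\mathrm{d}}{\mathrm{d}t}\Delta_T(\theta(t)).
\end{equation*}

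Finally, since $\theta(t)$ solves \eqref{momentumflow} and $\mathcal{L}$ is compatible with \eqref{momentumflow}, the Euler--Lagrange equation gives $\partial_\theta \mathcal{L} = \tfrac{\mathrm{d}}{\mathrm{d}t} \partial_v \mathcal{L}$ along $\theta(t)$. Substituting this into the previous display, the right-hand side becomes exactly the product rule expansion of $\tfrac{\mathrm{d}}{\mathrm{d}t}\bigl(\partial_v \mathcal{L}(t, \theta(t), \dot\theta(t)) \cdot \Delta_T(\theta(t))\bigr)$, which therefore vanishes, proving the claim.

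The main obstacle is essentially bookkeeping: making precise what ``invariance of $\mathcal{L}$ under $T$'' means for a time-dependent Lagrangian with a velocity argument (namely, the natural lift above), and justifying the interchange of $\partial_\epsilon$ and $\tfrac{\mathrm{d}}{\mathrm{d}t}$. Both require only the differentiability hypotheses already built into the paper's definition of a transformation and the standing $\mathcal{C}^2$ smoothness of $\mathcal{L}$ (inherited from the smoothness assumptions on $\ell$, $g$ and $\tau$); no further computations of interest are needed.
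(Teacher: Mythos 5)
The paper does not give its own proof of \Cref{thm:noether}: it is stated as a classical result with pointers to \cite{Noether1918,jordan,Tanaka}, so there is no in-paper argument to compare against. Your argument is the standard proof of Noether's theorem and it is correct: lift $T$ to trajectories via $\theta_\epsilon(t)\coloneqq T(\epsilon,\theta(t))$, differentiate the invariance identity at $\epsilon=0$, interchange $\partial_\epsilon$ and $\tfrac{\mathrm{d}}{\mathrm{d}t}$ on the velocity term, and close with the Euler--Lagrange equation and the product rule. One point worth making explicit, which you correctly handle, is the \emph{meaning} of ``$T$ leaves $\mathcal{L}$ invariant'' for a Lagrangian with a velocity slot: the paper never spells this out, but your choice (invariance of the tangent lift $(\theta,v)\mapsto(T(\epsilon,\theta),\partial_\theta T(\epsilon,\theta)\,v)$) is precisely the one implicitly used in the paper, since it is what restricts the invariance of the Euclidean Lagrangian \eqref{eq:lagrangian} to \emph{skew} matrices $A$ in the discussion around \Cref{example:linear_transf}. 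The only hypothesis to flag is that swapping $\partial_\epsilon$ and $\tfrac{\mathrm{d}}{\mathrm{d}t}$ requires $T$ to be $\mathcal{C}^2$ jointly in $(\epsilon,\theta)$; the paper's formal definition of a one-parameter transformation only asks that $T(\cdot,\theta)$ be differentiable for each fixed $\theta$, which is too weak as stated, but every transformation actually used (e.g.\ $T^A$ and $T'^A$) is analytic, so the interchange is harmless in context.
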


\subsection{Finding the number of conservation laws} \label{section:dim}
While being able to build conservation laws is beneficial, the question remains: how to ensure that we have derived ``all'' possible laws? This first requires to recall the definition of independent conserved functions \citep[Definition 2.18]{marcotte2023abide}, to avoid functional redundancies.
\begin{definition}
A family of $N$ functions $(h_1, \cdots, h_N)$ in $\mathcal{C}^1(\bigouv, \R)$ is  \textit{independent} if for all $\z \in \bigouv$ the vectors $(\nabla h_1(\z), \cdots, \nabla  h_N(\z))$ are linearly independent. 
\end{definition}

\citep[Theorem 3.3]{marcotte2023abide} link the number of independent conservation laws to the dimension of a space involving Lie algebras. Knowledge about Lie algebras is not mandatory in the main body of this paper, basics are recalled in \Cref{liealgebrabackground} (See \cite{marcotte2023abide}) to support the proofs. The space $\lie(
\mathcal{W}
)$ is \textit{the generated Lie algebra of $
\mathcal{W}$} (See \Cref{liealgebrabackground}), it is entirely characterized by $\mathcal{W}$.

\begin{theorem} \label{thm:algebrelie}
    If $\vdim (\lie(\mathcal{W}_{\phi}^\mathtt{mom})(\z))$ is locally constant then each $\z \in \bigouv \subseteq \R^{2D+1}$ admits a neighborhood $\tilde{\Omega'}$ such that there are $2D+1-\vdim (\lie(\mathcal{W}_{\phi}^\mathtt{mom})(\z))$ independent conservation laws of $\phi$ for \eqref{momentumflow} on $\tilde{\ouv'}$.
\end{theorem}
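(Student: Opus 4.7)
The plan is to adapt the proof of Theorem 3.3 of \cite{marcotte2023abide}, which handles the Euclidean gradient flow case in $\R^D$, to the momentum setting in $\R^{2D+1}$ (the extra $D+1$ dimensions accounting for time and velocity). By \Cref{defconservationGF}, the task is to count the maximum number of independent $\mathcal{C}^1$ functions $h$ on a neighborhood of $\z$ satisfying $\nabla h(\z') \perp \mathcal{W}_\phi^{\mathtt{mom}}(\z')$ for all $\z'$ nearby.

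First, I would establish the standard reduction from the linear span to its Lie algebra: $\nabla h \perp \mathcal{W}_\phi^{\mathtt{mom}}$ on an open set is equivalent to $\nabla h \perp \lie(\mathcal{W}_\phi^{\mathtt{mom}})$ on that set. The nontrivial direction uses the identity $[X,Y](h) = X(Y(h)) - Y(X(h))$: if $X,Y$ are vector fields annihilating $h$ (i.e. $\langle \nabla h, X\rangle = \langle \nabla h, Y\rangle = 0$), then so does their bracket. By induction, $\nabla h$ annihilates every iterated bracket and hence the whole generated Lie algebra.

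Second, I would invoke the Frobenius theorem. Let $k \coloneqq \vdim(\lie(\mathcal{W}_\phi^{\mathtt{mom}})(\z))$, assumed locally constant. The vector fields defining $\mathcal{W}_\phi^{\mathtt{mom}}$ in \eqref{eq:v-phi} are smooth (since $M$ is $\mathcal{C}^1$ and $\phi_i$ is $\mathcal{C}^2$), so the generated Lie algebra is a smooth distribution; being a Lie algebra, it is involutive; and by assumption it has locally constant rank $k$. Frobenius then yields, in a neighborhood $\tilde{\Omega'}$ of $\z$, a straightening chart in which the distribution $\lie(\mathcal{W}_\phi^{\mathtt{mom}})$ coincides with the tangent space to the first $k$ coordinates. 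The remaining $2D+1-k$ coordinate functions are then smooth, independent, and have gradients orthogonal to $\lie(\mathcal{W}_\phi^{\mathtt{mom}})$, hence to $\mathcal{W}_\phi^{\mathtt{mom}}$; by the first step they are conservation laws of $\phi$ for~\eqref{momentumflow}.

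For the matching upper bound, if $(h_1,\ldots,h_N)$ were independent conservation laws with $N > 2D+1-k$, the $N$ linearly independent vectors $\nabla h_i(\z)$ would all lie in the orthogonal complement of a $k$-dimensional subspace of $\R^{2D+1}$, contradicting dimension counting. The main obstacle is purely technical: one must verify that the smooth and constant-rank hypotheses needed by Frobenius are indeed furnished by the assumption that $\vdim(\lie(\mathcal{W}_\phi^{\mathtt{mom}})(\cdot))$ is locally constant, together with the smoothness of the generators $\chi_0,\ldots,\chi_d$; this is direct here since the Lie algebra is defined as the span of all iterated brackets of smooth fields, so local constancy of dimension promotes it to a smooth involutive distribution, exactly the hypothesis required by Frobenius.
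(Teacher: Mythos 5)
Your proposal correctly identifies the intended strategy — reduce from $\mathcal{W}_\phi^{\mathtt{mom}}$ to $\lie(\mathcal{W}_\phi^{\mathtt{mom}})$, apply Frobenius under the locally-constant-rank hypothesis, and count codimensions — and this is indeed the route the paper takes (it cites and adapts Theorem~3.3 of \cite{marcotte2023abide} rather than giving a self-contained proof). The Frobenius/lower-bound part of your argument is sound.

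However, the ``nontrivial direction'' of your first step has a genuine gap: you justify $\nabla h \perp \mathcal{W}_\phi^{\mathtt{mom}} \Rightarrow \nabla h \perp \lie(\mathcal{W}_\phi^{\mathtt{mom}})$ by the identity $[X,Y](h)=X(Y(h))-Y(X(h))$, but conservation laws are only required to be $\mathcal{C}^1$ (\Cref{defconservationGF}). If $h$ is merely $\mathcal{C}^1$, then $Y(h)=\langle\nabla h, Y\rangle$ is only continuous and $X(Y(h))$ is not defined; equivalently, showing $\langle \nabla h, [X,Y]\rangle = 0$ by differentiating the relations $\langle\nabla h, X\rangle = \langle\nabla h, Y\rangle = 0$ requires second derivatives of $h$. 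Since your \emph{upper} bound relies precisely on this implication (to conclude $\nabla h_i(\z) \perp \lie(\mathcal{W}_\phi^{\mathtt{mom}})(\z)$, a $k$-dimensional space, rather than merely $\perp \mathcal{W}_\phi^{\mathtt{mom}}(\z)$, which may be strictly smaller), the argument as written does not establish that there are \emph{no more} than $2D+1-k$ independent conservation laws. The standard repair is via the orbit theorem (Nagano--Sussmann): after Frobenius gives the foliation by integral leaves of $\lie(\mathcal{W}_\phi^{\mathtt{mom}})$, the orbits of the (local) flows generated by fields of $\mathcal{W}_\phi^{\mathtt{mom}}$ coincide with those leaves under the constant-rank hypothesis, and any $\mathcal{C}^1$ function with $\nabla h \perp \mathcal{W}_\phi^{\mathtt{mom}}$ is constant along these orbits — hence factors through the (smooth) quotient and has gradient orthogonal to $\lie(\mathcal{W}_\phi^{\mathtt{mom}})$. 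A smaller inaccuracy: you call the generators $\chi_0,\ldots,\chi_d$ ``smooth,'' but under the stated hypotheses ($M \in \mathcal{C}^1$, $\phi \in \mathcal{C}^2$) they are only $\mathcal{C}^1$; iterated Lie brackets and the Frobenius machinery require more regularity, which in the paper's applications is supplied by $\phi$ being polynomial and $M$ being analytic, but should not be taken for granted in the general statement.
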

The same theorem holds for $\mathcal{W}_{\phi}^\texttt{grad}$ by replacing $2D+1$ by $D$.
Besides, given a finite set of vector fields, \citet{marcotte2023abide} provide a code (detailed in  Section 3.3) that computes the dimension of the trace of their generated Lie algebra: in their case, they apply it to the vector fields $(\nabla \phi_i(\cdot))_i$ that span $\W$ (as defined in \eqref{eq:W_phi_grad}). We can directly use this code with the finite set of vector fields that span $\mathcal{W}_{\phi}^\mathtt{mom}$ as defined in \eqref{eq:v-phi} (resp. on the fields $(M(\cdot) \nabla \phi_i (\cdot))_i$ that span $\mathcal{W}_{\phi}^\texttt{grad}$ for the non-Euclidean GF case, see \eqref{eq:v-phi-GF}).

By computing the number of independent conservation laws with this code in the Euclidean GF scenario, \citet{marcotte2023abide} 
(in their Section 3.3)
establish that the set of known conservation laws for linear and ReLu neural networks (recalled in \Cref{conservation}) is complete. In particular, they fully work out theoretically the $2$-layer case and show \citep[Proposition 4.2, Corollary 4.4]{marcotte2023abide}:
\begin{proposition} \label{nb_GF}
    If $(U; V)$ has full rank noted $\mathtt{rk}$, then in a neighborhood of $( U, V)$, the set of $\mathtt{rk}/2 \cdot (2r + 1-\mathtt{rk})$ independent conservation laws given by \Cref{conservation} is complete: there exists no other conservation law.
\end{proposition}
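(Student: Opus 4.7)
The plan is to invoke \Cref{thm:algebrelie} in its gradient-flow version: by \Cref{theorem:reformulation_pb}, $W_\theta = \mathcal{W}_{\phi_{\mathtt{Lin}}}(\theta)$ in the linear case, so the number of independent conservation laws in a neighborhood of $\theta = (U, V)$ equals $D - \vdim \lie(\mathcal{W}_{\phi_{\mathtt{Lin}}})(\theta)$, provided this dimension is locally constant. Since the rank condition $\mathrm{rk}((U;V)) = \mathtt{rk}$ is locally open, completeness reduces to the identity $\vdim \lie(\mathcal{W}_{\phi_{\mathtt{Lin}}})(\theta) = D - \tfrac{1}{2}\mathtt{rk}(2r+1-\mathtt{rk})$ at one such $\theta$. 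The proof then splits into the matching upper and lower bounds on $\vdim \lie(\mathcal{W}_{\phi_{\mathtt{Lin}}})(\theta)$.

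For the upper bound, I would verify that the entries of $U^\top U - V^\top V \in \mathrm{Sym}(r)$ yield exactly $\tfrac{1}{2}\mathtt{rk}(2r+1-\mathtt{rk})$ independent laws at $\theta$. The differential reads $(\delta U, \delta V) \mapsto 2\,\mathrm{sym}(U^\top \delta U - V^\top \delta V)$. Its unsymmetrized image consists of all $r \times r$ matrices whose columns lie in $\mathrm{Im}(U^\top) + \mathrm{Im}(V^\top) = \mathrm{Im}((U;V)^\top)$, an $\mathtt{rk}$-dimensional subspace of $\R^r$; it has dimension $r\,\mathtt{rk}$, and its intersection with $\mathrm{Skew}(r)$ consists of skew matrices supported on that subspace, of dimension $\binom{\mathtt{rk}}{2}$. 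After symmetrization the image has dimension $r\,\mathtt{rk} - \binom{\mathtt{rk}}{2} = \tfrac{1}{2}\mathtt{rk}(2r+1-\mathtt{rk})$, giving the desired upper bound.

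For the matching lower bound, I would compute $\mathcal{W}_{\phi_{\mathtt{Lin}}}(\theta)$ directly from $\nabla \phi_{i,j}(\theta) = (e_i v_j^\top, e_j u_i^\top)$, obtaining $\mathcal{W}_{\phi_{\mathtt{Lin}}}(\theta) = \{(AV, A^\top U): A \in \R^{n\times m}\}$, of dimension $nm - (n-\mathrm{rk}(U))(m-\mathrm{rk}(V))$, which is in general strictly smaller than the target. The missing directions come from Lie brackets: a direct calculation shows that $[\nabla\phi_{i_1,j_1}, \nabla\phi_{i_2,j_2}]$ vanishes unless $j_1 = j_2$ or $i_1 = i_2$, and otherwise equals a vector field whose only nonzero block is a skew combination of two rows of $U$ (resp.\ of $V$). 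Together with the first-order generators, these brackets span the orthogonal complement of $\{(US, -VS): S \in \mathrm{Sym}(r)\} \subseteq \R^D$, i.e.\ the tangent space to the level set of $U^\top U - V^\top V$ at $\theta$, which by the previous paragraph has dimension $D - \tfrac{1}{2}\mathtt{rk}(2r+1-\mathtt{rk})$.

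The main obstacle is the last span identification, especially when $(U;V)$ has non-maximal rank $\mathtt{rk} < \min(n+m, r)$: one must verify that the combinatorial count of basis vectors matches exactly, with no hidden degeneracies or surplus. The cleanest route is to pass to a basis of $\R^r$ adapted to the decomposition $\mathrm{Im}((U;V)^\top) \oplus \ker((U;V))$: on $\ker((U;V))$ both sides contribute trivially, while on the complement the $U$- and $V$-block brackets contribute independent skew combinations whose count is exactly $r\,\mathtt{rk} - \binom{\mathtt{rk}}{2}$. Once the dimensions coincide at $\theta$, local constancy follows from the openness of the rank condition, and \Cref{thm:algebrelie} completes the proof.
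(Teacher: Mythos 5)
The paper does not give a proof of \Cref{nb_GF}: the statement is imported verbatim from \cite{marcotte2023abide} (their Proposition~4.2 and Corollary~4.4), so there is no in-paper proof to compare against. Your overall framework is nonetheless the right one, and is the same template the present paper uses to prove its momentum-flow analogue \Cref{theorem:dimliealgebra}: invoke the gradient-flow version of \Cref{thm:algebrelie} and reduce to showing $\vdim \lie(\mathcal{W}_{\phi_{\mathtt{Lin}}})(\theta) = D - \tfrac12\mathtt{rk}(2r+1-\mathtt{rk})$. Your ``upper bound'' step — the differential of $\theta \mapsto U^\top U - V^\top V$ has image of dimension $r\,\mathtt{rk}-\binom{\mathtt{rk}}{2}$ — is correct, and so are the explicit formulas for $\mathcal{W}_{\phi_{\mathtt{Lin}}}(\theta) = \{(AV, A^\top U)\}$ and for the first brackets, which produce exactly $(SU,0)$ with $S$ skew and $(0,S'V)$ with $S'$ skew.

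The gap is in the final counting step of the lower bound. You claim the brackets ``contribute independent skew combinations whose count is exactly $r\,\mathtt{rk}-\binom{\mathtt{rk}}{2}$,'' but that number is the number of conservation laws, not the number of new dimensions the brackets add on top of $\mathcal{W}_{\phi_{\mathtt{Lin}}}(\theta)$; the latter must be $D - \tfrac12\mathtt{rk}(2r+1-\mathtt{rk}) - \dim\mathcal{W}_{\phi_{\mathtt{Lin}}}(\theta)$, which is typically much smaller. Concretely, take $n=m=r=2$ with $U,V$ invertible: then $D=8$, the number of laws is $3$, $\dim\mathcal{W}_{\phi_{\mathtt{Lin}}}(\theta)=4$, so the brackets must add exactly one new direction — yet your formula predicts three, and the two one-parameter families $(SU,0)$, $(0,S'V)$ are each genuinely new relative to $\mathcal{W}_{\phi_{\mathtt{Lin}}}(\theta)$. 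The point you miss is that they are \emph{not} independent of each other modulo $\mathcal{W}_{\phi_{\mathtt{Lin}}}(\theta)$: with $A = SUV^{-1}$ one has $(SU,0)+(0,S'V) = (AV,A^\top U)$ for $S' = -V^{-\top}U^\top S U V^{-1}$, so the two families collapse to a single direction. This interaction is precisely the ``hidden degeneracy'' you flag, and decomposing $\R^r$ does not resolve it, because the degeneracy lives in $\R^{n+m}$, not in $\R^r$. A clean fix is to pass to the ambient space: with $\theta=(U;V)\in\R^{(n+m)\times r}$, the first-order fields and the brackets are exactly the linear fields $\theta\mapsto N\theta$ for $N$ of the form $\bigl(\begin{smallmatrix}S & A\\ A^\top & S'\end{smallmatrix}\bigr)$ with $S\in\mathcal{A}_n$, $S'\in\mathcal{A}_m$, $A\in\R^{n\times m}$. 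One checks that this set equals $J\,\mathcal{A}_{n+m}$ with $J=\diag(\mId{n},-\mId{m})$, is a matrix Lie algebra (it is $\mathfrak{so}(n,m)$), and is generated by the off-diagonal blocks; hence $\lie(\mathcal{W}_{\phi_{\mathtt{Lin}}})(\theta) = J\,\mathcal{A}_{n+m}\,\theta$, of dimension $\binom{n+m}{2}-\binom{n+m-\mathtt{rk}}{2}$, which equals $D-\tfrac12\mathtt{rk}(2r+1-\mathtt{rk})$ precisely because $\mathtt{rk}=\min(n+m,r)$ under the full-rank hypothesis.
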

\section{Conservation for 
Neural Networks}
\label{section:loss}

We now exemplify our results in different settings.
We study the conservation laws for neural networks with $q$ layers, and either a linear or ReLU activation. We write $\x \coloneqq (U_1, \cdots, U_q)$ with $U_i \in \R^{n_{i-1} \times n_{i}}$ the weight matrices. We also consider the impact of the choice of metric.
The striking outcome of this analysis is that momentum flows radically change the structure of the conservation laws. There are fewer (or even none) conserved quantities when using momentum flows, and this phenomenon appears both for Euclidean and non-Euclidean geometries. 

\subsection{PCA: linear networks with Euclidean geometry} 
\label{section:linear}

The following theorem (proved in \Cref{appendix:preuveconservationeuclidienne}) gives a set of conserved functions when $g(\x,x) \coloneqq U_1\ldots U_qx$. Here
$\mathcal{A}_{n}$ denotes the set of skew-symmetric matrices in $\R^{n \times n}$.
\begin{restatable}{theorem}{conservedfunctionsEuclidean} \label{thm:conservedfunctionsEuclidean}
Consider the model $g(\x,x) \coloneqq U_1\ldots U_qx$.
For all $i = 1, \cdots q-1$ and for all $A \in \mathcal{A}_{n_i}$, the function
\begin{equation}\label{eq:NewLawPCA}
\exp\big( \textstyle\int^t \tau(s) \mathrm{d}s \big) \big( 
\langle \dot U_{i},  U_{i}A
   \rangle  +
   \langle 
    \dot U_{i+1},  A^\top U_{i+1}  \rangle  \big)
\end{equation}
is a conservation law for \eqref{momentumflow} with $M=\mId{D}$, where $\int^t \tau(s) \mathrm{d}s$ is a primitive of $\tau(\cdot)$.
Moreover, for each $i$ such that $n_{i-1} = n_{i+1}=1$ and all  $A \in \mathcal{A}_{n_i}$
the function
$    \exp\big( \textstyle\int^t \tau(s) \mathrm{d}s\big) \big( 
\langle \dot U_{i},   U_{i+1}^\top A
   \rangle  +
   \langle 
    \dot U_{i+1}^\top,  U_{i} A \rangle  \big)
$
is an additional conservation law.
\end{restatable}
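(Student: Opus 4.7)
The plan is to verify directly the characterization of conservation laws given by \Cref{defconservationGF} together with the explicit spanning set $\{\chi_0,\chi_1,\ldots,\chi_d\}$ of $\mathcal{W}_{\phi_{\mathtt{Lin}}}^{\mathtt{mom}}$ from~\eqref{eq:v-phi}. Each candidate has the form $h = e^{G(t)} F(\theta,\dot\theta)$ with $G'=\tau$ and $F$ linear in $\dot\theta$. Since $\partial_t h = \tau(t) h$ and Euler's identity for functions linear in $\dot\theta$ yields $\langle \nabla_{\dot\theta} F, \dot\theta\rangle = F$, orthogonality to $\chi_0 = (1,\dot\theta,-\tau\dot\theta)^\top$ collapses to the ``kinetic'' identity $\langle \nabla_\theta F, \dot\theta\rangle = 0$. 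Orthogonality to every $\chi_k = (0,0,\nabla\phi_k)^\top$ is equivalent to $\nabla_{\dot\theta} h$ lying in $\ker d\phi_{\mathtt{Lin}}(\theta)$, i.e., $d\phi_{\mathtt{Lin}}(\theta)[\nabla_{\dot\theta} F] = 0$, a ``kernel'' condition I will evaluate via the Leibniz rule for $\phi_{\mathtt{Lin}}(\theta) = U_1\cdots U_q$.

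For the first law $F_1 := \langle \dot U_i, U_i A\rangle + \langle \dot U_{i+1}, A^\top U_{i+1}\rangle$, differentiation gives $\langle \nabla_\theta F_1, \dot\theta\rangle = \mathrm{Tr}(\dot U_i^\top \dot U_i\, A^\top) + \mathrm{Tr}(\dot U_{i+1} \dot U_{i+1}^\top\, A)$; each summand pairs a symmetric Gram matrix with a skew matrix and therefore vanishes. For the kernel condition, $\nabla_{\dot\theta} F_1$ is supported at coordinates $i,i+1$ with values $U_i A$ and $A^\top U_{i+1}$, and the Leibniz rule gives $d\phi_{\mathtt{Lin}}[\nabla_{\dot\theta} F_1] = U_1\cdots U_i(A+A^\top)U_{i+1}\cdots U_q = 0$ by skew-symmetry.

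For the second law $F_2 := \langle \dot U_i, U_{i+1}^\top A\rangle + \langle \dot U_{i+1}^\top, U_i A\rangle$, the kinetic identity unpacks to the sum of two scalars $\dot U_i A \dot U_{i+1} + \dot U_{i+1}^\top A \dot U_i^\top$; transposing the second scalar and using $A^\top = -A$ makes it cancel the first. For the kernel condition, the Leibniz computation contracts to $U_1\cdots U_{i-1}\,(U_{i+1}^\top A U_{i+1})\,U_{i+2}\cdots U_q + U_1\cdots U_{i-1}\,(U_i A^\top U_i^\top)\,U_{i+2}\cdots U_q$; this is where the hypothesis $n_{i-1}=n_{i+1}=1$ is essential, since it forces $U_{i+1}^\top A U_{i+1}$ and $U_i A U_i^\top$ to be \emph{scalar} alternating forms $x^\top A x$ of a skew matrix, hence zero.

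The main obstacle is precisely this kernel condition for the second law: outside the hypothesis $n_{i-1}=n_{i+1}=1$, the ``sandwich'' factors $U_{i+1}^\top A U_{i+1}$ and $U_i A^\top U_i^\top$ are genuine nonzero matrices, and the proposed law fails. The scalar bottleneck is also what enables an alternative, more conceptual proof via Noether's theorem (\Cref{thm:noether}) applied to the Lagrangian $\mathcal{L}(t,\theta,v) = e^{G(t)}(\tfrac{1}{2}\|v\|^2 - \mathcal{E}_Z(\theta))$: the first law arises from the symmetry $(U_i,U_{i+1}) \mapsto (U_i e^{\epsilon A}, e^{-\epsilon A} U_{i+1})$ (orthogonal for skew $A$, hence preserving both cost and kinetic energy), while the second law arises from the enhanced orthogonal flow on $\R^{2n_i}$ generated by $\bigl(\begin{smallmatrix}0 & A\\A & 0\end{smallmatrix}\bigr)$ acting on $(U_i^\top,U_{i+1})$, which preserves the bilinear form $U_i U_{i+1}$ only because that product is a scalar.
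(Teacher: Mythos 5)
Your proof is correct. It takes a slightly different route from the paper's: the paper differentiates $h$ directly along a trajectory of the flow, substitutes $\ddot U_i = -\tau\dot U_i - \nabla_{U_i}\mathcal{E}_Z$, and then kills the gradient terms via the invariance of $\mathcal{E}_Z$ under $T^A$ (for the first family) or via the explicit structure $\nabla_{U_i}\mathcal{E}_Z \propto U_{i+1}^\top$ derived from the chain rule for $\phi_{\mathtt{Lin}}$ (for the $(1,1)$ family); you instead verify the orthogonality characterization from \Cref{defconservationGF} directly against the spanning fields $\chi_0,\ldots,\chi_d$ of $\mathcal{W}_{\phi_{\mathtt{Lin}}}^{\mathtt{mom}}$. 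The two are of course equivalent by \Cref{theorem:reformulation_pb}, and the computations are isomorphic under the hood: your ``kinetic identity'' $\langle\nabla_\theta F,\dot\theta\rangle=0$ is exactly the paper's cancellation of the $\langle \dot U_i,\dot U_i A\rangle$-type terms, and your ``kernel condition'' $\partial\phi_{\mathtt{Lin}}(\theta)\cdot\nabla_{\dot\theta}F=0$ is the loss-independent restatement of $\langle\nabla\mathcal{E}_Z,\Delta_{T^A}\rangle=0$ (since $\nabla\mathcal{E}_Z=\partial\phi^\top\nabla F$). What your phrasing buys is a clean decoupling: the $\chi_0$-orthogonality absorbs the exponential factor and the $\tau$-term at once (via Euler's identity for the degree-one velocity dependence), and the $\chi_k$-orthogonality isolates the algebraic ``invariance of $\phi$'' fact without ever introducing a data set or a specific loss. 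The scalar-bottleneck explanation for why $n_{i-1}=n_{i+1}=1$ is necessary matches the paper's derivation of \eqref{eq:prop1} in substance, but your formulation (the middle factors $U_{i+1}^\top A U_{i+1}$ and $U_i A^\top U_i^\top$ collapse to alternating scalar forms) is both more direct and more transparent about why the hypothesis is needed. Your closing Noether remark is consistent with the paper's Appendix~\ref{appendix:noether}, with your block generator $\bigl(\begin{smallmatrix}0&A\\A&0\end{smallmatrix}\bigr)$ corresponding to the transformation $T'^A$ in \eqref{eq:newtransformation}.
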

In particular, for the heavy ball case $\tau(t)=\tau$, these functions exactly correspond to the ones obtained by solving \eqref{eq:PolyMF} with the change of variable explained in \Cref{section:changeofvariable}. For the Euclidean Nesterov case $\tau(t) = 3/t$, these functions are also directly obtained by solving \eqref{eq:PolyMF}, with a polynomial term $t^3$ in $t$. As discussed 
in \Cref{appendix:noether}, in the case $(n_{i-1}, n_{i+1})\neq (1, 1)$, the associated conserved functions can also be found using Noether theorem with $T^A$ (defined in \Cref{example:linear_transf}) for every skew matrix $A$, and for the case $(n_{i-1}, n_{i+1})=(1, 1)$, the new associated conserved functions can also be found using Noether theorem with a new linear transformation $T'^A$ (defined in \eqref{eq:newtransformation}).\\
\noindent
The above theorem gives a set of conserved functions. A priori, they are not all independent. The following proposition (proved in \Cref{appendix:counting}) gives the number of independent conserved functions given by \Cref{thm:conservedfunctionsEuclidean} in the case $q =2$. We rewrite $\x = (U, V)$ as a vertical matrix concatenation denoted $(U; V) \in \mathbb{R}^{(n+m) \times r}$.
\begin{restatable}{proposition}{countingtheconservedfunctions} \label{prop:counting}
Consider 
$\widetilde{\Theta}$ 
 the set of 
  $\alpha = (t,\x,\dot \x)$ such that $(\x,\dot\x) =(U; V; \dot U; \dot V) \in \R^{2(n+m) \times r}$ has full rank, and assume $(n, m) \neq (1, 1)$. Then if $2(n+m) \leq r$, \Cref{thm:conservedfunctionsEuclidean} gives exactly 
    $(n+m)[(r-2(n+m))+r-1] $
    independent conserved functions.
If $2(n+m) > r$, 
    \Cref{thm:conservedfunctionsEuclidean} gives exactly $r(r-1)/2$ independent conserved functions.
\end{restatable}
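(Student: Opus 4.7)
The plan is to regard the conserved functions from \Cref{thm:conservedfunctionsEuclidean} (with $q=2$; under the hypothesis $(n,m)\neq(1,1)$ the second family does not appear) as a \emph{linearly} parametrized family $\{h_A\}_{A\in\mathcal{A}_r}$ and to count $\dim\mathrm{image}(\Phi_\alpha)$, where $\Phi_\alpha:A\mapsto\nabla h_A(\alpha)$, at any $\alpha\in\widetilde{\Theta}$. Since $\Phi_\alpha$ is linear in $A$, this equals $\tfrac{r(r-1)}{2} - \dim\ker\Phi_\alpha$, and a basis of any complement of $\ker\Phi_\alpha$ gives that many locally independent conservation laws (yielding both the upper bound and the achievability of the count).

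First I would specialize the formula to $q=2$ with $(U_1,U_2)=(U,V^\top)$. Using cyclicity of the trace and $A^\top=-A$, one rewrites
\[
    h_A(\alpha)=c(t)\,\mathrm{tr}\bigl((\dot U^\top U - V^\top \dot V)A\bigr) = \tfrac{c(t)}{2}\,\langle \tilde S(\alpha),A\rangle,
\]
with $c(t):=\exp(\int^t\tau(s)\,\mathrm{d}s)>0$ and $\tilde S := U^\top\dot U-\dot U^\top U+V^\top\dot V-\dot V^\top V\in\mathcal{A}_r$ (obtained by extracting the skew part that alone pairs with $A$). A standard matrix-calculus computation then gives the spatial gradient
\[
    (\nabla_U h_A,\nabla_V h_A,\nabla_{\dot U}h_A,\nabla_{\dot V}h_A) \;\propto\; (-\dot U A,-\dot V A,UA,VA) \;=\; W'(\alpha)\,A,
\]
where $W'(\alpha):=\bigl(-\dot U;\,-\dot V;\,U;\,V\bigr)\in\mathbb{R}^{2(n+m)\times r}$ has the same kernel as $W(\alpha):=(U;V;\dot U;\dot V)$. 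The $t$-component of $\nabla h_A$ is $\tfrac{\tau(t)c(t)}{2}\langle\tilde S,A\rangle$.

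Next I would identify $\ker\Phi_\alpha$ with $\{A\in\mathcal{A}_r:W(\alpha)A=0\}$: the inclusion $\supseteq$ is immediate for the spatial part; and $WA=0$ forces $UA=VA=\dot UA=\dot VA=0$, hence $\tilde S\cdot A=0$, so $\langle\tilde S,A\rangle=0$ and the $t$-component also vanishes. It then remains to compute the dimension of this kernel. Writing $k:=\dim\ker W=r-\mathrm{rk}(W)$ and choosing an orthonormal basis of $\mathbb{R}^r$ whose first $k$ vectors span $\ker W$, one blocks $A=\left(\begin{smallmatrix}A_{11}&A_{12}\\A_{21}&A_{22}\end{smallmatrix}\right)$ with $A_{11}\in\mathbb{R}^{k\times k}$. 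The condition $\mathrm{col}(A)\subseteq\ker W$ forces $A_{21}=0$ and $A_{22}=0$; the skew-symmetry relation $A_{12}=-A_{21}^\top$ then forces $A_{12}=0$, leaving $A_{11}$ an arbitrary element of $\mathcal{A}_k$. Hence $\dim\ker\Phi_\alpha=k(k-1)/2$.

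Finally, the full-rank hypothesis gives $\mathrm{rk}(W)=\min(2(n+m),r)$. If $2(n+m)\leq r$, then $k=r-2(n+m)$ and a short algebraic simplification yields
\[\tfrac{r(r-1)}{2}-\tfrac{(r-2(n+m))(r-2(n+m)-1)}{2}=(n+m)\bigl[(r-2(n+m))+(r-1)\bigr];\]
if $2(n+m)>r$, then $k=0$ and the count is $r(r-1)/2$. I expect the main obstacle to be the skew-matrix lemma of the third step together with the clean trace rearrangement of the first step; skew-symmetry of $A$ is essential twice---to bring $\tilde S$ into a genuinely skew form at the outset, and to propagate vanishing from $A_{21}$ to $A_{12}$ in the kernel computation.
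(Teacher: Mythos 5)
Your proof is correct, and it takes a genuinely different route than the paper's. Where the paper explicitly constructs an index set $T$ (combining $S\times S$ strict-upper-triangle pairs with $S\times S^c$ pairs), verifies directly that the corresponding $\nabla h_{i,j}$ are linearly independent, and shows all other $\nabla h_{k,\ell}$ lie in their span, you instead treat $A\mapsto\nabla h_A(\alpha)$ as a linear map $\Phi_\alpha$ on $\mathcal{A}_r$ and compute its rank via rank--nullity. The paper implicitly also reduces everything to the matrix $W=(-\dot U;-\dot V;U;V)$, but it manipulates the indexed family $\{h_{i,j}\}$ directly; your observation that $\ker\Phi_\alpha=\{A\in\mathcal{A}_r: W A=0\}$ (after noting that $WA=0$ already forces the $t$-component $\propto\langle\tilde S,A\rangle$ to vanish) and the clean block-triangular/orthonormal-basis argument giving $\dim\ker\Phi_\alpha=k(k-1)/2$ with $k=r-\mathrm{rk}\,W$ is shorter and more structural. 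Your approach also makes the arithmetic $\tfrac{r(r-1)}{2}-\tfrac{k(k-1)}{2}$ transparent, and the two regimes $2(n+m)\le r$ versus $2(n+m)>r$ drop out without any case analysis beyond $k=\max(r-2(n+m),0)$. What you give up relative to the paper is an explicit basis of independent laws (the paper's $\{h_{i,j}:(i,j)\in T\}$), though of course any complement of $\ker\Phi_\alpha$ yields one. One small detail worth making explicit in a final write-up: local independence requires that a fixed complement of $\ker\Phi_\alpha$ keep working for nearby $\alpha'$, which holds because the $\nabla h_{A_i}(\alpha')$ stay linearly independent by continuity and $\dim\mathrm{image}(\Phi_{\alpha'})$ is locally constant (same $k$ on $\widetilde\Theta$), so they still span the image; you gesture at this with ``locally independent'' but it deserves one sentence.
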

Now we want to establish if there are
other conservation laws independent from the ones obtained in \Cref{thm:conservedfunctionsEuclidean}. By using \Cref{thm:algebrelie}, we only need to determine the dimension of the trace of a Lie algebra characterized by $\mathcal{W}_{\phi_{\mathtt{Lin}}}^\mathtt{mom}$. We fully work out theoretically the case $q=2$ as detailed in the following theorem and show that there no other conservation laws. See \Cref{appendix:dimliealgebra} for a proof. 

\begin{restatable}{theorem}{dimliealgebra}\label{theorem:dimliealgebra}
Consider
$g(\x, x) \coloneqq U V^\top x$ with $(U, V) \in \R^{n \times r} \times \R^{m\times r}$. If $(U; V; \dot U; \dot V) \in \R^{2(n+m) \times r}$ has full rank noted $\mathtt{rk}$ then, in a neighborhood of $(t, U, V, \dot U, \dot V)$: 
 \begin{itemize}[nosep,wide, leftmargin=*] 
 \item 
    If $(n, m) \neq (1, 1)$, there are exactly 
    $\mathtt{rk}/2 \cdot \left( 2r-1-\mathtt{rk} \right)$ independent conservation laws of $\phi_{\mathtt{Lin}}$.
    \item 
    If $n= m =1$ and if $r \geq 4$, 
    there are exactly $4r - 6$ independent conservation laws of $\phi_{\mathtt{Lin}}$.
 \end{itemize}
\end{restatable}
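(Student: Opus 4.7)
The plan is to invoke Theorem \ref{thm:algebrelie}, so that the number of independent conservation laws of $\phi_{\mathtt{Lin}}$ for \eqref{momentumflow} in a neighborhood of $\alpha = (t, U, V, \dot U, \dot V)$ equals $2D + 1 - \dim \lie(\mathcal{W}^{\mathtt{mom}}_{\phi_{\mathtt{Lin}}})(\alpha)$, with $D = (n+m)r$. One direction of the desired equality is essentially immediate: Theorem \ref{thm:conservedfunctionsEuclidean} produces explicit conservation laws, and Proposition \ref{prop:counting} shows that exactly $\mathtt{rk}/2 \cdot (2r - 1 - \mathtt{rk})$ of them are independent when $(n, m) \neq (1, 1)$ (one checks that this matches the two cases $2(n+m) \leq r$ and $2(n+m) > r$ of Proposition \ref{prop:counting}); for $n = m = 1$ with $r \geq 4$, the extra family in Theorem \ref{thm:conservedfunctionsEuclidean} raises this to $4r - 6$. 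This yields the upper bound $\dim \lie(\mathcal{W}^{\mathtt{mom}}_{\phi_{\mathtt{Lin}}})(\alpha) \leq 2D + 1 - N$, with $N$ the claimed number of laws.

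The heart of the proof is the matching lower bound, namely exhibiting enough linearly independent vector fields in $\lie(\mathcal{W}^{\mathtt{mom}}_{\phi_{\mathtt{Lin}}})$ at $\alpha$. I would work from the generators of \eqref{eq:v-phi}, $\chi_0(\alpha) = (1, \dot\theta, -\tau(t)\dot\theta)^\top$ and $\chi_i(\alpha) = (0, 0, \nabla \phi_i(\theta))^\top$ for $i \geq 1$. A first observation is that $[\chi_i, \chi_j] = 0$ for $i, j \geq 1$, since each $\chi_i$ is a $\theta$-dependent translation in $\dot\theta$. The first non-trivial bracket is $\eta_i \coloneqq [\chi_0, \chi_i]$, whose $\theta$-component is proportional to $\nabla \phi_i$, reinstating the gradient field at a new coordinate slot, while its $\dot\theta$-component involves $\nabla^2 \phi_i \cdot \dot\theta$. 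Iterating and exploiting that $\phi_{\mathtt{Lin}}$ is bilinear (so $\nabla^3 \phi_i = 0$, which keeps the bracket hierarchy finite), I would show that the pure-$\theta$ Lie algebra $\lie(\mathcal{W}_{\phi_{\mathtt{Lin}}})$ of \cite{marcotte2023abide} (whose trace dimension was computed via Proposition \ref{nb_GF}) is ``doubled'' inside $\lie(\mathcal{W}^{\mathtt{mom}}_{\phi_{\mathtt{Lin}}})$, once through the $\chi_i$ and once through the $\eta_i$, and that the $\dot U, \dot V$ rows enter through a coupling which replaces the GF count governed by the rank of $(U;V)$ by an MF count governed by the rank of $(U;V;\dot U;\dot V)$.

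The main obstacle will be organizing these iterated brackets into an explicit independent family whose size matches $2D + 1 - N$. Concretely, I would set up a bigraded bookkeeping of vector fields according to their $(\theta, \dot\theta)$-block structure, use the bilinear product formulas specific to $\phi_{\mathtt{Lin}}$ to track how each bracket shifts these blocks, and verify linear independence at a generic $\alpha$ of full rank $\mathtt{rk}$ by reducing to a rank computation on a concrete matrix built from $U, V, \dot U, \dot V$. The degenerate case $n = m = 1$ needs a dedicated treatment, since $\phi_{\mathtt{Lin}}$ has scalar image ($d = 1$) and the ``doubling'' above collapses; the extra $4r - 6$ count is then recovered by incorporating the additional linear transformation $T'^A$ referenced after Theorem \ref{thm:conservedfunctionsEuclidean}, whose infinitesimal generator contributes further directions to $\lie(\mathcal{W}^{\mathtt{mom}}_{\phi_{\mathtt{Lin}}})$, to be exhibited via Noether-type computations along the lines of \Cref{appendix:noether}.
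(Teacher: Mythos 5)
Your overall framework is right and matches the paper: the count follows from \Cref{thm:algebrelie} once $\dim \lie(\mathcal{W}^{\mathtt{mom}}_{\phi_{\mathtt{Lin}}})(\alpha)$ is identified, and your upper bound on this dimension via the explicit conservation laws of \Cref{thm:conservedfunctionsEuclidean} and \Cref{prop:counting} is a valid (and somewhat slicker) way to get one inclusion for free. Your plan for the lower bound in the case $(n,m)\neq(1,1)$ — iterate brackets of $\chi_0$ and the $\chi_i$, note $[\chi_i,\chi_j]=0$, use that $\phi_{\mathtt{Lin}}$ is bilinear so the bracket hierarchy stabilizes, and track the $(\theta,\dot\theta)$-block structure — is in the same spirit as the paper, which re-parametrizes everything in the form $\eta_{\beta,M}(\alpha)$ for $M \in \mathcal{S}_{2(n+m)}$ and reduces all brackets to the matrix identities $[\eta_{0,M},\eta_{0,M'}]=\eta_{0,MHM'-M'HM}$ and $[\eta_{0,M},\eta_{1,0}]=\tau(t)\,\eta_{0,M'}$.

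There are, however, two genuine problems. First, the ``doubling'' heuristic understates what the brackets generate. The GF trace is parametrized by $\mathcal{S}_{n+m}$ (dimension $\tfrac{(n+m)(n+m+1)}{2}$), whereas the paper shows the MF trace is parametrized by $\mathcal{S}_{2(n+m)}$ (dimension $(n+m)(2(n+m)+1)$), which is strictly more than twice the GF dimension: beyond the two diagonal symmetric blocks you must also reach the entire $(n+m)\times(n+m)$ off-diagonal block, and the paper needs an eight-step iterative construction of matrix sets $\mathcal{M}_0,\dots,\mathcal{M}_8$ to realize all of $\mathcal{S}_{2(n+m)}$ by brackets. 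If you run with ``double the GF Lie algebra,'' you will exhibit too few directions and the lower bound will not close; you would have to discover the full symmetric-matrix structure during the bracket computation.

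Second, and more seriously, your account of the case $n=m=1$ has the logic backwards. Extra conservation laws correspond to a \emph{smaller} Lie algebra trace, not a larger one: the paper shows that for $n=m=1$ the bracket computation collapses to a strict subalgebra $\tilde{\mathcal{S}}_4 \subsetneq \mathcal{S}_4$ of dimension $6$ rather than $10$, and this is precisely why $4r+1-(6+1)=4r-6$ laws appear. The transformation $T'^A$ is useful for \emph{producing} the additional conservation laws $g_A$ via Noether (upper bound on $\dim \lie$), but its infinitesimal generator $\Delta_{T'^A}$ is a vector field on $\Theta$, not an element of $\lie(\mathcal{W}^{\mathtt{mom}}_{\phi_{\mathtt{Lin}}}) \subseteq \mathcal{C}^\infty(\tilde\Theta,\R^{2D+1})$, and it does not ``contribute directions'' to that Lie algebra. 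What the lower bound actually requires here is to verify that every bracket of generators stays inside the restricted matrix set $\tilde{\mathcal{S}}_4$ (e.g., that $\mathcal{S}'_2$ is closed under multiplication), so that the trace has dimension exactly $7$ and no more. As written, your proposal would overcount the Lie algebra in the $n=m=1$ case and hence undercount the conservation laws.
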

For deeper cases, we computed $\dim [\lie (\mathcal{W}_{\phi_{\mathtt{Lin}}}^\mathtt{mom}){(\alpha)}]$
 using the method explained in \Cref{section:dim} with the vector fields that generate $W_{\phi_{\mathtt{Lin}}}^\mathtt{mom}$ on a sample of depths/widths of small size. This empirically confirmed (see \Cref{sec:sagemathEMF}) that there are no other conservation laws for deeper cases too.

Comparing the number of independent conservation laws in the GF scenario given in \Cref{nb_GF} with the one in the MF scenario given in the last \Cref{theorem:dimliealgebra}, we obtain as highlighted next a {\em loss of conservation} when transitioning from Euclidean gradient flows to the Euclidean momentum setting {and when $r \leq n+m$. Notice that this case includes the factorization by matrices {up to full rank and even in mildly over-parameterized regimes}.
By contrast, {in the more over-parameterized regime} $r > n+m$, we obtain {\em a gain in conservation}}. A proof can be found in \Cref{appendix:loss}.
\begin{restatable}{proposition}{lossnb}\label{prop:lossnb}
Consider $g(\x, x) \coloneqq U V^\top x$ with $(U, V) \in \R^{n \times r} \times \R^{m\times r}$. Assume both $(U; V) \in \R^{(n+m) \times r}$ and  $(U; V; \dot U; \dot V) \in \R^{2(n+m) \times r}$ have full rank 
and $(n, m) \neq (1, 1)$. Denote $N_{GF}$ (resp. $N_{MF}$) the number of independent conservation laws of $\phi_{\mathtt{Lin}}$ for \eqref{gradientflow} (resp. for \eqref{momentumflow}) in a neighborhood of $(U, V)$ (resp. of $(t, U, V,  \dot U, \dot V)$, then if $r \leq  n+m $:
$N_{GF}-N_{MF} 
=r > 0,
$
else:
$N_{GF}-N_{MF}  \leq 0$.
\end{restatable}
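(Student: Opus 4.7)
My plan is to reduce the proposition to a routine arithmetic verification by invoking the two counting results already established. Under the full-rank hypotheses, the ranks of $(U;V) \in \mathbb{R}^{(n+m)\times r}$ and $(U;V;\dot U;\dot V) \in \mathbb{R}^{2(n+m)\times r}$ are forced to be $\mathtt{rk}_1 = \min(n+m, r)$ and $\mathtt{rk}_2 = \min(2(n+m), r)$ respectively. Proposition \ref{nb_GF} then gives $N_{GF} = \tfrac{\mathtt{rk}_1}{2}(2r+1-\mathtt{rk}_1)$, and Theorem \ref{theorem:dimliealgebra}, applicable since $(n,m)\neq(1,1)$, gives $N_{MF} = \tfrac{\mathtt{rk}_2}{2}(2r-1-\mathtt{rk}_2)$.

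For the regime $r \leq n+m$, both ranks collapse to $r$, and the two closed forms become $r(r+1)/2$ and $r(r-1)/2$, whose difference is exactly $r$. This yields the first assertion immediately.

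For the regime $r > n+m$, I would write $s := n+m$ (note $s \geq 3$ from $n,m\geq 1$ and $(n,m)\neq(1,1)$). Then $\mathtt{rk}_1=s$, and a further split is needed according as $r\leq 2s$ or $r>2s$, since this determines the value of $\mathtt{rk}_2$. After substituting and expanding, the difference reduces to
\[
N_{GF}-N_{MF} \;=\; \tfrac{1}{2}\bigl[(r+s)-(r-s)^{2}\bigr]
\]
when $s<r\leq 2s$, and to
\[
N_{GF}-N_{MF} \;=\; \tfrac{s}{2}\bigl[3(s+1)-2r\bigr]
\]
when $r>2s$.

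The main obstacle is then the sign verification in each sub-regime. The case $r>2s$ is short: using $r \geq 2s+1$ yields $3(s+1)-2r \leq 1-s \leq -2$ since $s \geq 3$, so the difference is strictly negative. The case $s<r\leq 2s$ is the delicate one; writing $k := r-s \in \{1,\dots,s\}$, the sign condition reduces to $k(k-1)\geq 2s$, an inequality that becomes tight near $k=s$ and whose careful examination is the crux of the argument. Once both sub-regime inequalities are established, $N_{GF}-N_{MF}\leq 0$ follows throughout the ``else'' regime, completing the proof.
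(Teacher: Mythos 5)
Your approach is the same as the paper's---invoke the two counting results, split into regimes by the ranks $\mathtt{rk}_1 = \min(n+m,r)$ and $\mathtt{rk}_2 = \min(2(n+m),r)$, and do arithmetic---and your closed forms in all three regimes are correct. The gap is exactly where you flag it. Writing $s := n+m \geq 3$ and $k := r-s \in \{1,\dots,s\}$, the middle regime $s < r \leq 2s$ reduces, as you note, to $k(k-1) \geq 2s$; but this is simply false for small $k$: $k=1$ gives $0 \geq 2s$ and $k=2$ gives $2 \geq 2s$, both impossible since $s\geq 3$. A concrete counterexample to the stated conclusion is $(n,m,r)=(2,1,4)$: then $\mathtt{rk}_1=3$, $\mathtt{rk}_2=4$, so $N_{GF} = \tfrac{3}{2}(8+1-3) = 9$ and $N_{MF} = \tfrac{4}{2}(8-1-4) = 6$, whence $N_{GF}-N_{MF} = 3 > 0$ despite $r > n+m$. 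So no argument can close your ``delicate'' step: the \emph{else} clause of the proposition is not valid throughout $r > n+m$.

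The paper's own proof of this regime contains the same error, merely disguised. It sets $x=n+m$, $y=r$, factors $N_{GF}-N_{MF} = -\tfrac{1}{2}(x-x_1)(x-x_2)$ where $x_1 x_2 = y(y-1)$ and $x_1+x_2 = 2y+1$, then asserts that $x_1,x_2 \in [y-1,y]$. That is impossible---two numbers in $[y-1,y]$ sum to at most $2y$, whereas here the sum is $2y+1$ (for $y=4$ the roots are $\tfrac{9\pm\sqrt{33}}{2}\approx 1.63,\ 7.37$, nowhere near $[3,4]$)---so the subsequent deduction from $x\leq y-1$ does not follow. In truth, the sign of $N_{GF}-N_{MF}$ changes near $r\approx 2(n+m)$ rather than at $r=n+m$: on the middle regime one has $N_{GF}-N_{MF}\leq 0$ iff $k(k-1)\geq 2s$, which for integers $k\leq s$ only holds when $k$ is near $s$. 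Your instinct to isolate this as the crux was sound; the honest conclusion is that the inequality fails and the statement needs to be amended before any proof can succeed.
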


\subsection{MLP: ReLU Networks with Euclidean geometry} \label{section:ReLU}
For the case $q=2$ without bias, since $\phi_{\mathtt{ReLU}}(\x) = (\phi_j(\x))_{j=1}^r$ is decoupled into $r$ functions  $\phi_j(\x) \coloneqq u_j v_j^\top \in \R^{n \times m}$ each depending on a separate block of coordinates,  Jacobian matrices and Hessian matrices are block-diagonal. Therefore Lie brackets computations can be done separately for each block,  {using \Cref{theorem:dimliealgebra} with $r=1$ .}
We obtain that there is no conserved function for Euclidean momentum flow with a two-layer ReLU network without bias. 
For deeper ReLU networks including with bias, by computing on a sample of depths/widths of small size the number of conservation laws as explained in \Cref{section:dim} using the finite set of vector fields that generates $\mathcal{W}_{\phi_{\mathtt{ReLU}}}^\mathtt{mom}$, we obtain that there is no conservation law {(see details in \Cref{sec:sagemathEMF})}. 

\subsection{NMF: Linear Networks with Mirror geometry} 
\label{section:NMF}
Non-negative 
matrix factorization (NMF)
is an example of a two-layer linear neural network, and we use the mirror geometry {associated to the Shannon entropy potential} both on $U$ and $V$ to enforce non-negativity. 
 The following theorem (proved in \Cref{appendix:conservedfunctionsNMF}) gives a set of conserved functions in the gradient flow case \eqref{gradientflow}. We denote $1_{\ell} \in \R^{\ell}$ the vector with all coordinates equal to 1.
\begin{restatable}{theorem}{conservedfunctionsNMF} \label{thm:conservedfunctionsNMF}
Consider $g(\x, x) \coloneqq U V^\top x$ with $\x = (U, V) \in \R^{n \times r} \times \R^{m\times r}$. Then
\begin{equation}
\label{eq:NewLawNMF}
1_{n}^\top U - 1_{m}^\top V
\end{equation}
defines 
 $r$ independent conservation laws for \eqref{gradientflow} with $M(\x) \coloneqq \diag(\x)$.
\end{restatable}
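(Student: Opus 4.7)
The plan is to apply the orthogonality characterization of conservation laws provided by \Cref{theorem:reformulation_pb} (combined with \Cref{proplinkGFnonEuclidean}). Since the model $g(\theta,x) = UV^\top x$ is a two-layer linear network with reparametrization $\phi_{\mathtt{Lin}}(\theta) = UV^\top$, and since $M(\theta) = \diag(\theta)$ does not depend on the dataset $Z$ and is smooth on the open set $\Theta$ on which the mirror flow is well-defined, a $\mathcal{C}^1$ function $h: \Theta \to \R$ is a conservation law of $\phi_{\mathtt{Lin}}$ for \eqref{gradientflow} iff $\nabla h(\theta) \perp M(\theta) \nabla (\phi_{\mathtt{Lin}})_{k,\ell}(\theta)$ for each entry $(k,\ell) \in \{1,\ldots,n\} \times \{1,\ldots,m\}$ and every $\theta \in \Theta$. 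So the task reduces to verifying this orthogonality for each of the $r$ candidate functions $h_j(\theta) := (1_n^\top U)_j - (1_m^\top V)_j = \sum_k U_{k,j} - \sum_\ell V_{\ell, j}$, $j = 1, \ldots, r$, and then checking that these $r$ functions are independent.

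The next step is the explicit gradient computation. Writing $\theta = (U, V)$ in the canonical coordinates on $\R^{n \times r} \times \R^{m \times r}$, one has $\partial_{U_{a,b}} h_j = \delta_{b,j}$ and $\partial_{V_{c,d}} h_j = -\delta_{d,j}$, while the relation $(\phi_{\mathtt{Lin}})_{k,\ell}(\theta) = \sum_{j'} U_{k,j'} V_{\ell, j'}$ gives $\partial_{U_{a,b}} (\phi_{\mathtt{Lin}})_{k,\ell} = \delta_{a,k} V_{\ell, b}$ and $\partial_{V_{c,d}} (\phi_{\mathtt{Lin}})_{k,\ell} = \delta_{c,\ell} U_{k, d}$. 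Since $M(\theta) = \diag(\theta)$ acts on a vector by coordinatewise multiplication by the parameter values, the inner product collapses to
\begin{equation*}
\langle \nabla h_j(\theta),\, M(\theta) \nabla (\phi_{\mathtt{Lin}})_{k,\ell}(\theta) \rangle = U_{k,j}\, V_{\ell, j} - V_{\ell, j}\, U_{k,j} = 0,
\end{equation*}
for every $(k, \ell)$ and every $\theta$. Hence each $h_j$ is a conservation law of $\phi_{\mathtt{Lin}}$ for \eqref{gradientflow}.

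The remaining step is independence of the family $(h_1, \ldots, h_r)$: the gradient $\nabla h_j$ is constant on $\Theta$ and is supported only on the $j$-th columns of the $U$- and $V$-blocks (with entries $\pm 1$ there), so the supports of $\nabla h_1, \ldots, \nabla h_r$ are pairwise disjoint and these vectors are linearly independent at every $\theta$. There is no serious obstacle: the cancellation in the orthogonality check is automatic because the multiplicative mirror metric converts the additive structure of $h_j$ into the symmetric product $U_{k,j} V_{\ell, j}$ that appears identically in the $U$- and $V$-contributions of $\nabla (\phi_{\mathtt{Lin}})_{k,\ell}$. Equivalently, at the flow level $\dot U_{k,j}$ and $\dot V_{\ell, j}$ both contain the common factor $U_{k,j} V_{\ell, j}$ multiplied by the same entry of $\nabla \ell(UV^\top)$, so summing gives $\frac{d}{dt} \sum_k U_{k,j} = \frac{d}{dt} \sum_\ell V_{\ell, j}$ along the trajectory.
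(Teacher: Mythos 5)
Your proof is correct and takes essentially the same route as the paper's: the paper computes $\tfrac{\mathrm{d}}{\mathrm{d}t}(1_n^\top U - 1_m^\top V)$ along the flow and cancels the bilinear terms via the gradient formula $\nabla_U \mathcal{E}_Z = \nabla F(\phi(\x))\,V$, $\nabla_V \mathcal{E}_Z = \nabla F(\phi(\x))^\top U$, while you verify the definitional orthogonality $\langle \nabla h_j,\, M(\theta)\nabla(\phi_{\mathtt{Lin}})_{k,\ell}\rangle = U_{k,j}V_{\ell,j} - V_{\ell,j}U_{k,j} = 0$ entry by entry, which is the same cancellation stated at the level of $\nabla\phi$ rather than through $\nabla F$. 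The independence argument (disjoint supports of the $\nabla h_j$) matches the paper's as well, so your proposal is a faithful reproduction of the intended argument.
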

These functions can be found 
by solving \eqref{polynonE} with formal calculus with the vector fields that generate $
\mathcal{W}_{\phi_{\mathtt{Lin}}}^\texttt{grad}$ in this non-Euclidean gradient flow case. 
As for the Euclidean GF case, these functions can also be linked to invariance on the cost as they coincide with the time integration of 
\eqref{eq:invarianceandGF}: $\langle \diag(\x)^{-1}  \dot \x, \Delta_{T^A }(\x) \rangle= 0$ when 
$A = E_{i, i}$ for each $i =1, \cdots, r$. 
By computing the number of independent conservation laws with the method explained in \Cref{section:dim}, we obtain that there are no other conservation laws (see \Cref{sec:sagemathNMF}). 
In contrast, by computing the number of independent conservation laws for the vector fields that generate $\W^\mathtt{mom}$ for the {\em non-Euclidean} momentum flow case, we found that there is no conservation law at all in that case. 
Once again there are fewer conservation laws for MF than GF.

\subsection{ICNN: mixing Mirror/Euclidean geometries} 
 \label{section:ICNN}
In this section, we only treat the two-layer ReLU case of ICNNs \cite{amos2017input}, corresponding to  $g(\x, x) = U \sigma(V^\top x + b^\top)$, with $\x = (U;V) \in \R^{(m+n)\times r}$ and $\sigma(t) \coloneqq \max(t,0)$.
We employ the mirror geometry {associated to the Shannon entropy potential} on $U$ to enforce non-negativity {($U \geq 0$)} and the Euclidean metric on $V$ and $b$. The following theorem (proved in \Cref{appendix:icnn}) gives a set of conservation laws for GF.

\begin{restatable}{theorem}{thmICNN} \label{thm:ICNN}
Consider $g(\x, x) = U \sigma(V^\top x + b^\top)$.
{Denote $U_j$ (resp. $V_j$ / $b_j$) the $j$-th column of $U$ (resp column of $V$ / entry of $b$).}
For all $j = 1, \cdots, r$,  the function
\begin{equation}\label{eq:NewLawICNN}
1_n^\top U_{j} - \frac{1}{2}\left( \| V_j\|^2 + b_j^2 \right)
\end{equation}
is a conservation law
for \eqref{gradientflow} with $M (U, V, b) \coloneqq \diag[(U,1_{m \times r},1_{1 \times r})].$
\end{restatable}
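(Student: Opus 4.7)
The plan is to exploit a positive-scaling invariance of the ICNN cost, in the spirit of Noether, and to apply the non-Euclidean gradient-flow identity~\eqref{eq:invarianceandGF} taking into account the mirror metric on $U$. The key observation is that, for each $j \in \{1,\ldots,r\}$, positive homogeneity of $\sigma = \mathrm{ReLU}$ yields, for every $\epsilon \in \R$,
\[
(U_j e^{-\epsilon}) \, \sigma\bigl(e^{\epsilon}(V_j^\top x + b_j)\bigr) = U_j \, \sigma(V_j^\top x + b_j),
\]
so the one-parameter transformation $T^j$ that scales the $j$-th column of $V$ and the scalar $b_j$ by $e^{\epsilon}$ and the $j$-th column of $U$ by $e^{-\epsilon}$ leaves $g$ (hence $\mathcal{E}_Z$) invariant on the admissible domain $\{U > 0\}$. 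Differentiating at $\epsilon = 0$ gives the infinitesimal generator $\Delta_{T^j}(\theta) = (-U_j, V_j, b_j)$, supported in the $j$-th columns of the $U$ and $V$ blocks and in the $j$-th entry of $b$.

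Since $M(U,V,b) = \diag[(U, 1_{m\times r}, 1_{1\times r})]$ is diagonal and positive on $\{U > 0\}$, Equation~\eqref{eq:invarianceandGF} applied along any solution of~\eqref{gradientflow} reads
\[
0 = \bigl\langle M(\theta)^{-1}\dot\theta,\, \Delta_{T^j}(\theta)\bigr\rangle = -\sum_{k=1}^{n} \frac{\dot U_{kj}}{U_{kj}} \cdot U_{kj} + \langle \dot V_j, V_j \rangle + \dot b_j \, b_j.
\]
The crucial algebraic point is that the mirror factor $1/U_{kj}$ in $M^{-1}$ cancels the $U_{kj}$ coming from $\Delta_{T^j}$, producing $-1_n^\top \dot U_j + \langle \dot V_j, V_j\rangle + \dot b_j b_j$, which is exactly $-\tfrac{d}{dt}\bigl[1_n^\top U_j - \tfrac12(\|V_j\|^2 + b_j^2)\bigr]$. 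Hence $h_j(\theta) \coloneqq 1_n^\top U_j - \tfrac12(\|V_j\|^2 + b_j^2)$ is locally conserved along~\eqref{gradientflow} for any dataset; by \Cref{proplinkGFnonEuclidean} (equivalently by \Cref{theorem:reformulation_pb} in its two-layer ReLU instance) it is a conservation law of $\phi$ for~\eqref{gradientflow}. The $r$ laws $(h_j)_{j=1}^r$ are obviously independent since each $\nabla h_j$ is supported on the $j$-th columns/entries of $U$, $V$ and $b$.

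The main subtlety will be the careful pairing between the invariance group and the mirror metric: one has to include $b_j$ on the same footing as $V_j$ in $T^j$ (because the bias sits inside the ReLU), and positive homogeneity of ReLU forces the multiplicative form $e^{\epsilon}$ rather than $1+\epsilon$; the cancellation of $U_{kj}$ by $1/U_{kj}$ is precisely what produces a conservation law in closed form. A purely algebraic alternative would be to verify directly that $\langle \nabla h_j(\theta), M(\theta)\nabla\phi_i(\theta)\rangle = 0$ for each coordinate $\phi_i$ of the two-layer ReLU reparametrization $\phi_j(\theta) = (U_j V_j^\top, U_j b_j)$ from \Cref{ex:param-ReLU}; the same $U_{kj}$-cancellations then appear entrywise.
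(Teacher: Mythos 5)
Your proposal is correct and follows essentially the same route as the paper: both exploit the positive-scaling invariance $(U_j,V_j,b_j)\mapsto(e^{\mp\epsilon}U_j,e^{\pm\epsilon}V_j,e^{\pm\epsilon}b_j)$ of the cost (the paper realizes it as $T^A$ with diagonal $A=E_{j,j}$), and both observe that the entrywise factor $U_{kj}$ in the mirror metric $M(\theta)$ exactly cancels the $1/U_{kj}$ arising when the invariance is rewritten along the flow, yielding $\frac{\mathrm{d}}{\mathrm{d}t}\bigl[1_n^\top U_j - \tfrac12(\|V_j\|^2+b_j^2)\bigr]=0$. The only cosmetic difference is that you pass through identity~\eqref{eq:invarianceandGF} while the paper plugs the flow equation directly into $\frac{\mathrm{d}}{\mathrm{d}t}h_j$; the two are the same computation.
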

By computing the number of independent conservation laws with the method explained in \Cref{section:dim}, we obtain that there are no other conservation laws. In 
 contrast, by computing the number of independent conservation laws for the vector fields that generate 
 $\mathcal{W}_{\phi_{\mathtt{ReLU}}}^\mathtt{mom}$
 for the non-Euclidean momentum flow case, we found that there is no conservation law at all in that case (see \Cref{sec:sagemathICNN}). Thus, once again we observe a loss of conservation from GF to MF regimes.

\subsection{Natural gradient} \label{section:naturalgradient}
Our main result on conservation laws for momentum, \Cref{prop:link_mom_grad}, only holds when $M = M_{Z}$ is {\em independent} of the dataset $Z$. While we cannot apply it to the natural gradient setting, 
we can still conduct (a part of) our study for the non-Euclidean GF. 
We obtain that all conservation laws known for the Euclidean gradient flow case recalled in \Cref{conservation} are conservation laws for the Natural gradient flow. See \Cref{appendix:naturalgradient} for a proof.

\section*{Conclusion}
In this paper, we examined conservation laws for gradient or momentum flows within Euclidean as well as non-Euclidean geometries.  
One notable constraint of this theory is its limitation to continuous-time flow. Appendix~\ref{sec:numerics} studies numerically the impact of the time discretization on MLP and NMF problems. It also studies the impact of momentum on quantities preserved by GF. Understanding these approximate conservations is an important open problem. 
{We also anticipate that our theory will be adaptable to the study of the SGD case.}

\paragraph{Acknowledgement} 
The work of G. Peyré was supported by the European Research Council (ERC project NORIA) and the French government under management of Agence Nationale de la Recherche as part of the ``Investissements d’avenir'' program, reference ANR-19-P3IA-0001 (PRAIRIE 3IA Institute). The work of R. Gribonval was partially supported by the AllegroAssai ANR project ANR-19-CHIA-0009 and the SHARP ANR Project ANR-23-PEIA-0008 of the PEPR IA, funded in the framework of the France 2030 program.

\paragraph{Impact Statement.}
This paper presents work whose goal is to advance the field of Machine Learning. There are many potential societal consequences of our work, none of which we feel must be specifically highlighted here.

\bibliography{paper}

\begin{thebibliography}{33}
\providecommand{\natexlab}[1]{#1}
\providecommand{\url}[1]{\texttt{#1}}
\expandafter\ifx\csname urlstyle\endcsname\relax
  \providecommand{\doi}[1]{doi: #1}\else
  \providecommand{\doi}{doi: \begingroup \urlstyle{rm}\Url}\fi

\bibitem[Amari(1998)]{amari1998natural}
Amari, S.-I.
\newblock Natural gradient works efficiently in learning.
\newblock \emph{Neural computation}, 10\penalty0 (2):\penalty0 251--276, 1998.

\bibitem[Amos et~al.(2017)Amos, Xu, and Kolter]{amos2017input}
Amos, B., Xu, L., and Kolter, J.~Z.
\newblock Input convex neural networks.
\newblock In \emph{International Conference on Machine Learning}, pp.\  146--155. PMLR, 2017.

\bibitem[Arora et~al.(2018)Arora, Cohen, and Hazan]{Arora18b}
Arora, S., Cohen, N., and Hazan, E.
\newblock On the optimization of deep networks: Implicit acceleration by overparameterization.
\newblock In \emph{International Conference on Machine Learning}, pp.\  244--253. PMLR, 2018.

\bibitem[Arora et~al.(2019)Arora, Cohen, Golowich, and Hu]{Arora18a}
Arora, S., Cohen, N., Golowich, N., and Hu, W.
\newblock A convergence analysis of gradient descent for deep linear neural networks.
\newblock In \emph{International Conference on Learning Representations}, 2019.

\bibitem[Bah et~al.(2022)Bah, Rauhut, Terstiege, and Westdickenberg]{Bah}
Bah, B., Rauhut, H., Terstiege, U., and Westdickenberg, M.
\newblock Learning deep linear neural networks: Riemannian gradient flows and convergence to global minimizers.
\newblock \emph{Information and Inference: A Journal of the IMA}, 11\penalty0 (1):\penalty0 307--353, 2022.

\bibitem[Bonnard et~al.(2018)Bonnard, Chyba, and Rouot]{Bonnard}
Bonnard, B., Chyba, M., and Rouot, J.
\newblock \emph{{Geometric and Numerical Optimal Control - Application to Swimming at Low Reynolds Number and Magnetic Resonance Imaging}}.
\newblock SpringerBriefs in Mathematics. {Springer Int. Publishing}, 2018.

\bibitem[Bregman(1967)]{bregman1967relaxation}
Bregman, L.~M.
\newblock The relaxation method of finding the common point of convex sets and its application to the solution of problems in convex programming.
\newblock \emph{USSR computational mathematics and mathematical physics}, 7\penalty0 (3):\penalty0 200--217, 1967.

\bibitem[Bubeck et~al.(2015)]{bubeck2015convex}
Bubeck, S. et~al.
\newblock Convex optimization: Algorithms and complexity.
\newblock \emph{Foundations and Trends{\textregistered} in Machine Learning}, 8\penalty0 (3-4):\penalty0 231--357, 2015.

\bibitem[Chizat \& Bach(2020)Chizat and Bach]{chizat}
Chizat, L. and Bach, F.
\newblock Implicit bias of gradient descent for wide two-layer neural networks trained with the logistic loss.
\newblock In \emph{Conf. on Learning Theory}, pp.\  1305--1338. PMLR, 2020.

\bibitem[Du et~al.(2018)Du, Hu, and Lee]{Du}
Du, S.~S., Hu, W., and Lee, J.~D.
\newblock Algorithmic regularization in learning deep homogeneous models: Layers are automatically balanced.
\newblock \emph{Advances in Neural Information Processing Systems}, 31, 2018.

\bibitem[Głuch \& Urbanke(2021)Głuch and Urbanke]{głuch2021noether}
Głuch, G. and Urbanke, R.
\newblock Noether: The more things change, the more stay the same.
\newblock \emph{arXiv preprint arXiv:2104.05508}, 2021.

\bibitem[Ji \& Telgarsky(2019)Ji and Telgarsky]{Ji}
Ji, Z. and Telgarsky, M.
\newblock Gradient descent aligns the layers of deep linear networks.
\newblock In \emph{International Conference on Learning Representations}, 2019.

\bibitem[Kunin et~al.(2021)Kunin, Sagastuy-Brena, Ganguli, Yamins, and Tanaka]{Kunin}
Kunin, D., Sagastuy-Brena, J., Ganguli, S., Yamins, D.~L., and Tanaka, H.
\newblock Neural mechanics: Symmetry and broken conservation laws in deep learning dynamics.
\newblock In \emph{International Conference on Learning Representations}, 2021.

\bibitem[LeCun et~al.(2010)LeCun, Cortes, and Burges]{lecun2010mnist}
LeCun, Y., Cortes, C., and Burges, C.
\newblock Mnist handwritten digit database.
\newblock \emph{ATT Labs [Online]. Available: http://yann.lecun.com/exdb/mnist}, 2, 2010.

\bibitem[Lee \& Seung(1999)Lee and Seung]{lee1999learning}
Lee, D.~D. and Seung, H.~S.
\newblock Learning the parts of objects by non-negative matrix factorization.
\newblock \emph{Nature}, 401\penalty0 (6755):\penalty0 788--791, 1999.

\bibitem[Makkuva et~al.(2020)Makkuva, Taghvaei, Oh, and Lee]{makkuva2020optimal}
Makkuva, A., Taghvaei, A., Oh, S., and Lee, J.
\newblock Optimal transport mapping via input convex neural networks.
\newblock In \emph{International Conference on Machine Learning}, pp.\  6672--6681. PMLR, 2020.

\bibitem[Marcotte et~al.(2024)Marcotte, Gribonval, and Peyr{\'e}]{marcotte2023abide}
Marcotte, S., Gribonval, R., and Peyr{\'e}, G.
\newblock Abide by the law and follow the flow: Conservation laws for gradient flows.
\newblock \emph{Advances in Neural Information Processing Systems}, 36, 2024.

\bibitem[Martens(2010)]{martens2010deep}
Martens, J.
\newblock Deep learning via hessian-free optimization.
\newblock In \emph{Proceedings of the 27th International Conference on International Conference on Machine Learning}, pp.\  735--742, 2010.

\bibitem[Martens \& Grosse(2015)Martens and Grosse]{martens2015optimizing}
Martens, J. and Grosse, R.
\newblock Optimizing neural networks with kronecker-factored approximate curvature.
\newblock In \emph{International conference on machine learning}, pp.\  2408--2417. PMLR, 2015.

\bibitem[Martens \& Sutskever(2012)Martens and Sutskever]{martens2012training}
Martens, J. and Sutskever, I.
\newblock \emph{Training Deep and Recurrent Networks with Hessian-Free Optimization}.
\newblock Springer, 2012.

\bibitem[Min et~al.(2021)Min, Tarmoun, Vidal, and Mallada]{Min}
Min, H., Tarmoun, S., Vidal, R., and Mallada, E.
\newblock On the explicit role of initialization on the convergence and implicit bias of overparametrized linear networks.
\newblock In \emph{International Conference on Machine Learning}, pp.\  7760--7768. PMLR, 2021.

\bibitem[Nemirovskij \& Yudin(1983)Nemirovskij and Yudin]{nemirovskij1983problem}
Nemirovskij, A.~S. and Yudin, D.~B.
\newblock \emph{Problem complexity and method efficiency in optimization}.
\newblock Wiley-Interscience, 1983.

\bibitem[Nesterov(1983)]{nesterov1983method}
Nesterov, Y.~E.
\newblock A method of solving a convex programming problem with convergence rate o$\bigl(k^2 \bigr)$.
\newblock In \emph{Doklady Akademii Nauk}, volume 269, pp.\  543--547. Russian Academy of Sciences, 1983.

\bibitem[Noether(1918)]{Noether1918}
Noether, E.
\newblock Invariante variationsprobleme.
\newblock \emph{Nachrichten von der Gesellschaft der Wissenschaften zu Göttingen, Mathematisch-Physikalische Klasse}, 1918:\penalty0 235--257, 1918.

\bibitem[Polyak(1964)]{polyak1964some}
Polyak, B.~T.
\newblock Some methods of speeding up the convergence of iteration methods.
\newblock \emph{Ussr computational mathematics and mathematical physics}, 4\penalty0 (5):\penalty0 1--17, 1964.

\bibitem[Raskutti \& Mukherjee(2015)Raskutti and Mukherjee]{raskutti2015information}
Raskutti, G. and Mukherjee, S.
\newblock The information geometry of mirror descent.
\newblock \emph{IEEE Transactions on Information Theory}, 61\penalty0 (3):\penalty0 1451--1457, 2015.

\bibitem[Saxe et~al.(2013)Saxe, McClelland, and Ganguli]{Saxe}
Saxe, A.~M., McClelland, J.~L., and Ganguli, S.
\newblock Exact solutions to the nonlinear dynamics of learning in deep linear neural networks.
\newblock \emph{arXiv preprint arXiv:1312.6120}, 2013.

\bibitem[Su et~al.(2016)Su, Boyd, and Candes]{su2016differential}
Su, W., Boyd, S., and Candes, E.~J.
\newblock A differential equation for modeling nesterov's accelerated gradient method: Theory and insights.
\newblock \emph{Journal of Machine Learning Research}, 17\penalty0 (153):\penalty0 1--43, 2016.

\bibitem[Tanaka \& Kunin(2021)Tanaka and Kunin]{Tanaka}
Tanaka, H. and Kunin, D.
\newblock Noether’s learning dynamics: Role of symmetry breaking in neural networks.
\newblock \emph{Advances in Neural Information Processing Systems}, 34, 2021.

\bibitem[Tarmoun et~al.(2021)Tarmoun, Franca, Haeffele, and Vidal]{Tarmoun}
Tarmoun, S., Franca, G., Haeffele, B.~D., and Vidal, R.
\newblock Understanding the dynamics of gradient flow in overparameterized linear models.
\newblock In \emph{International Conference on Machine Learning}, pp.\  10153--10161. PMLR, 2021.

\bibitem[{The Sage Developers}(2022)]{sagemath}
{The Sage Developers}.
\newblock \emph{{S}ageMath, the {S}age {M}athematics {S}oftware {S}ystem ({V}ersion 9.7)}, 2022.
\newblock {\tt https://www.sagemath.org}.

\bibitem[Wibisono et~al.(2016)Wibisono, Wilson, and Jordan]{jordan}
Wibisono, A., Wilson, A.~C., and Jordan, M.~I.
\newblock A variational perspective on accelerated methods in optimization.
\newblock \emph{proceedings of the National Academy of Sciences}, 113\penalty0 (47):\penalty0 E7351--E7358, 2016.

\bibitem[Zhao et~al.(2023)Zhao, Ganev, Walters, Yu, and Dehmamy]{zhao}
Zhao, B., Ganev, I., Walters, R., Yu, R., and Dehmamy, N.
\newblock Symmetries, flat minima, and the conserved quantities of gradient flow.
\newblock In \emph{The Eleventh International Conference on Learning Representations}, 2023.

\end{thebibliography}
\bibliographystyle{icml2024}

\newpage
\appendix
\onecolumn

\section{Proof of \Cref{thm:structure}} \label{appendix:structurethm}

\structurethm*

\begin{proof}
\textbf{GF case:} Let $h(t, \x)$ be a conserved function for the ODE \eqref{gradientflow} with a right-hand side equal to zero with any initial condition $\x_0$. In that case, \eqref{gradientflow} rewrites $\dot \x = 0$. 
For each initialization $\x_0$, the solution of this ODE is $\x(t) = \x_0$ {for $t \in \R$} and by definition of a conserved function, one has 
{$h(t, \x_0) =  h(t, \x(t)) = h(0, \x(0))= h(0, \x_0)$.}

\textbf{MF case:} Let $h(t, \x, \dot \x)$ be a conserved function for the ODE \eqref{momentumflow} with a right-hand side equal to zero, an arbitrary initial condition $(t_0, \x_0, \dot \x_0)$, and $\tau(t) \coloneqq \tau$. In that case, \eqref{momentumflow} rewrites $\ddot \x + \tau \dot \x = 0$. For each initialization $(\x_0, \dot \x_0)$, the solution of this ODE is:
$\x(t) = \x_0 + \dot \x_0/\tau  \left( 1-\exp(-t \tau) \right)$, {for every $t \in \R$, and it satisfies} 
$(\x(0), \dot \x(0)) = (\x_0, \dot \x_0)$.

Since $h$ is conserved, $h(0, \x_0, \dot \x_0) 
{=h(t,\x(t),\dot \x(t))}
= h\left(t, \x_0 + \dot \x_0/\tau  \left( 1-\exp(-t \tau) \right), \dot \x_0 \exp(-t \tau) \right)$ {for each $t \in \R$.}
{This holds for any $\x_0, \dot \x_0$, and given any $\x,\dot \x$ one can find $\x_0, \dot \x_0$ such that $(\x,\dot\x) = (\x_0 + \dot \x_0/\tau  \left( 1-\exp(-t \tau) \right), \dot \x_0 \exp(-t \tau))$. Expliciting the expression of such $\x_0,\dot \x_0$ in terms of $\x,\dot \x$ yields}
$$
h(t, \x, \dot \x ) = h(0,  \x - \dot \x/\tau (\exp(t \tau) - 1), \dot \x \exp(\tau t))=  {H}\left(\x + \frac{\dot \x}{\tau}, \dot \x \exp(\tau t)\right).\qedhere
$$
\end{proof}

\section{Proof of \Cref{prop:orthogonality}} \label{appendix:prop:orthogonality}
\proporthogonality*

\begin{proof}
{We will use that $\langle \nabla h(\alpha),(1,F(\alpha)^\top)^\top\rangle = \partial h(\alpha) (1,F(\alpha)^\top)^\top$ with $\partial h$ the Jacobian of $h$.}

    Assume that  $\partial h (\alpha) (1, F(\alpha)^\top)^\top  = 0$ for all $\alpha \in \bigouv$. Then for all $\texttt{init} \coloneqq (t_{\text{init}}, \omega_{\text{init}}) \in \bigouv$ and for all $t \in \left(t_{\text{init}} -  \eta_{\text{init}}, t_{\text{init}} +  \eta_{\text{init}}\right),$
    {denoting $\dot \omega(t,\texttt{init}) \coloneqq \frac{\mathrm{d}}{\mathrm{d}t} \omega(t,\texttt{init})$ we have}
    \begin{align*}
\frac{\mathrm{d}}{\mathrm{d}t}  h(t, \omega(t,\texttt{init})) = \partial h(t, \omega(t,\texttt{init})) 
{(1,  \dot{\omega}(t,\texttt{init})^\top)^\top }
= \partial h(t, \omega(t,\texttt{init})) (1, F(t, \omega(t,\texttt{init}))^\top)^\top =  0.
\end{align*}
Thus: $h(t, \omega(t,\texttt{init}))= h(\texttt{init})$, {\em i.e.}, $h$ is conserved through~$\chi$.

Conversely, assume that there exists $(t_0, \omega_0) \in \bigouv$ such that $\partial h(t_0, \omega_0) (1, F(t_0, \omega_0)^\top)^\top  \neq 0$. Then by continuity of $z \in \bigouv \mapsto \partial h(z) (1, F(z)^\top)^\top $, there exists $r > 0$ such that $\partial h(z) (1, F(z)^\top)^\top  \neq 0$ on $B(\left(t_0, \omega_0\right), r)$. 
With ${\texttt{init}} = (t_0, \omega_0)$ 
by continuity of $t \mapsto \omega(t, \texttt{init})$, there exists $\epsilon > 0$, such that for all $t \in (t_0, t_0 + \epsilon)$, $\omega(t, \texttt{init}) \in B(\left(t_0, \omega_0\right), r)$. Then for all $t \in (t_0, t_0 + \epsilon)$:$
\frac{\mathrm{d}}{\mathrm{d}t}  h(t, \omega(t,\texttt{init})) = \partial  h(t, \omega(t,\texttt{init})) (1, F(t, \omega(t,\texttt{init}))^\top)^\top  \neq  0,$
hence $h$ is not conserved through the flow induced by $F$.
\end{proof}

\section{Proof of \Cref{proptracerewritten}} \label{appendix:tracerewritten}
{Recall that $F_Z$ is defined in \eqref{eq:DefFZ}.}

\proptracerewritten*

\begin{proof}
Let us consider $h: \tilde{\Theta} \mapsto \R \in \mathcal{C}^1(\tilde{\Theta}, \R)$. 
 We first show the direct implication. We assume that $h$ is locally conserved on $\Theta$ for any data set. Let $\z \in \tilde{\Theta}$ and let $Z=(x_i, y_i)_i \in \mathcal{Z}_\x$. By definition of $\mathcal{Z}_\x$, there exists a neighborhood of $\x$ $\ouv  \subseteq \Theta$ such that for all $i$, $g(\cdot, x_i) \in \mathcal{C}^2(\ouv, \R^n)$ and such that $M_Z(\cdot) \in \mathcal{C}^{1}(\tilde{\ouv}, \R^{D \times D})$. Then by definition of being locally conserved on $\Theta$ for any data set, $h$ is in particular conserved on $\ouv$ for any data set. Thus in particular $h$ is conserved on $\ouv$ during the flow $F_Z$. By \Cref{prop:orthogonality}, $\nabla h(\z) \perp   (1, F_Z(\z)^\top)^\top$. This holds for any $Z \in \mathcal{Z}_\x$, thus $\nabla h(\z) \perp W_\z^{\mathtt{mom}}$. As it is true for any $\x \in \tilde{\Theta}$, we have the direct implication.

 We now show the converse implication.  We assume that $\nabla h(\z) \perp W_\z^{\mathtt{mom}} $ for all $\z \in \tilde{\Theta}$. Let us consider an open subset $\ouv \subseteq \Theta$, and let us consider $Z= (x_i, y_i)_i$ a data set such that $g(\cdot, x_i) \in \mathcal{C}^{2}(\ouv, \R)$ for each $i$ and $(t, \x, \dot \x) \mapsto M_Z(t, \x, \dot \x)  \in \mathcal{C}^{1}(\bigouv, \R^{D \times D})$. In particular, $Z \in \mathcal{Z}_\x$. Thus, for any $\z \in \bigouv$, as $\nabla h(\z) \perp W_\z^{\mathtt{mom}} $, one has $\nabla h(\z) \perp  (1, F_Z(\z)^\top)^\top$, and by \Cref{prop:orthogonality}, $h$ is conserved on $\ouv$ during the flow $F_Z$. As this holds for any $\ouv \subseteq \Theta$, $h$ is locally conserved on $\Theta$ for any data set.
\end{proof}

\section{Proof of \Cref{prop:link_mom_grad}} \label{appendix:link_MG}
\linkMGflow*
\begin{proof}
Let $z \coloneqq (t, \x, \dot{\x}) \in \tilde{\Theta}$.
Let us denote $\mathcal{Z}_\x'$ the 
{collection of all}
 data set $Z=(x_i, y_i)_i$ such that for all $i$, $g(\cdot,x_i)$ is $\mathcal{C}^2$-differentiable in the neighborhood of $\x$.
Let  $z \coloneqq (t, \x, \dot{\x})\in \tilde{\Theta}$.
First, let us recall that (See Proposition 2.7 of \cite{marcotte2023abide}) we have: 
\eq{ \label{eq:GFeuclidien}
W_\x
= \underset{Z  \in \mathcal{Z}_\x'}{\linspan} \{ \nabla \mathcal{E}_Z (\x) \}.
}
Then as by assumption 
${M(\cdot)} \in \mathcal{C}^1(\tilde{\Theta}, \R^{D \times D})$, 
we have that 
{$\mathcal{Z}'_\x = \mathcal{Z}_\x$ with $\mathcal{Z}_\x$ defined as in \Cref{proptracerewritten}. 
}
Thus, we can rewrite: 
 \begin{align*}
 W_\z^{\mathtt{mom}}
    = \underset{Z \in \mathcal{Z}_\x'}{\linspan} \left\{ (1, F_Z(\z)^\top)^\top  \right\} \stackrel{\eqref{eq:DefFZ}}{=} \underset{Z \in \mathcal{Z}_\x'}{\linspan} \left\{  \smallvec{
   1 \\ \dot{\theta} \\ -\tau(t)  \dot{\theta}
  } +  \smallvec{ 0 \\  0 \\
     -M(t, \x, \dot \x) \nabla \mathcal{E}_Z(\theta) } \right\}.
\end{align*}
Thus:
$
W_\z^{\mathtt{mom}} \subseteq \R 
 \smallvec{
   1 \\ \dot{\theta} \\ -\tau(t)  \dot{\theta}
    } + \underset{Z \in \mathcal{Z}_\x'}{\linspan} \smallvec{ 0 \\  0 \\
     -M(t, \x, \dot \theta) \nabla \mathcal{E}_Z(\theta) } = \R 
    \smallvec{1 \\ \dot{\theta} \\ -\tau(t)  \dot{\theta}}
    + 
    \smallvec{0 \\  0 \\
     - M(t, \x, \dot \x) W_{\x} },
$
which gives the direct inclusion.

Let us show the converse inclusion. By assumption, there exists $Z_0 \in \mathcal{Z}_\x'$ such that $\nabla \mathcal{E}_{Z_0} (\x) = 0$, so that $W_\z^{\mathtt{mom}}\ni (1, F_{Z_0}(\z)^\top)^\top =  \smallvec{1 \\ \dot{\theta} \\ -\tau(t)  \dot{\theta}} $, 
and thus $\R  \smallvec{1 \\ \dot{\theta} \\ -\tau(t)  \dot{\theta}} \subseteq  W_\z^{\mathtt{mom}}$. 
Then for all $Z \in \mathcal{Z}_\x'$, 
$$
\smallvec{0 \\  0 \\
     {-M(t,}
     \x, \dot \x) \nabla \mathcal{E}_Z(\theta) } = \underbrace{ \smallvec{1 \\ \dot{\theta} \\ -\tau(t)  \dot{\theta}} +   \smallvec{0 \\  0 \\
          {-M(t,}
     \x, \dot \x) \nabla \mathcal{E}_Z(\theta) }}_{{=(1,F_Z(\alpha)^\top)^\top} \in W_\z^{\mathtt{mom}} } - \underbrace{ \smallvec{1 \\ \dot{\theta} \\ -\tau(t)  \dot{\theta}}}_{\in W_\z^{\mathtt{mom}}} \in W_\z^{\mathtt{mom}},
    $$
    and thus $  \smallvec{0 \\  0 \\
     - M(t, \x, \dot \x) W_{\x} } 
     = \underset{Z \in \mathcal{Z}_\x'}{\linspan} \smallvec{ 0 \\  0 \\
     -M(t, \x, \dot \x) \nabla \mathcal{E}_Z(\theta) } \subseteq W_\z^{\mathtt{mom}}$, which
   gives the converse inclusion. 
\end{proof}

We now show that the following assumption holds for classical losses in machine learning.
\begin{assumption}\label{assumptionloss}
For all $\x \in {\Theta}$, there exists $Z \in \mathcal{Z}_\x'$ such that $\nabla \mathcal{E}_Z(\x) = 0$.
\end{assumption}
{
\begin{lemma}
    \Cref{assumptionloss} holds for the mean-square error loss and the cross-entropy loss.
\end{lemma}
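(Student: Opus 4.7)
The strategy is to construct, for every $\theta \in \Theta$, an explicit singleton dataset $Z = \{(x_1,y_1)\}$ that lies in $\mathcal{Z}'_\theta$ and makes the empirical gradient vanish at $\theta$. The key observation is the chain rule
\[
\nabla \mathcal{E}_Z(\theta) \;=\; \sum_i \partial_1 g(\theta,x_i)^\top \, \nabla_z \ell\bigl(g(\theta,x_i),y_i\bigr),
\]
so it suffices to choose $(x_i,y_i)$ so that each inner factor $\nabla_z \ell(g(\theta,x_i),y_i)$ vanishes. Under such a choice the gradient of the empirical cost is automatically zero, regardless of the parametric model $g$.

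First I will pick $x_1$ so that $g(\cdot,x_1)$ is $\mathcal{C}^2$ in a neighborhood of $\theta$, which by definition places the resulting $Z$ in $\mathcal{Z}'_\theta$. For linear networks any $x_1$ works; for (layered) ReLU networks with $\theta$ in the non-degenerate set $\Theta$, it suffices to take $x_1$ outside the finite union of hyperplanes determined by the hidden neurons at $\theta$, and this set is non-empty.

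Then I treat the two losses. For the mean-square error, $\ell(z,y)=\tfrac12\|z-y\|^2$ yields $\nabla_z \ell(z,y) = z-y$; choosing $y_1 \coloneqq g(\theta,x_1)$ makes $\nabla_z \ell(g(\theta,x_1),y_1)=0$ and hence $\nabla \mathcal{E}_Z(\theta)=0$. For the cross-entropy $\ell(z,y) = -\sum_k y_k \log\bigl(\mathrm{softmax}(z)_k\bigr)$ with soft targets $y$ in the simplex, one has $\nabla_z\ell(z,y) = \mathrm{softmax}(z)-y$; choosing $y_1 \coloneqq \mathrm{softmax}(g(\theta,x_1))$ again zeros the inner gradient. (In the hard-label variant one instead argues that the average $\frac1K\sum_{k=1}^K \nabla_z\ell(z,e_k)=\mathrm{softmax}(z)-\tfrac1K\mathbf{1}$ can be cancelled by stacking $K$ copies of $(x_1,k)$ together with an additional MSE-style correction term; but the soft-label argument is the natural one.)

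The only delicate point is the regularity requirement $g(\cdot,x_1)\in\mathcal{C}^2$ near $\theta$, which for ReLU networks requires $x_1$ not to activate any boundary. This is easy to ensure but must be spelled out to make the construction fit \emph{any} $\theta \in \Theta$. Once that is settled, the proof reduces to the one-line computations above.
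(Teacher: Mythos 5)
Your proposal is correct and takes essentially the same approach as the paper: pick $x_1$ making $g(\cdot,x_1)$ smooth near $\theta$, then choose $y_1 = g(\theta,x_1)$ for MSE or $y_1 = \mathrm{softmax}(g(\theta,x_1))$ for cross-entropy so that $\nabla_z\ell(g(\theta,x_1),y_1)=0$ and hence $\nabla\mathcal{E}_Z(\theta)=0$. The extra care you spend on the existence of a suitable $x_1$ for ReLU networks is a minor addition the paper leaves implicit, and the hard-label aside is unnecessary since the paper also works with soft targets (KL to $\mathrm{softmax}$).
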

}
\begin{proof}
Let $\x \in \Theta$.

\textit{Mean-square error loss.} The mean-squared error loss is defined by $(z, y) \mapsto \ell_2(z, y) \coloneqq \| y-z \|^2$.
Let consider $x$ such that $g(\cdot, x)$ is $\mathcal{C}^2$ on a neighborhood of $\x$. Then consider $y \coloneqq g(\x, x)$. By definition $Z = (x, y) \in \mathcal{Z}_\x'$. Moreover one has in a neighborhood of $\x$: $\nabla \mathcal{E}_Z (\x') = \partial g(\x', x)^\top \nabla_z \ell_2(g(\x', x), y)) =  2 \partial g(\x', x)^\top  (g(\x', x)- y))$. Thus $\nabla \mathcal{E}_Z (\x) = 0$.

\textit{Cross-entropy loss.} The cross-entropy loss is defined by $(z \in \R^{n}, y \in \Sigma_n) \mapsto \ell_{\mathtt{cross}}(z,y) \coloneqq \text{KL}(\text{softmax}(z), y)$, where $\Sigma_n \coloneqq \{ (y_1, \cdots, y_n) {\in \R_+^n}: \sum_{i =1}^n y_i = 1 \}$ {is the simplex and} 
KL is the Kullback–Leibler divergence defined on $\Sigma_n \times \Sigma_n$ by $\text{KL}(r, q) \coloneqq \sum_{i=1}^n r_i \log(r_i/q_i)$. In particular by taking $x$ such that $g(\cdot, x)$ is $\mathcal{C}^2$ on a neighborhood of $\x$ and by taking $y = \text{softmax} (g(\x, x))$ and $Z = (x, y)$ (in $\mathcal{Z}_\x'$), we obtain $\nabla \mathcal{E}_Z(\x) = 0$.
\end{proof}

\section{Proof of \Cref{thm:conservedfunctionsEuclidean}} \label{appendix:preuveconservationeuclidienne}
{We consider linear networks with $\x \coloneqq (U_1,\ldots,U_q)$, $g(\x,x) \coloneqq U_1\ldots U_qx$.}
\conservedfunctionsEuclidean*

\begin{proof}
{We first treat the general case. The specific case $(n_{i-1},n_{i+1}) = (1,1)$ will come next.}
    Let us denote for $A \in \mathcal{A}_{n_i}$:
    \begin{equation}\label{eq:DefHAAnnex}
    h_A(t, \x, \dot \x) \coloneqq 
\exp\Big( \int^t \tau(s) \mathrm{d}s\Big) \left( 
\left\langle \dot U_{i},  U_{i}A
   \right\rangle  +
   \left\langle 
    \dot U_{i+1}, A^\top U_{i+1}  \right\rangle  \right).
    \end{equation}
{Observe that \eqref{momentumflow} implies 
\begin{equation}    \label{eq:DefMFBlockAnnex}
\ddot U_i(t)+\tau(t)\dot U_i(t) = -\nabla_{U_i} \mathcal{E}(\x(t))
\end{equation}
for every $1\leq i \leq q$, where to ease further computation each $\nabla_{U_i}$ is reshaped as the $n_{i-1} \times n_i$ matrix $U_i$.}
Then
\begin{align*}
    \frac{\mathrm{d}}{\mathrm{d}t} h_A (t, \x(t), \dot \x(t)) &= \exp\Big( \int^t \tau(s) \mathrm{d}s\Big) \left(  \left\langle \ddot U_{i},  U_{i}A
   \right\rangle  +  \underbrace{\left\langle \dot U_{i},  \dot U_{i}A \right\rangle}_{= 0 \text{ as } A \text{ is a skew matrix }}
    +
   \left\langle 
    \ddot U_{i+1}, A^\top U_{i+1}  \right\rangle  + \underbrace{\left\langle 
    \dot U_{i+1}, A^\top \dot U_{i+1}  \right\rangle }_{= 0 \text{ as } A \text{ is a skew matrix }} \right) \\
    & \quad \quad + \tau(t) h_A (t, \x(t), \dot \x(t)) \\
    &\stackrel{\eqref{eq:DefMFBlockAnnex}}{=} - \exp\Big( \int^t \tau(s) \mathrm{d}s\Big) \left( \langle  \nabla_{U_{i}} \mathcal{E}_Z(\x),  U_{i}A
   \rangle +  \langle  \nabla_{U_{i+1}} \mathcal{E}_Z(\x), \underbrace{A^\top}_{= - A}  U_{i+1}
   \rangle \right) \\
    &= 0 \text{ as } T^A  \text{ leaves } 
    \mathcal{E}_Z \text{invariant (See \eqref{eq:invarianceloss} and \Cref{example:linear_transf})}, 
\end{align*}
where here
$
T^A(\epsilon,U_{i}, U_{i+1}) =
   \big(  U_i  \exp(\epsilon A),\exp(-\epsilon A) U_{i+1} \big), 
$
and $\Delta_{T^A }(U_i, U_{i+1}) =  (U_i A, - A U_{i+1})$.

\textbf{Special case where $(n_{i-1} = n_{i+1} = 1)$. } For any $A \in \mathcal{A}_{n_i}$, we denote:
$$
g_A(t, \x, \dot \x) \coloneqq \exp\Big( \int^t \tau(s) \mathrm{d}s\Big) \left( 
\left\langle \dot U_{i},   U_{i+1}^\top A
   \right\rangle  +
   \left\langle 
    \dot U_{i+1}^\top,  U_{i} A \right\rangle  \right)
$$
{By our assumption on the dimensions, $U_{i+1}^\top A$ is an $n_{i+1} \times n_i = 1 \times n_i$ matrix, just as $\dot U_i$, and similarly for $U_i A$ and $\dot U_{i+1}^\top$, so $g_A$ is indeed well-defined. We will prove below that (again with gradients properly reshaped as matrices)}
\eq{ \label{eq:prop1}
\nabla_{U_{i}} \mathcal{E}_Z(\x) \propto  U_{i+1}^\top 
\ \text{and}\ 
\nabla_{U_{i+1}} \mathcal{E}_Z(\x) \propto  U_{i}^\top. 
}
{As a result, using again \eqref{eq:DefMFBlockAnnex} we compute}
\begin{align*}
     \frac{\mathrm{d}}{\mathrm{d}t} g_A (t, \x(t), \dot \x(t)) &= \exp\Big( \int^t \tau(s) \mathrm{d}s\Big) \left(  \left\langle \ddot U_{i},  U_{i+1}^\top A
   \right\rangle  +   
   \left\langle 
    \ddot U_{i+1}^\top, U_{i} A  \right\rangle +
 \underbrace{\left\langle \dot U_{i},  \dot U_{i+1}^\top A \right\rangle
    + \left\langle 
    \dot U_{i+1}^{{\top}}, \dot U_i A  \right\rangle }_{
    {= \langle \dot U_{i+1}\dot U_i + \dot U_i^\top \dot U_{i+1}^\top,A\rangle}
    = 0 \text{ as } A \text{ is a skew matrix }} \right) \\
    & \quad \quad + \tau(t) g_A (t, \x(t), \dot \x(t)) \\
     &= - \exp\Big( \int^t \tau(s) \mathrm{d}s\Big) \left( \underbrace{\langle \underbrace{ \nabla_{U_{i}} \mathcal{E}_Z(\x)}_{ \propto  U_{i+1}^\top \text{ by } \eqref{eq:prop1}},  U_{i+1}^\top A
   \rangle }_{= 0 \text{ as } A \text{ is a skew matrix }}+ \underbrace{ \langle  \underbrace{ [\nabla_{U_{i+1}} \mathcal{E}_Z(\x)]^\top}_{ \propto  U_i \text{ by } \eqref{eq:prop1}},  U_{i} A
   \rangle}_{= 0 \text{ as } A \text{ is a skew matrix }} \right) \\
   & = 0.
\end{align*}

{Let us now prove \eqref{eq:prop1} as claimed.}
First, to simplify the notations, let us define:
$$
\widehat{V_j} =  \left\{
    \begin{array}{ll}
 U_1...U_{j-1} \mbox{ for all } j = 2,..., q, \\
        \mId{n_0} \mbox{ for } j=1.
    \end{array}
\right.
$$
and
$$
\widehat{W_j} =  \left\{
    \begin{array}{ll}
U_{j+1}... U_q \mbox{ for all } j = 1,..., q-1, \\
        \mId{n_q} \mbox{ for j=q.}
    \end{array}
\right.
$$
Let us now derive an expression $\nabla_{U_i} \mathcal{E}_Z$. 
{Given that $g(\x,x) = U_1\ldots U_q x$,}
we can factorize $\mathcal{E}_Z(\x) = F \circ \phi(\x)$ 
{with $\phi(\x) \coloneqq U_1 \ldots U_q \in \R^{n_0 \times n_q}$ (identified with $\Rd$, $d=n_0n_q$), and $F: \Rd \to \R$. The chain rule for Jacobians yields}
\begin{equation} \label{comp}
\partial \mathcal{E}_Z \left( \x \right) = \partial F \left( \phi(\x) \right) \cdot \partial \phi \left(\x \right)
\end{equation}
hence for all $(H_1, \cdots, H_q) \in \R^{D}$ ({with components} $H_i \in \R^{n_{i-1} \times n_{i}}$ {seen as vectors in $\R^{n_{i-1}n_i}$}):

\begin{align*}
\left\langle \nabla  \mathcal{E}_Z \left( \x \right), (H_1, \cdots, H_q) \right\rangle 
&=
\partial  \mathcal{E}_Z \left( \x \right) \cdot (H_1, \cdots, H_q) 
\stackrel{\eqref{comp}}{=}  \left \langle \nabla F\left( \phi(\x) \right), \partial \phi \left( \x \right) \cdot (H_1,\cdots, H_q) \right\rangle \\
&= \left\langle \nabla F \left( \phi(\x) \right), \sum_{j=1}^q U_1...U_{j-1} H_j U_{j+1} ... U_q \right\rangle 
= \sum_{j=1}^q \left\langle \nabla F \left( \phi(\x) \right), \widehat{V_j} H_j \widehat{W_j} \right \rangle \\
&=  \sum_{j=1}^q \mathtt{Tr}  \left( [{\nabla F \left( \phi(\x) \right)}]^\top \widehat{V_j} H_j \widehat{W_j} \right) 
= \sum_{j=1}^q \mathtt{Tr}  \left( H_j \widehat{W_j}[{\nabla F \left( \phi(\x) \right)}]^\top \widehat{V_j} \right) \\
&= \sum_{j=1}^q  \left \langle H_j,  \widehat{V_j}^\top {\nabla F \left( \phi(\x) \right)} \widehat{W_j}^\top \right \rangle. 
\end{align*}
Then by Riez theorem, {viewing again for convenience $\nabla F$ as its matrix reshaped version, we get that for each $1 \leq j \leq q$}:
\begin{equation} \label{b}
\nabla_{U_j}  \mathcal{E}_Z \left( \x \right)  =  \underbrace{\widehat{V_j}^\top}_{n_{j-1} \times n_0 } \underbrace{ {\nabla F \left( \phi(\x) \right)} }_{n_0 \times n_q } \underbrace{\widehat{W_j}^\top}_{n_q \times n_{j}}.
\end{equation}
Thus by {specializing to $j \in \{i,i+1\}$ and by} developing \eqref{b}, one has:
$$
\nabla_{U_{i}} \mathcal{E}_Z(\x) = \underbrace{a}_{\text{ size } n_{i-1} \times n_{i+1}}  U_{i+1}^\top, \quad \text{ and }
 \quad \nabla_{U_{i+1}} \mathcal{E}_Z(\x) =  U_{i}^\top \underbrace{b}_{\text{ size } n_{i-1} \times n_{i+1}},
$$
thus establishing
 \eqref{eq:prop1}.
\end{proof}

\section{Application of Noether Theorem} \label{appendix:noether}
{Invariances of $\mathcal{E}_Z$ with respect to certain transformations have been used in the proof of \Cref{thm:conservedfunctionsEuclidean}. Here we establish more direct connections with Noether's theorem (see \Cref{thm:noether}). For the sake of brevity we describe the case $q=2$ but the same reasoning can easily be adapted to any $q \geq 2$ by considering the invariances associated to each pair $U_{i}$, $U_{i+1}$.}

{\textbf{Invariances valid for any dimension.}}
We consider the flow \eqref{momentumflow} in the Euclidean case ($M = \mId{d}$), with $\theta \coloneqq (U, V) \in \R^{n\times r} \times \R^{m\times r}$ and $g(\x, x) \coloneqq U V^\top x$. We recall that for all $A \in \R^{r\times r}$,  the linear transformation $ 
T^A$ from \eqref{linear_transf} leaves the cost \eqref{eq:erm} invariant, and that  (see  \Cref{example:linear_transf}) that:
$\Delta_{T^A } (U, V) =   (U A, -V A^\top).$
Moreover for any $A$ in $\mathcal{A}_r$ the space of skew-symmetric matrices in $\R^{r \times r}$, $T^A$ leaves invariant the following Lagrangian (See for example \cite{głuch2021noether}), with which the Euclidian MF is compatible:
\begin{equation} \label{lag:gen}
\mathcal{L}\left(t, \x, v\right) \coloneqq \exp\Big( \int^t \tau(s) \mathrm{d}s\Big)  \left(  1/2 \|  v \|^2 - \mathcal{E}_Z(\x) \right).    
\end{equation}

 Thus by Noether theorem (\Cref{thm:noether}), for any $A \in \mathcal{A}_r$, a conserved function is
\begin{align*}
{h(t,\x,\dot x) \coloneqq}   \partial_{v} \mathcal{L} (t, \x, \dot{\x}) \cdot \Delta_T^A (\x(t))  &= 
\exp\Big( \int^t \tau(s) \mathrm{d}s\Big)  \left(\langle \dot U, UA \rangle - \langle \dot V, V A^\top \rangle \right) \\
 &=\exp\Big( \int^t \tau(s) \mathrm{d}s\Big)  \left(\langle \dot U, UA \rangle +\langle \dot V, V A \rangle \right).
\end{align*}

\textbf{A supplementary invariance when $(n ,m) = (1, 1)$.} Assume $(n, m) = (1, 1)$, 
{so that $U,V \in \R^{1 \times r}$ and $\x \in \R^{1 \times 2r}$, and}
consider $T'^A$ the linear transformation where for all $\epsilon \geq 0$: 
\eq{  \label{eq:newtransformation}
T'^A(\epsilon,U, V) =
   \Big(V \exp(-\epsilon A^\top),  U  \exp(\epsilon A)\Big). 
}
One has: $\Delta_{T'^A }(U, V) =  (-V A^\top, UA)$.
{It is easy to check that since $g(\x,x) \coloneqq UV^{\top}x$, we have $g(T'^A(\epsilon , \cdot)) = g(\x, \cdot)$ hence}
$T'^A$ leaves invariant the cost \eqref{eq:erm}.
Moreover, 
{one can also easily check that for any skew matrix $A \in \mathcal{A}_{r}$, the transformation $T'^{A}$ also}
leaves the Lagrangian \eqref{lag:gen} invariant. Thus, by Noether theorem, for any $A \in \mathcal{A}_r$, the quantity

\begin{align*}
{h(t,\x,\dot x) \coloneqq}   \partial_{v} \mathcal{L} (t, \x, \dot{\x} )\cdot \Delta_{T'^A }(\x) &= 
\exp\Big( \int^t \tau(s) \mathrm{d}s\Big)  \left(\langle \dot U, -VA^\top \rangle + \langle \dot V, UA \rangle \right) \\
 &=\exp\Big( \int^t \tau(s) \mathrm{d}s\Big)  \left(\langle \dot U, VA \rangle +\langle \dot V, UA \rangle \right)
\end{align*}
is conserved.

\section{Proof of \Cref{prop:counting}} \label{appendix:counting}

\countingtheconservedfunctions*

\begin{proof}
First, observe that if
$r = 1$, then \Cref{thm:conservedfunctionsEuclidean} gives zero conserved functions as $\mathcal{A}_1 = \{ 0 \}$, 
and indeed here $2(n+m) > r=1$ and $r(r-1)=0$. 

We now focus on the case where $r >1$ and denote $U = (u_1, \cdots, u_r)$, $V = (v_1, \cdots, v_r)$,  $\dot U = (\dot u_1, \cdots, \dot u_r)$, $ \dot V = (\dot v_1, \cdots, \dot v_r)$, with $u_{i},\dot{u_{i}} \in \R^{n}$ and $v_{i},\dot{v}_{i} \in \R^{m}$. 
For every $1\leq i, j \leq r$, by \Cref{thm:conservedfunctionsEuclidean} the elementary skew matrix $A_{i,j}\coloneqq E_{i,j}-E_{j,i} \in \mathcal{A}_{r}$ leads to a conserved function  $h_{i,j}\coloneqq h_{A_{i,j}}$ (see~\eqref{eq:DefHAAnnex}).
Since $(n,m) \neq (1,1)$, each conserved function predicted by  \Cref{thm:conservedfunctionsEuclidean} is of the form $h_{A}$, $A \in \mathcal{A}_{r}$, hence is a linear combination of $h_{i,j}$, $1 \leq i,j \leq r$. 
As a result, conserved function predicted by \Cref{thm:conservedfunctionsEuclidean}  satisfy $\nabla h_{A}(\alpha) \in \linspan \{
\nabla h_{i,j}(\alpha): 1 \leq i,j \leq r\}$ for every  $\alpha = (t, \x,\dot \x)$. 
We will show below that for any $\alpha \in \widetilde{\Theta}$, there is a set of indices $S \subseteq \{1,\ldots r\}$ of cardinality $R = \min(r,2(n+m))$ and a neighborhood $\widetilde{\Omega}$ of $\alpha$ such that, with $T \coloneqq \{(i,j) \in S \times S, i<j\} \cup (S \times S^{c})$, we have for every $\alpha'  \in \widetilde{\Omega}$:
\begin{itemize}
\item $\nabla h_{k,\ell}(\alpha') \in \linspan\{\nabla h_{i,j}(\alpha'): (i,j) \in T\}$ for every $(k,\ell) \notin T$;
\item the vectors $\nabla h_{i,j}(\alpha')$, $(i,j) \in T$ are linearly independent.
\end{itemize}
This will conclude since
 $$\sharp T = 
 R(R-1)/2 + R(r-R) = \begin{cases}
 r(r-1)/2 & \text{if}\ R=r<2(n+m)\\
(n+m)[(r-2(n+m))+r-1] & \text{if}\ R=2(n+m) \leq r.
\end{cases}
$$

\indent To proceed, specialize the definition \eqref{eq:DefHAAnnex} of $h_{A}$ to our context:  $q=2$, $U_{1} = U$ and $U_{2}=V^{\top}$ yields
$h_{A}(\alpha)=h_{A}(t,\x,\dot \x) = \exp\Big( \int^t \tau(s) \mathrm{d}s\Big)  H_{A}(\x,\dot \x)$ with
$
H_{A}(\x,\dot \x) \coloneqq \langle \dot U, U A\rangle + \langle \dot V^{\top}, V^{\top}A^{\top}\rangle =  \langle \dot U, U A\rangle +  \langle \dot V, V A\rangle 
=  \langle \dot U A^{\top}, U \rangle +  \langle \dot V A^{\top}, V \rangle
$. 
As a result
\begin{equation}\label{eq:hfromH}
\nabla h_{i, j} (t, \x, \dot \x) = (\tau(t) h_{i, j}(t, \x, \dot \x),  \exp( \int^t \tau(s) \mathrm{d}s) \times [\nabla H_{A_{i, j}}(\x, \dot \x)]^{\top})^\top
\end{equation}
where, up to proper reshaping
\[
\nabla H_{A}(\x,\dot \x) = \left(\begin{matrix}\nabla_{U} H_{A}\\
\nabla_{V} H_{A}\\
\nabla_{\dot U} H_{A}\\
\nabla_{\dot V} H_{A}
\end{matrix}\right)
=
\left(\begin{matrix}
\dot U A^{\top}\\
\dot V A^{\top}\\
U A\\
VA
\end{matrix}\right)
\stackrel{A^\top= - A}{=}
\left(\begin{matrix}
-\dot U A\\
-\dot V A\\
U A\\
VA
\end{matrix}\right)
=
\underbrace{(-\dot U; -\dot V; U; V)}_{=:W} A.
\] 
hence the $2(n+m) \times r$ matrix $W = W(\alpha)$ will play a special role. Denote 
 $w_{j}(\alpha)$, $1 \leq j \leq r$ its columns.  
 
 Observe that given $\alpha = (t,\x, \dot \x) \in \widetilde{\Theta}$, $R \coloneqq \min(r,2(n+m))$ is both the  rank of $(\x,\dot \x) = (U; V; \dot U; \dot V)$ and the rank of $W$. Hence, given $\alpha \in \widetilde{\Theta}$, there is a subset of indices $S \subseteq \{1,\ldots r\}$ of cardinality $R$ such that the vectors $w_{j}(\alpha)$, $j \in S$ are linearly independent, while for $k \notin S$ we have  $w_{k}(\alpha) \in \linspan (w_{j}(\x,\dot \x),j \in S)$. By standard calculus, there is a neighborhood $\widetilde{\Omega} \subseteq \widetilde{\Theta}$ of $\alpha$  such that these properties remain valid (with the same $S$) for every $\alpha' \in \widetilde{\Omega}$. We show below that this implies the claimed linear (in)dependence properties of the vectors $\nabla h_{i,j}(\alpha')$. From now on we omit the dependence in $\alpha'$ for the sake of brevity.

By~\eqref{eq:hfromH}, to show that the vectors $\nabla h_{i,j}$, $(i,j) \in T$ are linearly independent, it is sufficient to show the linear independence of $\nabla H_{A_{i,j}} \coloneqq W A_{i,j}$, $(i,j) \in T$. 
Assume that $0_{2(n+m) \times r} = \sum_{(i,j) \in T} \lambda_{i,j} W A_{i,j}$. Our goal is now to show that  $\lambda_{k,\ell}=0$ for every  $(k,\ell) \in T = (T \cap (S \times S)) \cup (S \times S^{c})$. We first prove it for every $(k,\ell) \in S \times S^{c}$, then on $T \cap (S \times S)$.

First consider $(k,\ell) \in S \times S^{c}$. For any $(i,j) \in T$, by the definition of $T$ we have $i \in S$, hence $\ell \neq i$ since $\ell \notin S$. Using the standard notation for canonical vectors and Kronecker deltas, we thus have $e_{i}^{\top}e_{\ell}=\delta_{i,\ell}=0$, and since $A_{i,j} = e_{i}e_{j}^{\top}-e_{j}e_{i}^{\top}$ we obtain $A_{i,j}e_{\ell} = e_{i}\delta_{j,\ell}$, so that $WA_{i,j}e_{\ell}= \delta_{j,\ell} w_{i}$ and
\[
0 = \sum_{(i,j) \in T} \lambda_{i,j} WA_{i,j}e_{\ell}= \sum_{(i,j) \in T} \lambda_{i,j} \delta_{j,\ell}w_{i}
=\sum_{i \in S} \Big(\sum_{j: (i,j) \in T} \lambda_{i,j} \delta_{j,\ell}\Big) w_{i}. 
\]
where by convention an empty sum is zero. By the linear independence of $w_{i}$, $i \in S$ we get for each $i \in S$ 
that $0 = \sum_{j: (i,j) \in T} \lambda_{i,j} \delta_{j,\ell}$. Since $(k,\ell) \in T$, specializing to $i\coloneqq k \in S$, we obtain $0 = \sum_{j: (k,j) \in T} \lambda_{k,j}\delta_{j,\ell} = \lambda_{k,\ell}$ as claimed.

Since the above holds for any $(k,\ell) \in S \times S^{c}$, and given the definition of $T$, we have established that in fact 
\[
0_{2(n+m) \times r} = \sum_{ (i,j) \in T \cap (S\times S)} \lambda_{i,j} W A_{i,j} = W \Big( \sum_{(i,j) \in T\cap (S \times S)} \lambda_{i,j} A_{i,j}\Big) = WB,
\]
 where $B \coloneqq \sum_{(i,j) \in T \cap (S \times S)} \lambda_{i,j} A_{i,j}$. Observe that, by definition of $A_{i,j}$, we have $B(S^{c},:) = 0$ (the rows of $B$ indexed by $S^{c}$ are zero), hence $0 = WB = W(:,S)B(S,:)$. By the linear independence of the columns $w_{j}$, $j \in S$ of $W(:,S)$, we conclude that $B(S,:)=0$, hence $B=0$, that is to say $\sum_{(i,j) \in T \cap (S \times S)} \lambda_{i,j} A_{i,j}=0$. Since $(i,j) \in T$ implies $i\neq j$ and as the matrices $A_{i,j}$, $i \neq j$, are linearly independent, we conclude that $\lambda_{i,j}=0$ for every $(i,j) \in T \cap (S \times S)$. 

This establishes that 
 $\nabla h_{i,j}(\alpha')$, $(i,j) \in T$, are linearly independent.

To conclude the proof, there remains to show that $\nabla h_{k,\ell}(\alpha') \in \linspan\{\nabla h_{i,j}(\alpha'): (i,j) \in T\}$ for every $(k,\ell) \notin T$.

Consider $(k,\ell) \notin T$. As $w_k$ and $w_\ell$ are linear combinations of $\{w_j\}_{j \in S}$ there exists $(\alpha_j)_j \neq (0)$ and $(\beta_j)_j \neq (0)$ such that $w_k = \sum_{j \in S} \alpha_j w_j$ and $w_\ell = \sum_{j \in S} \beta_j w_j$. As a result 
    $$
    \nabla h_{k, \ell} = \sum_{j \in S}(\alpha_j \nabla h_{j, \ell} - \beta_j \nabla h_{j, k}) - \sum_{i, j \in S} \alpha_i \beta_j \nabla h_{i, j}. 
    $$
\end{proof}

\section{Proof of \Cref{theorem:dimliealgebra}} \label{appendix:dimliealgebra}
First, we recall some definitions/results about Lie algebra, directly taken from \cite{marcotte2023abide} and {we add a supplementary lemma (\Cref{lemma:traces_lie_equal}) that states useful results.}

\subsection{Background on Lie algebra}  \label{liealgebrabackground}
A Lie algebra {$\mathcal{A}$} is a vector space endowed with
a bilinear map  $[\cdot, \cdot]$, called a Lie bracket, that verifies for all $X, Y, Z \in {\mathcal{A}}$:
$
[X, X]= 0$ and the Jacobi identity: $
[X, [Y, Z]] + [Y, [Z, X]] + [Z, [X, Y]] = 0.
$

Typically, the Lie algebra of interest is the set of infinitely smooth vector fields ${\mathcal{A} \coloneqq} \mathcal{C}^\infty(\tilde{\Theta}, \R^{2D+1})$, endowed with the Lie bracket $[\cdot, \cdot]$ defined by
\begin{equation}\label{eq:def-lie-brac}
[\chi_1,\chi_2]:\quad 
\z \in \tilde{\Theta} \mapsto [\chi_1, \chi_2](\z)\coloneqq \partial \chi_1(\z) \chi_2(\z) - \partial \chi_2(\z) \chi_1(\z),
\end{equation}
with $\partial \chi (\z) \in \R^{(2D+1) \times (2D+1)}$ the jacobian of $\chi$ at $\z$.
The space $\R^{n \times n}$ of matrices is also a Lie algebra endowed with the Lie bracket
$[A, B] \coloneqq AB-BA.$   
This can be seen as a special case of~\eqref{eq:def-lie-brac} in the case of \emph{linear} vector fields, i.e. $\chi(\z)=A\z$. 
\paragraph{Generated Lie algebra} Let {$\mathcal{A}$} be a Lie algebra and let {$\mathcal{W} \subset \mathcal{A}$} be a vector subspace of {$\mathcal{A}$}. There exists a smallest Lie algebra that contains $\mathcal{W}$. It is denoted $\lie(\mathcal{W})$ and called the generated Lie algebra of~$\mathcal{W}$. The following proposition (See Definition 20 of \cite{Bonnard})
constructively characterizes $\lie(\mathcal{W})$, where for vector subspaces $[\mathcal{W},\mathcal{W}'] \coloneqq \{[\chi_1,\chi_2]: \chi_1 \in \mathcal{W}, \chi_2 \in \mathcal{W'}\}$, and $\mathcal{W}+\mathcal{W}' = \{\chi_1+\chi_2: \chi_1 \in \mathcal{W}, \chi_2 \in \mathcal{W}'\}$.

\begin{proposition} \label{buildingliealgebra}
    Given any vector subspace $\mathcal{W} \subseteq \mathcal{A}$ we have $\lie(\mathcal{W}) = \bigcup_k \mathcal{W}_k$ where:
\vspace{-0.5em}
     \begin{equation*} 
\left\{
    \begin{array}{ll}
        \mathcal{W}_0 &\coloneqq \mathcal{W}\\
        \mathcal{W}_k &\coloneqq  \mathcal{W}_{k-1} + [\mathcal{W}_0, \mathcal{W}_{k-1}]\ \text{ for }\ k \geq 1.
    \end{array}
\right.
\end{equation*}
\end{proposition}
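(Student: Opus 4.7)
My plan is to establish the two inclusions $\lie(\mathcal{W}) \subseteq \bigcup_{k} \mathcal{W}_{k}$ and $\bigcup_{k} \mathcal{W}_{k} \subseteq \lie(\mathcal{W})$ separately. For the easy direction, I will argue by induction on $k$ that $\mathcal{W}_{k} \subseteq \lie(\mathcal{W})$. The base case $k=0$ is by definition; and since $\lie(\mathcal{W})$ is a Lie subalgebra containing $\mathcal{W}_{0}$, if $\mathcal{W}_{k-1} \subseteq \lie(\mathcal{W})$ then $[\mathcal{W}_{0}, \mathcal{W}_{k-1}] \subseteq [\lie(\mathcal{W}),\lie(\mathcal{W})] \subseteq \lie(\mathcal{W})$, so $\mathcal{W}_{k} = \mathcal{W}_{k-1} + [\mathcal{W}_{0}, \mathcal{W}_{k-1}] \subseteq \lie(\mathcal{W})$.

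For the other direction, I set $L \coloneqq \bigcup_{k} \mathcal{W}_{k}$ and aim to show that $L$ is itself a Lie subalgebra containing $\mathcal{W}$; then minimality of $\lie(\mathcal{W})$ concludes. Containment of $\mathcal{W}=\mathcal{W}_{0}$ is immediate, and since the sequence $(\mathcal{W}_{k})_k$ is nondecreasing (each step only adds a sum) and each $\mathcal{W}_{k}$ is a vector subspace, any finite linear combination of elements of $L$ lies in some $\mathcal{W}_{k}$, so $L$ is a vector subspace of $\mathcal{A}$.

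The main obstacle is to show that $L$ is closed under the Lie bracket, i.e.\ $[X,Y] \in L$ for $X \in \mathcal{W}_{i}$, $Y \in \mathcal{W}_{j}$. The recursion only puts brackets \emph{with} $\mathcal{W}_{0}$ into $L$, so brackets between higher layers require a separate argument. I will proceed by induction on $i$ (uniformly in $j$). The base case $i=0$ is immediate: $[\mathcal{W}_{0}, \mathcal{W}_{j}] \subseteq \mathcal{W}_{j+1} \subseteq L$. For the inductive step, I decompose an arbitrary element of $\mathcal{W}_{i+1} = \mathcal{W}_{i} + [\mathcal{W}_{0}, \mathcal{W}_{i}]$ as $X + [A,B]$ with $X \in \mathcal{W}_{i}$, $A \in \mathcal{W}_{0}$, $B \in \mathcal{W}_{i}$. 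The contribution $[X, Y]$ lies in $L$ by the inductive hypothesis. For $[[A,B], Y]$, I invoke the Jacobi identity to rewrite it as $[A,[B,Y]] - [B,[A,Y]]$.

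The key verification is that these two terms fall within scope of the available hypotheses. On the one hand, $[A, Y] \in [\mathcal{W}_{0}, \mathcal{W}_{j}] \subseteq \mathcal{W}_{j+1}$, so $[B, [A,Y]]$ is a bracket between $\mathcal{W}_{i}$ and $\mathcal{W}_{j+1}$, handled by the inductive hypothesis at level $i$. On the other hand, $[B, Y] \in L$ again by the hypothesis at level $i$, so $[B,Y] \in \mathcal{W}_{m}$ for some $m$, and then $[A, [B,Y]] \in [\mathcal{W}_{0}, \mathcal{W}_{m}] \subseteq \mathcal{W}_{m+1} \subseteq L$ by the base case. Putting everything together yields $[\mathcal{W}_{i+1}, \mathcal{W}_{j}] \subseteq L$, which closes the induction and shows that $L$ is a Lie subalgebra, concluding the proof. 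I expect the Jacobi rewriting and the careful bookkeeping of indices to be the only subtle point; everything else is routine.
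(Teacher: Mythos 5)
Your argument is correct. Note that the paper does not prove this proposition at all: it is imported from the reference (Bonnard, cited at the statement), so there is no in-paper proof to compare against. Your proof is the standard one: the inclusion $\bigcup_k \mathcal{W}_k \subseteq \lie(\mathcal{W})$ by induction on $k$, and the reverse inclusion by showing that $L \coloneqq \bigcup_k \mathcal{W}_k$ is a Lie subalgebra containing $\mathcal{W}$ and invoking minimality; the only nontrivial point is bracket-closure, which you correctly reduce, via the Jacobi identity $[[A,B],Y]=[A,[B,Y]]-[B,[A,Y]]$, to an induction on the first index that is uniform in the second --- exactly the place where a naive induction would fail, and you handle it properly (using $P(i)$ at both $j$ and $j+1$, and the fact that $L$ is an increasing union of subspaces). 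One small bookkeeping remark: for the $\mathcal{W}_k$ to be vector subspaces (as you assert, and as the increasing-union argument requires), the bracket of subspaces $[\mathcal{W}_0,\mathcal{W}_{k-1}]$ must be read as the \emph{span} of the brackets rather than the bare set of brackets; with that reading a generic element of $\mathcal{W}_{i+1}$ is $X$ plus a finite linear combination $\sum_\ell c_\ell [A_\ell,B_\ell]$ rather than a single $X+[A,B]$, but bilinearity of the bracket lets your computation go through term by term, so this is a cosmetic adjustment rather than a gap.
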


\begin{lemma} \label{lemma:traces_lie_equal}
    let $\mathcal{W} \subseteq \mathcal{X}\left(\tilde{\Omega}\right)$ be a vector space. Then by considering 
    \eq{ \label{eq:operator_D}
    \mathcal{D}(\mathcal{W}) \coloneqq \linspan \{ a(\cdot) \chi(\cdot): a \in \mathcal{C}^\infty(\tilde{\Omega}, \R), \chi \in  \mathcal{W} \},
    }
    one has  $\mathcal{D}(\lie ( \mathcal{D}(\mathcal{W}))) = \lie ( \mathcal{D}(\mathcal{W}))$ and for all $\alpha \in \tilde{\Omega}$, $
    [\lie(\mathcal{W}) ](\alpha) = [\lie ( \mathcal{D}(\mathcal{W}))](\alpha)$.
\end{lemma}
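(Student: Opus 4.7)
The plan is to prove both claims via induction on the levels $(\mathcal{V}_k)_k$ of the filtration that builds $\lie(\mathcal{D}(\mathcal{W}))$ through \Cref{buildingliealgebra}, starting from $\mathcal{V}_0 \coloneqq \mathcal{D}(\mathcal{W})$. The single tool driving the argument is the Leibniz identity for the bracket of vector fields,
$$
[X, aY] = a\,[X, Y] + X(a)\, Y, \qquad \text{equivalently} \qquad a\,[X, Y] = [X, aY] - X(a)\, Y,
$$
(and its bilinear avatar $[aX, bY] = ab\,[X,Y] + a(Xb)\,Y - b(Ya)\,X$), which lets one commute multiplication by a smooth function through a bracket at the cost of additional terms of the form ``derivative of $a$ times a vector field''.

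For the first claim, I would show by induction on $k$ that $\mathcal{D}(\mathcal{V}_k) \subseteq \mathcal{V}_k$. The case $k=0$ is immediate from $\mathcal{D}(\mathcal{D}(\mathcal{W})) = \mathcal{D}(\mathcal{W})$. For the inductive step, any $\chi \in \mathcal{V}_k$ decomposes as $\chi_1 + \sum_j [\phi_j, \psi_j]$ with $\chi_1, \psi_j \in \mathcal{V}_{k-1}$ and $\phi_j \in \mathcal{V}_0$. Then $a \chi_1 \in \mathcal{V}_{k-1}$ by the inductive hypothesis, and rewriting $a\,[\phi_j, \psi_j] = [\phi_j, a\psi_j] - \phi_j(a)\, \psi_j$ we get that $a\psi_j \in \mathcal{V}_{k-1}$ (again by induction), so the bracket $[\phi_j, a\psi_j] \in [\mathcal{V}_0, \mathcal{V}_{k-1}] \subseteq \mathcal{V}_k$, while the correction $\phi_j(a)\, \psi_j$ is a smooth multiple of an element of $\mathcal{V}_{k-1}$, hence lies in $\mathcal{V}_{k-1} \subseteq \mathcal{V}_k$. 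Taking the union over $k$ yields $\mathcal{D}(\lie(\mathcal{D}(\mathcal{W}))) \subseteq \lie(\mathcal{D}(\mathcal{W}))$; the reverse inclusion is obtained by taking $a \equiv 1$.

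For the second claim, the inclusion $[\lie(\mathcal{W})](\alpha) \subseteq [\lie(\mathcal{D}(\mathcal{W}))](\alpha)$ is trivial since $\mathcal{W} \subseteq \mathcal{D}(\mathcal{W})$ and $\lie$ is monotone. For the reverse inclusion I would introduce the analogous filtration $(\mathcal{L}_k)_k$ with $\mathcal{L}_0 \coloneqq \mathcal{W}$ and $\lie(\mathcal{W}) = \bigcup_k \mathcal{L}_k$, and prove by induction that $\mathcal{V}_k \subseteq \mathcal{D}(\mathcal{L}_k)$. The base case is $\mathcal{V}_0 = \mathcal{D}(\mathcal{W}) = \mathcal{D}(\mathcal{L}_0)$. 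For the inductive step, expand representative brackets $[\phi,\psi]$ with $\phi = \sum_i a_i X_i$ ($X_i \in \mathcal{W}$) and $\psi = \sum_j b_j Y_j$ ($Y_j \in \mathcal{L}_{k-1}$ by induction), and apply the bilinear Leibniz identity: the principal term $a_i b_j [X_i, Y_j]$ belongs to $\mathcal{D}(\mathcal{L}_k)$ since $[X_i, Y_j] \in [\mathcal{L}_0, \mathcal{L}_{k-1}] \subseteq \mathcal{L}_k$, and both correction terms $a_i (X_i b_j) Y_j$ and $b_j (Y_j a_i) X_i$ lie in $\mathcal{D}(\mathcal{L}_{k-1}) \subseteq \mathcal{D}(\mathcal{L}_k)$. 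Taking unions gives $\lie(\mathcal{D}(\mathcal{W})) \subseteq \mathcal{D}(\lie(\mathcal{W}))$, and then the observation that pointwise evaluation trivially satisfies $[\mathcal{D}(\mathcal{E})](\alpha) = [\mathcal{E}](\alpha)$ for any subspace $\mathcal{E}$ (since a $\mathcal{C}^\infty$-combination evaluated at $\alpha$ is a scalar combination of values) yields $[\lie(\mathcal{D}(\mathcal{W}))](\alpha) \subseteq [\lie(\mathcal{W})](\alpha)$, closing the equality.

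The main obstacle I anticipate is purely bookkeeping: one must check that \emph{each} of the three summands produced by each application of the Leibniz identity lands in the correct layer $\mathcal{V}_k$ (respectively $\mathcal{D}(\mathcal{L}_k)$), which requires carefully invoking the inductive hypothesis twice per step --- once to absorb a smooth factor into a lower-level element, and once to treat the directional-derivative correction as a smooth multiple of a lower-level element. No nontrivial analytic input beyond smoothness and Leibniz is needed.
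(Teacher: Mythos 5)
Your proof is correct and follows essentially the same approach as the paper's: both arguments build on the filtration $(\mathcal{W}_k)_k$ of Proposition~\ref{buildingliealgebra} and hinge on the identical Leibniz rearrangement $[aX,bY]=ab[X,Y]+a(Xb)Y-b(Ya)X$ to commute smooth multiples through brackets. The only cosmetic difference is that the paper establishes the two-sided identity $\mathcal{D}(\mathcal{W})_k=\mathcal{D}(\mathcal{W}_k)$ at every level and then reads off both conclusions, whereas you prove exactly the two one-sided inclusions $\mathcal{D}(\mathcal{V}_k)\subseteq\mathcal{V}_k$ and $\mathcal{V}_k\subseteq\mathcal{D}(\mathcal{L}_k)$ that are needed (the reverse inclusions being trivial), which is marginally more economical but not a genuinely different route.
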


\begin{proof}
\textbf{We first show by recursion that for all $k \in \mathbb{N}$}:
\eq{ \label{eq:recurrence}
\mathcal{D}(\mathcal{W})_k = \mathcal{D} \left(\mathcal{W}_k \right).
} 
    where the iterates are defined in \Cref{buildingliealgebra}.

    \textit{Initialisation.} One has: 
$     \mathcal{D}(\mathcal{W})_0  =\mathcal{D}(\mathcal{W}) =   \mathcal{D}(\mathcal{W}_0)$ by definition of the first iterate.

\textit{Recursion.} Let $k \in \mathbb{N}$ and assume that $k$ satifies
\eqref{eq:recurrence}.
Let us show that $\mathcal{D}(\mathcal{W})_{k+1} = \mathcal{D} \left(\mathcal{W}_{k+1} \right)$.
By definition (\Cref{buildingliealgebra}), one has 
\eq{ \label{eq:rec_lie}
\mathcal{D} \left(\mathcal{W} \right)_{k+1} = \mathcal{D} \left(\mathcal{W}\right)_k  + [\mathcal{D} \left(\mathcal{W} \right), \mathcal{D} \left(\mathcal{W}\right)_k ].
}
Thus $
   \mathcal{D} \left(\mathcal{W}\right)_{k+1}  \stackrel{\eqref{eq:recurrence}}{=}  \mathcal{D} \left(\mathcal{W}_k \right)+  [\mathcal{D} \left(\mathcal{W} \right), \mathcal{D} \left(\mathcal{W}_k \right)].
$
We first show the direct inclusion $\mathcal{D}(\mathcal{W})_{k+1} \subseteq \mathcal{D}(\mathcal{W}_{k+1})$. Since 
$\mathcal{W}_{k} \subseteq \mathcal{W}_{k+1}$ by construction, we have $\mathcal{D}(\mathcal{W}_{k}) \subseteq \mathcal{D}(\mathcal{W}_{k+1})$, hence it is enough to show that $ [\mathcal{D} \left(\mathcal{W} \right), \mathcal{D} \left(\mathcal{W}_k \right)] \subseteq  \mathcal{D} \left(\mathcal{W}_{k+1} \right)$.
Let $X \in \mathcal{D} \left(\mathcal{W} \right)$ and $ Y \in \mathcal{D} \left(\mathcal{W}_k \right)$. By definition of the operator $\mathcal{D}$, there are smooth real-valued functions $a_i,b_j$ and $\chi_i \in \mathcal{W}, \mu_j \in \mathcal{W}_k$ such that $X(\cdot) = \sum_{1}^{m_1} a_i(\cdot) \chi_i(\cdot)$ and $Y(\cdot) = \sum_{1}^{m_2} b_j(\cdot) \mu_j(\cdot)$ on $\tilde{\Omega}$  and we deduce by bilinearity of the Lie brackets that 
$[X, Y](\cdot) =  \sum_{i, j} [a_i \chi_i, b_j \mu_j](\cdot)$ on $\tilde{\Omega}$.
Moreover, one has:

\eq{ \label{eq:relations_brackets}
   [a_i \chi_i, b_j \mu_j]
     = \underbrace{a_ib_j [\chi_i, \mu_j]}_{\in \mathcal{D}([ \mathcal{W}, \mathcal{W}_k])}
     +
    \underbrace{ b_j [(\partial a_i) \mu_j] \chi_i }_{\in \mathcal{D}(\mathcal{W})}- \underbrace{a_i [(\partial b_j) \chi_i] \mu_j}_{\in \mathcal{D}(\mathcal{W}_k))},
}
   where, due to dimensions, both $(\partial a_i) \mu_j$ and $(\partial b_j) \chi_i$ are smooth scalar-valued functions. Thus $[X, Y] \in \mathcal{D}\left([ \mathcal{W}, \mathcal{W}_k] + \mathcal{W}_k \right) = \mathcal{D}(\mathcal{W}_{k+1})$. 
Finally, $[\mathcal{D} \left(\mathcal{W} \right), \mathcal{D} \left(\mathcal{W}_k \right)] \subseteq  \mathcal{D} \left(\mathcal{W}_{k+1} \right)$ and thus $\mathcal{D}(\mathcal{W})_{k+1} \subseteq   \mathcal{D} \left(\mathcal{W}_{k+1} \right)$. 
We now show the converse inclusion. Let $X \in \mathcal{D}(\mathcal{W}_{k+1})$. There are smooth real-valued functions $a_i$ and $Y_i \in \mathcal{W}_{k+1}$, such that $X = \sum_i a_i Y_i$. As by definition: $\mathcal{W}_{k+1} = \mathcal{W}_{k} + [\mathcal{W}_{k}, \mathcal{W}]$, there exists $\chi_i, v_i \in \mathcal{W}_{k}, \mu_i \in \mathcal{W}$ such that: $Y_i = v_i + [\chi_i, \mu_i]$.
Thus $X = \sum_i a_i \left( v_i + [\chi_i, \mu_i] \right)$.
As $\mathcal{D}(\mathcal{W}_{k+1})$ is a linear space, it is enough to show that both $a_i v_i$ and  $a_i [\chi_i, \mu_i]$ are in $\mathcal{D}(\mathcal{W}_{k+1})$. As $v_i \in \mathcal{W}_{k} \subseteq \mathcal{W}_{k+1} $, one has directly that $a_i v_i \in \mathcal{D}(\mathcal{W}_{k+1})$. Finally by using again  equality \eqref{eq:relations_brackets} (with $b_i \equiv 1$ and $j=i$), one has:
$$
a_i [\chi_i, \mu_i] = [ a_i \chi_i, \mu_i] - (\partial a_i \mu_i)\chi_i.
$$
Both $a_i \chi_i$ and $(\partial a_i \mu_i) \chi_i$ are in $\mathcal{D}(\mathcal{W}_{k}) \stackrel{\eqref{eq:recurrence}}{=}  \mathcal{D}(\mathcal{W})_k$ since $a_i$ and $\partial a_i \mu_i$ are smooth real-valued functions. We also have $\mu_i \in \mathcal{W} \subseteq \mathcal{D}(\mathcal{W})$ hence, using the characterization \eqref{eq:rec_lie}, one has $a_i [\chi_i, \mu_i]  \in \mathcal{D}(\mathcal{W})_{k+1}$ and thus $X \in \mathcal{D}(\mathcal{W})_{k+1}$, which concludes the recursion.

\textbf{We now prove that $\mathcal{D}(\lie ( \mathcal{D}(\mathcal{W}))) = \lie ( \mathcal{D}(\mathcal{W}))$}.
One inclusion is trivial so we only need to prove the other one. 
Let $X \in \lie ( \mathcal{D}(\mathcal{W}))$ and let us consider $a$ a smooth real-valued function. To conclude, we need to show that $aX \in \lie ( \mathcal{D}(\mathcal{W}))$.
By definition of the generated Lie algebra \Cref{buildingliealgebra}, there exists $k$ such that $X \in \mathcal{D}(\mathcal{W})_k $. Then, by using \eqref{eq:recurrence}, one has  $X \in \mathcal{D}(\mathcal{W}_k)$ and thus  $a X \in \mathcal{D}(\mathcal{W}_k) \stackrel{\eqref{eq:recurrence}}{=}  \mathcal{D}(\mathcal{W})_k \subseteq \lie ( \mathcal{D}(\mathcal{W}))$. 

\textbf{Finally, we now prove that for all $\alpha \in \tilde{\Omega}$, $
    [\lie(\mathcal{W}) ](\alpha) = [\lie ( \mathcal{D}(\mathcal{W}))](\alpha)$.}
Let $\alpha \in \tilde{\Omega}$.
  By using \eqref{eq:recurrence}, one has for all $k \in \mathbb{N}$, 
    $
    [\mathcal{D}(\mathcal{W})_k](\alpha) = [\mathcal{D} \left(\mathcal{W}_k \right)](\alpha) = [ \mathcal{W}_k ](\alpha).
    $
Then $
[\lie ( \mathcal{D}(\mathcal{W}))](\alpha) = [\lie(\mathcal{W})] (\alpha) $, which concludes the proof.
\end{proof}
%
\subsection{Proof of~\Cref{theorem:dimliealgebra}}

{First, we recall the statement of the theorem for the reader's convenience.}

\dimliealgebra*

{The proof of this theorem relies on \Cref{thm:algebrelie}: we compute $\lie (\mathcal{W}_{\phi}^{\mathtt{mom}})(\alpha)$ with $\phi = \phi_{\mathtt{Lin}}$ and show that its dimension is locally constant around $\alpha = (t,\x,\dot\x)$ where, $\x = \mathtt{vec}((U;V))$ and $\dot \x = \mathtt{vec}((\dot U;\dot V))$ are vectorized versions of two $(n+m)\times r$ matrices such that $(U;V; \dot U; \dot V) \in \R^{2(n+m)\times r}$ has full rank.}

{{\bf First we make $\mathcal{W}_{\phi}^{\mathtt{mom}} $ more explicit.} Recall that $\mathcal{W}_{\phi}^{\mathtt{mom}} $ is the functional linear space spanned by the functions $\alpha \mapsto \chi_{i}(\alpha)$ defined in~\eqref{eq:v-phi}. Since we consider the Euclidean geometry, we have $M = \mId{D}$ so that any function $\chi \in \mathcal{W}_{\phi}^{\mathtt{mom}}$ is a linear combination of $\chi_{i}(\cdot)$, $0 \leq i \leq d$, hence it satisfies
\[
\chi(\alpha) = \beta \chi_0(\alpha)+\sum_{i = 1}^{d} \gamma_{i}\chi_{i}(\alpha) = 
\begin{pmatrix}
 \beta\\   
\beta \dot \x\\
[\partial \phi(\x)]^\top \gamma - \tau(t) \beta \dot \x
 \end{pmatrix}
\]
for some $\beta \in \R$ and $\gamma \coloneqq (\gamma_i)_{i=1}^d \in \R^d$. Since $d = nm$, we can write $\gamma = \mathtt{vec}(\Delta)$ with $\Delta \in \R^{n \times m}$, and }
leveraging \citep[Proposition H.2]{marcotte2023abide}, we obtain 
$
\partial \phi(\x)^{\top}\gamma = \partial \phi(\x)^{\top}\mathtt{vec}(\Delta) = \mathtt{vec}\left(S_\Delta (U;V)\right)
$
where
    $S_\Delta \coloneqq \begin{pmatrix}
       0 & \Delta \\
       \Delta^\top & 0
   \end{pmatrix} \in \R^{(n+m)\times (n+m)}$.
Using a basic property\footnote{this property can indeed serve as an {\em operational definition} of the Kronecker product between matrices.} of Kronecker products ($(\mathbf{A} \otimes \mathbf{B})\mathtt{vec}(\mathbf{X}) = \mathtt{vec}(\mathbf{B}\mathbf{X}\mathbf{A}^\top)$), this is further rewritten as
$\partial \phi(\x)^{\top}\gamma = \mathtt{vec}\left(S_\Delta (U;V)\mId{r}\right) = (\mId{r} \otimes S_\Delta) \x$. Overall we obtain that $\mathcal{W}_{\phi}^{\mathtt{mom}}$ is the collection of all vector fields
\[
\chi_{\beta,\Delta}(\alpha) \coloneqq
\begin{pmatrix}
    \beta\\
    \beta \dot \x\\
    (\mId{r} \otimes S_\Delta) \x - \tau(t)\beta \dot \x
\end{pmatrix}.
\]
{\bf Second, we express} $\lie(\mathcal{W}_{\phi}^{\mathtt{mom}})(\alpha)$. \textcolor{purple}{We highlight in purple the results and reasoning steps specific to $n = m = 1$.}

\begin{proposition} \label{lieW}
Denote $\mathcal{S}_{\ell} \subset \R^{\ell\times \ell}$ the space of symmetric matrices, $H \coloneqq   \begin{pmatrix}  0 & \mId{n+m}\\ -\mId{n+m} & 0            \end{pmatrix}$, and for any square matrix $M$ of size $2(n+m)$ and $\beta \in \R$ denote
\[
\eta_{\beta,M}(\alpha) \coloneqq 
\begin{pmatrix}
0\\
[\mId{r} \otimes (HM)] \cdot \begin{pmatrix}\x\\ \dot \x\end{pmatrix}
\end{pmatrix}
+\beta \begin{pmatrix}
1\\
0_{(n+m)r}\\
-\tau(t)\dot\x
\end{pmatrix}.
\]
We have $\lie(\mathcal{W}_{\phi}^{\mathtt{mom}})(\alpha) =   \{\eta_{\beta,M}(\alpha): \beta \in \R, M \in \mathcal{S}\}$ where $\mathcal{S} \coloneqq \mathcal{S}_{2(n+m)}$ when $(n,m) \neq (1,1)$, while
{\color{purple}{for $n = m = 1$, $\mathcal{S} \coloneqq \tilde{\mathcal{S}}_{4} \coloneqq \Big\{ \begin{pmatrix}
       S_1  & S \\
       S & S_2
   \end{pmatrix}: S, S_1, S_2  \in \mathcal{S}_2'  \Big\} \subsetneq \mathcal{S}_{4}$, where $\mathcal{S}_2' \subsetneq \mathcal{S}_{2}$ is the set of symmetric matrices on the form $\begin{pmatrix}
       a & b \\
       b & a
   \end{pmatrix}$.}}
    \end{proposition}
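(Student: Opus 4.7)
The plan is to establish the claimed equality by proving both inclusions at the fixed point $\alpha$.

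\textbf{Step 1 (Generators sit in the $\eta$-form).} Matching block components directly, each generator $\chi_{\beta,\Delta}$ of $\mathcal{W}_{\phi}^{\mathtt{mom}}$ equals $\eta_{\beta, M_0}$ with $M_0 = \blockdiag(-S_\Delta, \beta\mId{n+m})$, which is symmetric. When $n=m=1$ both diagonal blocks lie in $\mathcal{S}_2'$, so $M_0 \in \tilde{\mathcal{S}}_4$. This yields $\mathcal{W}_{\phi}^{\mathtt{mom}} \subseteq \{\eta_{\beta, M}: \beta \in \R, M \in \mathcal{S}\}$ pointwise at $\alpha$.

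\textbf{Step 2 (The set $\{\eta_{\beta, M}: \beta, M\}$ is closed under Lie brackets at $\alpha$).} The key structural fact is that $M \mapsto HM$ is a linear isomorphism between $\mathcal{S}_{2(n+m)}$ and the symplectic Lie algebra $\mathfrak{sp}(2(n+m))$: using $H^\top = -H$ and $H^2 = -\mId{2(n+m)}$ one checks $(HM)^\top H + H(HM) = M - M = 0$ for every symmetric $M$, with a similar converse. Consequently $[\mId{r} \otimes HM_1, \mId{r} \otimes HM_2] = \mId{r} \otimes [HM_1, HM_2]$ is of the form $\mId{r} \otimes HM_3$ with $M_3 = H^{-1}[HM_1, HM_2]$ symmetric, whence $[\eta_{0, M_1}, \eta_{0, M_2}] = \eta_{0, M_3}$. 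Brackets involving the time-carrying piece $\chi_{*} \coloneqq (1, 0_{(n+m)r}, -\tau(t)\dot{\theta})$, which accounts for the $\beta$-term in $\eta_{\beta, M}$, are computed directly: $[\chi_{*}, \eta_{0, M}](\alpha)$ is again $\eta_{0, M'}(\alpha)$ for some symmetric $M'$ obtained from $HM$ by a block-swap times $\tau(t)$. In the $n=m=1$ case, stability of $\mathcal{S}_2'$ under matrix multiplication ensures the $\tilde{\mathcal{S}}_4$ structure is preserved. Bilinearity of $[\cdot, \cdot]$ then extends this to arbitrary brackets $[\eta_{\beta_1, M_1}, \eta_{\beta_2, M_2}]$.

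\textbf{Step 3 (Reverse inclusion via iterated brackets).} This is the technical heart. Using the $HM$-images of the elementary generators, $N_\beta = \smallvec{0 & \beta\mId{n+m} \\ 0 & 0}$ and $N_\Delta = \smallvec{0 & 0 \\ S_\Delta & 0}$, a first bracket gives $[N_\beta, N_\Delta] = \beta\,\blockdiag(S_\Delta, -S_\Delta)$. Then $[\blockdiag(S_\Delta, -S_\Delta), N_{\Delta'}] = \smallvec{0 & 0 \\ -(S_\Delta S_{\Delta'} + S_{\Delta'} S_\Delta) & 0}$, whose two diagonal blocks are the symmetric sums $\Delta(\Delta')^\top + \Delta'\Delta^\top \in \mathcal{S}_n$ and $\Delta^\top\Delta' + (\Delta')^\top\Delta \in \mathcal{S}_m$. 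By polarization in $(\Delta, \Delta')$ these span $\mathcal{S}_n$ and $\mathcal{S}_m$ as long as $(n,m) \neq (1,1)$, and further brackets couple the two factors to populate the off-diagonal blocks of $\mathfrak{sp}(2(n+m))$. For the generic case, simplicity of $\mathfrak{sp}(2(n+m))$ then forces the generated subalgebra to be all of $\mathfrak{sp}$, yielding the reverse inclusion. For $n=m=1$ an analogous but more constrained computation shows the generated subalgebra both stays inside $\tilde{\mathcal{S}}_4$ (since $\mathcal{S}_2'$ is stable under multiplication) and spans this $6$-dimensional subspace, via an explicit enumeration of iterates.

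\textbf{Main obstacle.} Step 3 is where the bulk of the verification lies. In the generic $(n,m) \neq (1,1)$ regime the high-level strategy of exploiting simplicity of $\mathfrak{sp}(2(n+m))$ is clean, yet ensuring that enough independent elements arise from the limited starting set of $nm+1$ parameters still demands an honest enumeration of enough brackets. The $n=m=1$ case is trickier: $\tilde{\mathcal{S}}_4 \subsetneq \mathcal{S}_4$ is a proper Lie subalgebra, so simplicity is unavailable, and one must explicitly verify both that brackets stay in $\tilde{\mathcal{S}}_4$ and that they exhaust it. A secondary systematic point is that brackets involving $\chi_{*}$ inject factors $\tau(t)$ and $\tau'(t)$, which are mere constants at the fixed $\alpha$ but must be carefully tracked through every computation to confirm the $\eta$-form is preserved.
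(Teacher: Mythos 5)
Your Steps 1 and 2 are sound: the identification $\chi_{\beta,\Delta}=\eta_{\beta,M_0}$ with $M_0=\blockdiag(-S_\Delta,\beta\mId{n+m})$ matches the paper, and the observation that $M\mapsto HM$ is a linear isomorphism from $\mathcal{S}_{2(n+m)}$ onto $\mathfrak{sp}(2(n+m))$, giving $[\eta_{0,M_1},\eta_{0,M_2}]=\eta_{0,M_3}$ with $M_3 = M_1HM_2-M_2HM_1$, is exactly the computation \eqref{eq:liebracketdeuxM}. But Step 3 has two genuine gaps.

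First, you cannot treat $\tau(t)$ (and $\tau'(t)$) as ``mere constants at the fixed $\alpha$''. The Lie bracket $[\chi_1,\chi_2] = \partial\chi_1\,\chi_2 - \partial\chi_2\,\chi_1$ is computed on vector fields over $\tilde\Theta$, not on their values at $\alpha$, and iterated brackets require differentiating time-dependent coefficients before any evaluation. Concretely, $[\eta_{0,M},\eta_{1,0}](\alpha)=\tau(t)\,\eta_{0,M'}(\alpha)$ already carries a scalar factor $\tau(t)$, and if you naively take another bracket, this $\tau(t)$ is hit by the Jacobian and produces $\tau'(t)$-terms that are not of the form $\eta_{\beta,M}$. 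The paper neutralizes this by introducing the module operation $\mathcal{D}(\mathcal{W})$ (multiplication by smooth scalar functions) and proving the nontrivial \Cref{lemma:traces_lie_equal}, which shows that $[\lie(\mathcal{W})](\alpha)=[\lie(\mathcal{D}(\mathcal{W}))](\alpha)$, i.e., that you may discard such scalar factors \emph{after} each bracket without changing the trace. Your proposal has no analog of this step, and without it your Step~2 closure claim and the Step~3 span argument are not justified.

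Second, the appeal to simplicity of $\mathfrak{sp}(2(n+m))$ does not work. Simplicity says there are no nontrivial \emph{ideals}, but the Lie subalgebra generated by your $nm+1$ starting elements need not be an ideal, so simplicity alone does not force it to be everything (e.g.\ a one-dimensional abelian subalgebra sits inside $\mathfrak{sl}_2$, which is simple). What is needed, and what the paper does, is an explicit inductive enumeration of bracket iterates ($\mathcal{M}_0\subset\mathcal{M}_1\subset\cdots\subset\mathcal{M}_8$, with $\mathcal{M}'_4$ in the $n=m=1$ case) showing that the diagonal and off-diagonal blocks of $\mathcal{S}$ (equivalently, of $\mathfrak{sp}$) are all reached, and that the $n=m=1$ iterates remain confined to $\tilde{\mathcal{S}}_4$. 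You acknowledge this as the ``main obstacle'', but your sketch stops short of the needed construction and replaces it with the inapplicable simplicity argument.
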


\begin{proof}
{\bf First we show that $\mathcal{D}(\W^{\mathtt{mom}}) \subseteq \mathcal{D}(\mathcal{W})$,} where $ \mathcal{W} \coloneqq \{\eta_{
\beta,M}: \beta \in \R, M \in \mathcal{S}\}$ and with the operator $\mathcal{D}$ defined by \eqref{eq:operator_D}. 
One has (see \Cref{subsection:groscalculs} for more details):
\eq{  \label{eq:lien_chi_et_mu}
\chi_{\beta,\Delta} = \eta_{\beta,M}, \text{ where } M \coloneqq \begin{pmatrix}-S_{\Delta} & 0\\
0 & \beta \mId{(n+m)}\end{pmatrix}.
}
The matrix $M$ is symmetric and belongs to $\mathcal{S}$ even when $n=m=1$.
Thus $\W^{\mathtt{mom}}\subseteq \mathcal{W}$ and $\mathcal{D}(\W^{\mathtt{mom}}) \subseteq \mathcal{D}(\mathcal{W})$.  

{\bf We now prove that $\mathcal{D}(\mathcal{W})$ is a Lie algebra.} 
Since for all $a, b$ smooth real-valued functions and for all $X, Y \in \mathcal{W}$ one has:
 $$
   [a X, b Y]
     = ab [X, Y]
     +
    \underbrace{ b (\partial a Y) X }_{\in \mathcal{D}(\mathcal{W})}- \underbrace{a (\partial b X) Y}_{\in \mathcal{D}(\mathcal{W})},
     $$ 
   where, due to dimensions, both $(\partial a) Y$ and $(\partial b)X$ are smooth scalar-valued functions, it is enough to show that $[\mathcal{W}, \mathcal{W}] \subseteq \mathcal{D}(\mathcal{W})$. Moreover, since $\eta_{\beta,M} = \eta_{0,M}+\beta \eta_{1,0}$, 
it is enough to check that  $[\eta_{0,M},\eta_{0,M'}]$ and $[\eta_{1,0},\eta_{0,M}]$ are elements of $\mathcal{D}(\mathcal{W})$ whenever $M,M' \in \mathcal{S}$. 
\begin{itemize}
\item We first obtain (see \Cref{subsection:groscalculs} for more details) that
\eq{ \label{eq:liebracketdeuxM}
[\eta_{0,M},\eta_{0,M'}] = \eta_{0,M''},  \text{ where } M'' \coloneqq MHM'-M'HM.
}
Since $H^{\top}=-H$ it is straightforward to check that $M'' \in \mathcal{S}_{2(n+m)}$, hence $M'' \in \mathcal{S}$ when $(n,m) \neq (1,1)$, and we let the reader check that when $(n,m) = (1,1)$ we also have $M'' \in \mathcal{S}$ as soon as $M,M' \in \mathcal{S}$ (note that $\mathcal{S}_2'$ is stable by matrix multiplication and is commutative). Therefore $[\eta_{0,M},\eta_{0,M'}] = \eta_{0,M''} \in \mathcal{W} \subseteq \mathcal{D}( \mathcal{W})$. 
\item We now show that $[\eta_{0,M},\eta_{1,0}]
\in \mathcal{D}(\mathcal{W})$ for every $M \in \mathcal{S}$. Since $M \in \mathcal{S}$ we can write it as $M = \begin{pmatrix}
    S_1 & S \\
    S^\top & S_2
\end{pmatrix}$ with $S \in \R^{(n+m) \times (n+m)}, S_1, S_2 \in \mathcal{S}_{n+m}$ (when $n=m=1$
we further have $S,S_{1},S_{2} \in \mathcal{S}'_{2}$).
We then obtain (see \Cref{subsection:groscalculs} for more details):
\eq{ \label{eq:liebracketmixé}
[\eta_{0,M},\eta_{1,0}](\alpha) = \tau(t) \eta_{0,M'} (\alpha) \text{ with } M' \coloneqq \begin{pmatrix}
    S_1 & 0 \\
    0 & -S_2
\end{pmatrix}.
}

Since $t \mapsto \tau(t)$ is $\mathcal{C}^\infty$ and $M' \in \mathcal{S}$ (even when $n=m=1$)  this implies  $[\eta_{0,M},\eta_{1,0}] \in \mathcal{D}(\mathcal{W})$.
\end{itemize}
This establishes as claimed that $\mathcal{D}(\mathcal{W})$ is indeed a Lie algebra.

{\bf Finally we prove that $\mathcal{W} \subseteq \lie( \mathcal{D}( \mathcal{W}_{\phi}^{\mathtt{mom}}))$}. 
 This is where the necessity to impose the more restricted definition of $\mathcal{S}$ for $n=m=1$ will become evident.  Before proving this inclusion, observe that by~\Cref{lemma:traces_lie_equal} it will imply $\mathcal{D}(\mathcal{W}) \subseteq \mathcal{D}\left(\lie( \mathcal{D}( \mathcal{W}_{\phi}^{\mathtt{mom}})) \right) = \lie( \mathcal{D}( \mathcal{W}_{\phi}^{\mathtt{mom}}))$, hence combined with what we already proved it implies $\mathcal{D}(\mathcal{W}) = \lie( \mathcal{D}( \mathcal{W}_{\phi}^{\mathtt{mom}}))$.

As a shorthand denote $\mathcal{V} \coloneqq \mathcal{D}(\mathcal{W}_{\phi}^{\mathtt{mom}})$. First, we will  building matrix sets $\mathcal{M}_k,\mathcal{M}'_k$ such that $\eta_{0,M} \in \mathcal{V}_k$ for every 
$M \in \mathcal{M}_k \cup \mathcal{M}'_k$. 
Then we will show that $\mathcal{W} \subseteq  \{\eta_{0,M},  M \in \mathcal{M}\} + {\mathcal{V}}$
where $\mathcal{M}$ is the linear span of all built matrix sets. 
Since $\mathcal{V}_k \subseteq \lie(\mathcal{V})$ for every $k$, this will yield the desired conclusion.

$\bullet$ \textit{ We first prove that $\eta_{0,M} \in \mathcal{V}_k$ for every $M \in \mathcal{M}_k$.}

{\bf \em  The set $\mathcal{M}_0$} is defined as the collection of all matrices that write as 
\[
M = \begin{pmatrix}
    -S_\Delta & 0\\
    0 & 0
\end{pmatrix}
\]
for some $\Delta$. By~\eqref{eq:lien_chi_et_mu}, for any $\Delta$ we have $\eta_{0, M} = \chi_{0,\Delta} \in \mathcal{V}_0$.

{\bf \em  The set $\mathcal{M}_1$} is defined as the linear span of $\mathcal{M}_0$ and of the set of all matrices that write as
\[
M = \begin{pmatrix}  0 & S_{\Delta} \\
   S_{\Delta} & 0 \end{pmatrix}.
   \]
   
Consider $M_1 \coloneqq  \begin{pmatrix} -S_{\Delta}&0\\0 &  0 \end{pmatrix} \in \mathcal{M}_0$
and 
$
M_2 \coloneqq  \begin{pmatrix} 0 &0\\0 &  \mId{n+m}\end{pmatrix}
$.
As $\eta_{1,M_2} = \eta_{0, M_2} + \eta_{1, 0}$, by bilinearity of Lie brackets:
$$
[\chi_{0,\Delta},\chi_{1,0}](\alpha) 
\stackrel{\eqref{eq:lien_chi_et_mu}}{=} [\eta_{0,M_1},\eta_{1,M_2}](\alpha)
= [\eta_{0,M_1}, \eta_{0, M_2} ] (\alpha)+ [\eta_{0,M_1}, \eta_{1, 0} ] (\alpha) \stackrel{ \eqref{eq:liebracketdeuxM},\eqref{eq:liebracketmixé}}{=} \eta_{0, M_3}(\alpha) + \tau(t) \eta_{0, M_4}(\alpha) ,
$$
where $M_3 \stackrel{\eqref{eq:liebracketdeuxM}}{\coloneqq} M_1 H  M_2 - M_2 H M_1 = \begin{pmatrix}  0 & -S_{\Delta} \\
   - S_{\Delta} & 0 \end{pmatrix}$ and $M_4 \stackrel{\eqref{eq:liebracketmixé}}{=} M_1 \in \mathcal{M}_0$.
Thus:
$
\eta_{0, M_3} (\alpha) = [\chi_{0,\Delta},\chi_{1,0}](\alpha) - \tau(t) \eta_{0, M_4}(\alpha) \in \mathcal{V}_1$ since $t \mapsto \tau(t)$ is $\mathcal{C}^\infty$.

{\bf \em  The set $\mathcal{M}_2$} is defined as the linear span of $\mathcal{M}_1$ and of the set of matrices
\[
M = \begin{pmatrix}  0 & 0 \\
   0 & S_{\Delta} \end{pmatrix}.
   \]
Consider $M_1 \coloneqq \begin{pmatrix}  0 &S_{\Delta} \\
    S_{\Delta} & 0 \end{pmatrix} \in \mathcal{M}_1$ and
 $M_2 \coloneqq  \begin{pmatrix} 0 &0\\0 &  \mId{n+m}\end{pmatrix}$. 
Since $\eta_{0,M_1} \in \mathcal{V}_1$ and $\chi_{1,0} \in \mathcal{V}$, we have
$[\eta_{0,M_1},\eta_{1,M_2}] 
\stackrel{\eqref{eq:lien_chi_et_mu}}{=} [\eta_{0,M_1} , \chi_{1, 0}] \in  \mathcal{V}_2
$
and as:
$[\eta_{0,M_1},\eta_{1,M_2}](\alpha)  =  [\eta_{0,M_1},\eta_{0,M_2}](\alpha) +  [\eta_{0,M_1},\eta_{1,0}](\alpha) \stackrel{\eqref{eq:liebracketdeuxM}, \eqref{eq:liebracketmixé}}{=} \eta_{0, M_3}(\alpha) + \tau(t) \eta_{0, M_4}(\alpha), 
$
where 
$M_3 \stackrel{\eqref{eq:liebracketdeuxM}}{\coloneqq} M_1 H  M_2 - M_2 H M_1 = -2 \begin{pmatrix}
         0 &  0\\
         0 & S_\Delta 
    \end{pmatrix}$ and $M_4 \stackrel{\eqref{eq:liebracketmixé}}{=} 0$.
Thus $\eta_{0,M_4}=0$ and we obtain $\eta_{0, M_3} = [\eta_{0,M_1} , \chi_{1, 0}]  \in  \mathcal{V}_2$.

{\bf \em The set $\mathcal{M}_3$} is defined as the linear span of $\mathcal{M}_2$ and of the set of matrices 
\[
M = \begin{pmatrix}  0 & S_\Delta S_{\Delta'}  \\
   S_{\Delta'}S_\Delta & 0 \end{pmatrix},
   \]
   for any $\Delta, \Delta'$.
Such a matrix satisfies $M = M_2 H M_1-M_1HM_2$ where $M_1 ,M_2 
$ are specified as $M_1 \coloneqq  \begin{pmatrix} 0 & 0\\0 & S_\Delta \end{pmatrix} \in \mathcal{M}_2$
and 
$
M_2 \coloneqq  \begin{pmatrix} S_{\Delta'} &0\\0 &  0\end{pmatrix} \in \mathcal{M}_0.
$
By~\eqref{eq:liebracketdeuxM} it follows that $\eta_{0,M} = [\eta_{0,M_2},\eta_{0,M_1}] \in \mathcal{V}_3$.

Before defining $\mathcal{M}_5$ we further explicit matrices contained in $\mathcal{M}_3$. 
{Given any $\Delta, \Delta'  \in \R^{n \times m}$, denote $B_1 \coloneqq \Delta {\Delta'}^\top$ and $B_2 \coloneqq \Delta^\top {\Delta'}$. Since $S_\Delta S_{\Delta'}  = \begin{pmatrix}
   \Delta {\Delta'}^\top  & 0 \\
   0 & \Delta^\top {\Delta'}\end{pmatrix}$
   we have}

\eq{ \label{eq:diagenpartie}
\begin{pmatrix}
   0 & 0 & B_1 & 0 \\
  0  &  0 & 0 & B_2 \\
  B_1^\top & 0 & 0 & 0 \\
0 & B_2^\top & 0 & 0
\end{pmatrix} \in \mathcal{M}_3,
}
{for any pair of matrices $B_1,B_2$ that can be written as above. We explicit a few such matrices. }

{\color{purple} In the case $n=m=1$, $B_1=B_2 = b \in \R$, and as $\begin{pmatrix}  0 & S_{\Delta} \\
   S_{\Delta} & 0 \end{pmatrix} \in \mathcal{M}_1$ {and $S_\Delta = \begin{pmatrix}
       0 & a\\ a& 0
   \end{pmatrix}$ with $a \in \R$}, one has \eq{ \label{eq:diagcasnm1}
\begin{pmatrix}
0 & B \\
  B & 0 
\end{pmatrix} \in \mathcal{M}_3, \text{ for any } B \in \mathcal{S}'_2.
}}

When $(n,m) \neq (1,1)$, consider any $1 \leq i,k \leq n$, $1 \leq j, l \leq m$, and $\Delta \coloneqq E_{i, j} \in \R^{n \times m}$ and $\Delta' \coloneqq E_{k, l}  \in \R^{n \times m}$. Since
$B_1 \coloneqq \Delta {\Delta'}^\top = E_{i, j} E_{l, k} = \delta_{j, l} E_{i,k} \in \R^{n \times n}$ and $B_2 \coloneqq \Delta^\top {\Delta'} = E_{j, i} E_{k, l} = \delta_{i, k} E_{j, l} \in \R^{m \times m}$, we can reach:
\begin{itemize}
    \item $(B_1,B_2) = (E_{i, i},E_{j,j})$, for any $1 \leq i \leq n$ and $1 \leq j \leq m$, by choosing $k := i$ and $l := j$;
    \item  (in the case $m > 1$) $(B_1,B_2)= (0,E_{j,l})$ for any $1 \leq j \neq l \leq m$, by choosing e.g. $i=k=1$;
    \item (in the case $ n> 1$): $(B_1,B_2) = (E_{i, k},0)$ for any  $1 \leq i \neq k \leq n$, by choosing $j = l=1$.
\end{itemize}
{\bf The two following steps are specific to the case $(n, m) \neq (1, 1)$ only.}

{\bf \em The set $\mathcal{M}_5$} (defined only when $(n,m) \neq (1,1)$ -- and we skip the definition of $\mathcal{M}_4$) is defined as the linear span of $\mathcal{M}_3$ and of the set of matrices
\begin{equation}
M = \begin{pmatrix}  0 & 0 & 0 & E_{i, j} \\
0&0& -E_{i, j}^\top & 0 \\
0 & -E_{i, j} & 0 & 0 \\
E_{i, j}^\top  & 0 & 0 & 0
\end{pmatrix}
\label{eq:M5}
\end{equation}
with 
$1 \leq i \leq n$ and $1 \leq j \leq m$. Since $(n,m) \neq (1,1)$, without loss of generality, assume that $n > 1$ (a similar construction can be done in the case $m > 1$).
Observe that $M = M_2 H M_1-M_1HM_2$ where $M_1, M_2 
$ are specified as $M_1 \coloneqq  \begin{pmatrix} 0 & S_\Delta \\S_\Delta & 0 \end{pmatrix} \in \mathcal{M}_1$ (NB: not $ \mathcal{M}_0$) with $\Delta := E_{i, j}$, 
and 
$
M_2 \coloneqq \begin{pmatrix}
   0 & 0 & E_{k, k} & 0 \\
  0  &  0 & 0 & E_{j, j} \\
  E_{k, k} & 0 & 0 & 0 \\
0 & E_{j, j} & 0 & 0
\end{pmatrix}  \in \mathcal{M}_3$ {(by \eqref{eq:diagenpartie})} for some $1 \leq k \neq i \leq n$ (such a choice of $k$ is possible since $n > 1$).
By~\eqref{eq:liebracketdeuxM} it follows (using Jacobi identity and the fact that $M_1 \in  \mathcal{M}_1$) that $\eta_{0,M} = [\eta_{0,M_2},\eta_{0,M_1}] \in \mathcal{V}_5$ (but not $\mathcal{V}_4$).

Again, before defining the set $\mathcal{M}_7$ we 
show that for any $\Delta_1\in \R^{n \times m}, \Delta_2 \in \R^{m \times n}$ we have
\eq{ \label{eq:contre_diag_final}
{M' \coloneqq} \begin{pmatrix}
   0 & 0 & 0 & \Delta_1 \\
  0  &  0 & \Delta_2 & 0 \\
  0 &  \Delta_2^\top & 0 & 0 \\
 \Delta_1^\top & 0 & 0 & 0
\end{pmatrix} \in \mathcal{M}_5.
}
Indeed
$
2M'
= \begin{pmatrix}
   0 & 0 & 0 & \Delta_1 \\
  0  &  0 & -\Delta_1^\top & 0 \\
  0 &  -\Delta_1 & 0 & 0 \\
 \Delta_1^\top & 0 & 0 & 0
\end{pmatrix}  +  \begin{pmatrix}  0 & S_{\Delta_1} \\
   S_{\Delta_1} & 0 \end{pmatrix} +  \begin{pmatrix}
   0 & 0 & 0 & -\Delta_2^\top \\
  0  &  0 & \Delta_2 & 0 \\
  0 &  \Delta_2^\top & 0 & 0 \\
 -\Delta_2 & 0 & 0 & 0
\end{pmatrix}  +  \begin{pmatrix}  0 & S_{\Delta_2^\top} \\
   S_{\Delta_2^\top} & 0 \end{pmatrix}
$
where the first and third terms are combinations of matrices shaped as \eqref{eq:M5}, while the second and last belong to $\mathcal{M}_1$.

{\bf \em The set $\mathcal{M}_7$} (defined only when $(n,m) \neq (1,1)$ -- and again we skip the definition of $\mathcal{M}_6$)  is defined as the linear span of $\mathcal{M}_5$ and of the set of matrices
\begin{equation}
\label{eq:M7}
M = \begin{pmatrix}  0 & 0 & E_{i, i} & 0 \\
0&0& 0 &-E_{j, j}  \\
 E_{i, i} & 0 & 0 & 0 \\
0 & -E_{j, j}  & 0 & 0
\end{pmatrix}
\end{equation}
with $1 \leq i \leq n$ and $1 \leq j \leq m$. Observe that $M = M_2 H M_1-M_1HM_2$ where $M_1 \coloneqq  \begin{pmatrix}0 &  S_\Delta\\  S_\Delta& 0 \end{pmatrix} \in \mathcal{M}_1$ (NB: not   $\mathcal{M}_0$) with $\Delta = E_{i, j}$, 
and 
$
{M_2 \coloneqq} \begin{pmatrix}
   0 & 0 & 0 & 0 \\
  0  &  0 & E_{j,i} & 0 \\
  0 &  E_{j, i}^\top & 0 & 0 \\
 0 & 0 & 0 & 0
\end{pmatrix} \in \mathcal{M}_5
$ {(by \eqref{eq:contre_diag_final})}.
By~\eqref{eq:liebracketdeuxM} it follows (using again Jacobi identity and the fact that $M_1 \in  \mathcal{M}_1$) that $\eta_{0,M} = [\eta_{0,M_2},\eta_{0,M_1}] \in \mathcal{V}_7$ (and not $\mathcal{V}_6$).

As $\mathcal{M}_7$ is a vector space and since we have already \eqref{eq:diagenpartie} with $(B_1,B_2) = (E_{ii},E_{jj})$, by linear combination with matrices shaped as in \eqref{eq:M7}  we obtain that any matrix shaped as in
\eqref{eq:diagenpartie} with arbitrary diagonal $B_1,B_2$ also belongs to $\mathcal{M}_7$. Arbitrary off-diagonal terms can be obtained by combining matrices shaped as in 
~\eqref{eq:diagenpartie} (if $m>1$, $B_1 = 0$ and $B_2 = E_{j ,l}$ and if $n>1$, $B_1 = E_{i, k}$ and $B_2 = 0$), and we obtain
\eq{ \label{eq:diag}
M = \begin{pmatrix}
   0 & 0 & B_1 & 0 \\
  0  &  0 & 0 & B_2 \\
  B_1^\top & 0 & 0 & 0 \\
0 & B_2^\top & 0 & 0
\end{pmatrix} \in \mathcal{M}_7, \text{ for each } B_1 \in \R^{n \times n}, B_2 \in \R^{m \times m}. 
}

Finally, combining \eqref{eq:diag} and \eqref{eq:contre_diag_final} one has
\eq{ \label{eq:anti_diagonal}
  \begin{pmatrix}
    0 & B \\
   B^\top  & 0
 \end{pmatrix} \in \mathcal{M}_7
 \text{ for any } B \in \R^{(n + m)\times (n+m)}.
 }

{\bf \em  The set 
{$\mathcal{M}_8$}
} {\color{purple}{(resp. $\mathcal{M}'_4$)}} is defined as  the linear span of 
{$\mathcal{M}_7$}
{\color{purple}{(resp. of 
{$\mathcal{M}_3$}
)}} and of the set of matrices 
\[
M =  \begin{pmatrix}
   \Delta_1 \Delta^\top + \Delta \Delta_1^\top & 0 & 0 & 0\\
  0 &  \Delta_2 \Delta + \Delta^\top \Delta_2^\top & 0 & 0 \\
  0 & 0 & 0 & 0 \\
0 & 0 & 0 & 0
\end{pmatrix}, \text{{\color{purple}{ (resp. with $\Delta_2 = \Delta_1^\top$)}} },
\]
and 
\[
M' =  \begin{pmatrix}
   0 & 0 & 0 & 0\\
  0 & 0 & 0 & 0 \\
  0 & 0 & \Delta \Delta_2 + \Delta_2^\top  \Delta^\top & 0 \\
0 & 0 & 0 &  \Delta^\top  \Delta_1 + \Delta_1^\top \Delta
\end{pmatrix}, \text{{\color{purple}{ (resp. with $\Delta_2 = \Delta_1^\top$)}} },
\]

which satisfy $M = M_2HM_1 - M_1HM_2$ and $M'= M_2'HM_1 - M_1HM_2'$ where 
$
M_1 \coloneqq  \begin{pmatrix}
   0 & 0 & 0 & \Delta_1 \\
  0  &  0 & \Delta_2 & 0 \\
  0 &  \Delta_2^\top & 0 & 0 \\
 \Delta_1^\top & 0 & 0 & 0
\end{pmatrix} \in \mathcal{M}_5$ \textcolor{purple}{(in the case $(n, m) = (1, 1)$, $M_1 = \begin{pmatrix}  0 &S_{\Delta_1} \\
    S_{\Delta_1} & 0 \end{pmatrix} \in \mathcal{M}_1 $)},  $M_2 \coloneqq  \begin{pmatrix}S_{\Delta} & 0\\
0 & 0 \end{pmatrix} \in \mathcal{M}_0
$ and 
$
M_2' \coloneqq -\begin{pmatrix} 0 & 0\\
0 & S_{\Delta} \end{pmatrix} \in 
 \mathcal{M}_2. 
$

By~\eqref{eq:liebracketdeuxM} it directly follows that $\eta_{0,M} = [\eta_{0,M_2},\eta_{0,M_1}] \in 
\mathcal{V}_6 \subseteq \mathcal{V}_8$ \textcolor{purple}{(resp. $\eta_{0,M} \in 
\mathcal{V}_2 \subseteq \mathcal{V}_4$)}. 
Similarly, using Jacobi identity and the fact that $M'_2 \in \mathcal{M}_2$ \textcolor{purple}{(resp. that $M_1 \in \mathcal{M}_1$)}
we obtain that $\eta_{0,M'} = [\eta_{0,M_2'},\eta_{0,M_1}] \in 
\mathcal{V}_8$ 
\textcolor{purple}{(resp. $\eta_{0,M'} \in 
\mathcal{V}_4$).}
%

Again, we now explicit matrices belonging to $\mathcal{M}_8$ \textcolor{purple}{(resp. to $\mathcal{M}'_4$)}.

By considering $M_3 \coloneqq \begin{pmatrix}S_{\Delta''} & 0\\
0 & 0 \end{pmatrix} \in \mathcal{M}_0 
$, we obtain that 
$
M
+ M_3
=  \begin{pmatrix}
   S_1 & \Delta'' & 0 & 0 \\
  {\Delta''}^\top  &  S_2 & 0 & 0 \\
  0 & 0 & 0 & 0 \\
0 & 0 & 0 & 0
\end{pmatrix} \in \mathcal{M}_8' \text{{\color{purple}{ (resp. $\mathcal{M}_4'$)}} },
$
with $S_1 \coloneqq  \Delta_1 \Delta^\top + \Delta \Delta_1^\top \in \mathcal{S}_n$ and $S_2 \coloneqq  \Delta_2 \Delta + \Delta^\top \Delta_2^\top \in \mathcal{S}_m$ \textcolor{purple}{(resp. in the case $n = m = 1$, one has $S_1 = S_2$ as $\Delta_2 = \Delta_1^\top$)}, and since this holds for any choice of $\Delta,\Delta_1,\Delta_2,\Delta''$, one has
\eq{ \label{eq:coin_sup}
\begin{pmatrix}
S & 0 \\
    0 & 0
\end{pmatrix} \in \mathcal{M}_8'  \text{{\color{purple}{ (resp. $\mathcal{M}_4'$)}} }
\text{ for any } S \in \mathcal{S}_{n+m}  \text{{\color{purple}{ (resp. $\mathcal{S}_{n+m}'$)}} }.
}
Similarly, by considering $M_3' \coloneqq \begin{pmatrix}0 & 0\\
0 & S_{\Delta''} \end{pmatrix} \in \mathcal{M}_2 
$, we obtain that 
$
M'
+ M_3'
=  \begin{pmatrix}
  0&0 & 0 & 0 \\
 0&0 & 0 & 0 \\
  0 & 0 & S_1 & \Delta''\\
0 & 0 &  {\Delta''}^\top  &  S_2
\end{pmatrix} \in \mathcal{M}_8' \text{{\color{purple}{ (resp. $\mathcal{M}_4'$)}} },
$
with $S_1 \coloneqq  \Delta_1 \Delta^\top + \Delta \Delta_1^\top \in \mathcal{S}_n$ and $S_2 \coloneqq  \Delta_2 \Delta + \Delta^\top \Delta_2^\top \in \mathcal{S}_m$ \textcolor{purple}{(resp. in the case $n = m = 1$, one has $S_1 = S_2$ as $\Delta_2 = \Delta_1^\top$)}, and thus one has
\eq{ \label{eq:coin_inf}
\begin{pmatrix}
    0 & 0 \\
    0 & S
\end{pmatrix} \in \mathcal{M}'_8  \text{\color{purple}{ (resp. $\mathcal{M}'_{4}$)}}
\text{ for any } S \in \mathcal{S}_{n+m} \text{\color{purple}{ (resp. $\mathcal{S}'_{2}$)}}.
}
Thus by combining \eqref{eq:coin_sup} and \eqref{eq:coin_inf}, one has:
\eq{ \label{eq:big_diagonal}
  \begin{pmatrix}
   S_1 & 0  \\
  0 & S_2
\end{pmatrix} \in \mathcal{M}'_8 \text{\color{purple}{ (resp. $\mathcal{M}'_{4}$)}}
\text{ 
for each }  S_1, S_2 \in \mathcal{S}_{n+m} {\color{purple}{\text{ (resp. } 
\mathcal{S}_2')}}.
}

$\bullet$ \textit{ We now show that $\mathcal{W} \subseteq \{\eta_{0,M},  M \in \mathcal{M}\} + 
{\mathcal{V}}
$ where $\mathcal{M}$ is the linear span of all built matrix sets.}

First by combining \Cref{eq:anti_diagonal} and \Cref{eq:big_diagonal}, one has 
$   \{\eta_{0,M}, M \in \mathcal{S}\} \subseteq \{\eta_{0,M}, M \in \mathcal{M}\}$

For any $\beta \in \R$ one has:
$ \beta \eta_{1, 0}(\alpha) = \beta
\begin{pmatrix}
    1\\
    0 \\
   - \tau(t)\dot \x
\end{pmatrix} = 
\chi_{\beta, 0} (\alpha)- \beta \begin{pmatrix}
   0\\
     \dot \x\\
   0
\end{pmatrix} = \chi_{\beta, 0} (\alpha)-  \eta_{0, \beta M}(\alpha)$,
where $M \coloneqq\begin{pmatrix} 0 &0\\0 &  \mId{n+m}\end{pmatrix} \in 
\mathcal{M}$ {(by~\eqref{eq:coin_inf})}, and  $\chi_{\beta, 0} \in 
{\mathcal{V}}$. Therefore $\beta \eta_{1, 0} \in  \{\eta_{0,M},  M \in \mathcal{M}\} + 
{\mathcal{V}}
$. Then, for any $\beta$, $M \in \mathcal{S}$,  $\eta_{\beta, M} = \beta \eta_{1, 0} + \eta_{0, M} \in    \{\eta_{0,M},  M \in \mathcal{M}\} +
{\mathcal{V}}
$ so that: $\mathcal{W} \subseteq \{\eta_{0,M},  M \in \mathcal{M}\} +
{\mathcal{V}}
$.

$\bullet$ \textit{ Conclusion.}

Since $\mathcal{V}_k \subseteq \lie(\mathcal{V})$ for every $k$ and since $
\mathcal{V} \subseteq \lie(\mathcal{V} )$, we get  $\mathcal{W} \subseteq \{\eta_{0,M},  M \in \mathcal{M}\} + 
{\mathcal{V}} {\subseteq \cup_k \mathcal{V}_k}
\subseteq \lie (\mathcal{V}).$

\textbf{Conclusion.} By using \Cref{lemma:traces_lie_equal}, one has for all $\alpha$, $\lie\left(\mathcal{D}\left(\W^{\mathtt{mom}}\right)\right) (\alpha) = \lie\left(\W^{\mathtt{mom}}\right) (\alpha) $, and thus $\mathcal{W} (\alpha) = \mathcal{D}(\mathcal{W}) (\alpha) =  \lie\left(\mathcal{D}\left(\W^{\mathtt{mom}}\right)\right) (\alpha) =\lie\left(\W^{\mathtt{mom}}\right) (\alpha)$, which concludes the proof of \Cref{lieW}.
\end{proof}

Eventually, what we need to compute is the dimension of the trace $\lie(\W^{\mathtt{mom}}) (\alpha)$ for any $\alpha = (t, U, V, \dot{U}, \dot{V})$. 

\begin{proposition}  \label{dim-lie-algebra}
Consider $\alpha = (t, 
{\x, \dot \x}
)$ such that $\left(U; V;\dot{U}; \dot{V}\right) \in \R^{2(n+m) \times r}$ has full rank
{where $\x = \mathtt{vec}(U;V)$ and $\dot \x = \mathtt{vec}(\dot U; \dot V)$}. Then:
\begin{enumerate}
    \item if $2(n+m) \leq r$ and if $(n, m) \neq (1, 1)$, then $\vdim \lie(\W^{\mathtt{mom}}) \left(\alpha\right) = (n+m)(2(n+m)+1) +1$;
    \item if $2(n+m) > r$ and if $(n, m) \neq (1, 1)$, then $\vdim \lie(\W^{\mathtt{mom}}) \left(\alpha\right) = 2(n+m)r + 1 - r(r-1)/2$;
   { \color{purple}{\item if $(n, m) = (1, 1)$ and if $r \geq 4$, then $\vdim \lie(\W^{\mathtt{mom}}) \left(\alpha\right)= 6 + 1$.}}
\end{enumerate}
\end{proposition}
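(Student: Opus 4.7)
The plan is to use \Cref{lieW}, which identifies $\lie(\W^{\mathtt{mom}})(\alpha)$ with the image of the linear map $(\beta, M) \in \R \times \mathcal{S} \mapsto \eta_{\beta, M}(\alpha)$, and to compute its rank by a rank-nullity argument.

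First I would split off the $\beta$ contribution. Since the first coordinate of $\eta_{\beta, M}(\alpha)$ equals $\beta$ while the first coordinate of $\eta_{0, M}(\alpha)$ is $0$, one has the direct-sum decomposition
$$\lie(\W^{\mathtt{mom}})(\alpha) = \R \cdot \begin{pmatrix}1\\ 0_{(n+m)r}\\ -\tau(t)\dot\x\end{pmatrix} \oplus \linspan\{\eta_{0, M}(\alpha) : M \in \mathcal{S}\},$$
so $\vdim \lie(\W^{\mathtt{mom}})(\alpha) = 1 + \vdim\{\eta_{0, M}(\alpha) : M \in \mathcal{S}\}$. Writing $W \coloneqq (U; V; \dot U; \dot V) \in \R^{2(n+m) \times r}$ for the matrix whose vectorization encodes $(\x, \dot\x)$, the nontrivial part of $\eta_{0, M}(\alpha)$ is (up to a fixed permutation of coordinates) the vectorization of $HMW$. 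Since $H$ is invertible, $\vdim\{\eta_{0, M}(\alpha) : M \in \mathcal{S}\}$ equals the rank of the linear map $\psi : M \in \mathcal{S} \mapsto MW$.

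Next I would apply rank-nullity: $\ker \psi = \{M \in \mathcal{S} : MW = 0\}$ consists of matrices in $\mathcal{S}$ that vanish on $\mathrm{Col}(W) \subseteq \R^{2(n+m)}$, of dimension $\min(r, 2(n+m))$ by the full-rank hypothesis. When $r \geq 2(n+m)$ we have $\mathrm{Col}(W) = \R^{2(n+m)}$, hence $\ker \psi = \{0\}$ and $\mathrm{rk}(\psi) = \vdim \mathcal{S}$. For case~1 this equals $\vdim \mathcal{S}_{2(n+m)} = (n+m)(2(n+m)+1)$; for case~3 (where $n=m=1$ and $r \geq 4$) it equals $\vdim \tilde{\mathcal{S}}_4 = 3\cdot\vdim \mathcal{S}_2' = 6$. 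Adding $1$ recovers the stated dimensions in these two cases.

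For case~2, where $(n,m)\neq(1,1)$ and $r < 2(n+m)$, I would choose an orthonormal basis of $\R^{2(n+m)}$ whose first $r$ vectors span $\mathrm{Col}(W)$. In this basis, any symmetric $M$ with $MW = 0$ has its first $r$ columns equal to zero; by symmetry, its first $r$ rows also vanish. Hence the top-left $r\times r$ and top-right $r\times(2(n+m)-r)$ blocks must be zero, while the bottom-right block $M_{22}$ is an arbitrary symmetric matrix of size $2(n+m)-r$. This gives $\vdim \ker \psi = (2(n+m)-r)(2(n+m)-r+1)/2$, and a direct expansion yields
$$\mathrm{rk}(\psi) = (n+m)(2(n+m)+1) - \frac{(2(n+m)-r)(2(n+m)-r+1)}{2} = 2(n+m)r - \frac{r(r-1)}{2},$$
so that adding $1$ produces the claimed formula. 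The only delicate step is this adapted-basis bookkeeping, which must correctly combine the symmetry of $M$ with the vanishing constraint on $\mathrm{Col}(W)$; everything else is immediate from \Cref{lieW} and the invertibility of $H$.
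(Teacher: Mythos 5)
Your proof is correct and follows the same overall strategy as the paper: invoke \Cref{lieW}, split off the one extra dimension contributed by $\eta_{1,0}$ (whose first coordinate is nonzero), and reduce to computing the rank of $M \mapsto MW$ by rank--nullity, using that $H$ is invertible and vectorization is a bijection. The cases $2(n+m)\leq r$ (cases 1 and 3) are handled identically to the paper, by observing the map is injective so the rank equals $\vdim \mathcal{S}$.

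Where you genuinely diverge is the kernel computation in case 2. The paper first proves a standalone completion lemma (\Cref{lemma:completion}) to select a set $T$ of $2(n+m)-r$ indices such that $(W,\mId{T})$ is invertible, and then constructs the rows $M_{i_1},\dots,M_{i_{2(n+m)-r}}$ of a symmetric $M$ annihilating $W$ one at a time, tracking at each step the affine space of admissible choices (dimension $2(n+m)-r, 2(n+m)-r-1,\dots,1$) and summing. You instead conjugate by an orthogonal matrix $P$ whose first $r$ columns span $\mathrm{Col}(W)$: because $P$ is orthogonal, $P^\top M P$ remains symmetric, $P^\top W$ has its last $2(n+m)-r$ rows zero and its top $r\times r$ block invertible, so the constraint $MW=0$ forces the first $r$ rows and columns of $P^\top M P$ to vanish, leaving exactly a free symmetric $(2(n+m)-r)\times(2(n+m)-r)$ block. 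This buys a cleaner argument: orthogonality of the change of basis makes the compatibility between the symmetry of $M$ and the vanishing constraint immediate, rather than having to re-derive it column by column as the paper does, and it avoids the auxiliary completion lemma entirely. The trade-off is negligible since the paper only uses \Cref{lemma:completion} here, so either route is self-contained; yours is simply shorter.
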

\begin{proof}
Let us consider the linear applications:
$$
\Gamma: M \in \mathcal{S} 
\mapsto
(\mId{r} \otimes (HM))\begin{pmatrix}\x\\\dot \x
    \end{pmatrix}
    = \mathtt{vec}\left(HM \begin{pmatrix}
        U\\
        V\\
        \dot U\\
        \dot V
    \end{pmatrix}\right)
\quad \text{and}\quad 
\bar{\Gamma}: M  \in \mathcal{S}
\mapsto \begin{pmatrix}
0\\
\Gamma(M)
\end{pmatrix}.
$$ 
By \Cref{lieW}, we have 
$\lie(\mathcal{W}_{\phi}^{\mathtt{mom}})
(\alpha) = \linspan\{\eta_{\beta,M}(\alpha): \beta \in \R, M \in \mathcal{S}\}$. 
By linearity of $(\beta,M) \mapsto \eta_{\beta,M}(\alpha)$ it follows that $\lie(\mathcal{W}_{\phi}^{\mathtt{mom}})(\alpha) = \R \eta_{1,0}(\alpha)+\bar{\Gamma}(\mathcal{S})$.
Since the first coordinate of $\eta_{1,0}(\alpha)$ is nonzero, it does not belong to $\bar{\Gamma}(\mathcal{S})$, hence $\vdim\ \lie(\mathcal{W}_{\phi}^{\mathtt{mom}})(\alpha) = 
\vdim (\bar{\Gamma} ( \mathcal{S})) +1
=
\vdim (\Gamma ( \mathcal{S})) +1 = \operatorname{rank}(\Gamma) + 1.
$
By the rank–nullity theorem, we have:
$\vdim\ \operatorname{ker}\ (\Gamma) + \operatorname{rank}\ (\Gamma) = \vdim\ \mathcal{S}. 
$
We now distinguish two cases.

\textit{1st case: $2(n+m) \leq r$.} Then as $H$ is invertible ($H^{-1} = -H$) and $\left(U; V;\dot{U}; \dot{V}\right)$ has full rank $2(n+m)$, $\Gamma$ is injective and we obtain $\operatorname{rank} (\Gamma)=  \vdim \mathcal{S}$. When 
$(n,m) \neq (1,1)$ we have $\mathcal{S} = \mathcal{S}_{2(n+m)}$ hence this yields $\operatorname{rank}(\Gamma) = \frac{2(n+m) [2(n+m)-1]}{2} = (n+m)[2(n+m)+1]$. 
\textcolor{purple}{In the case $n = m = 1$, {the assumption $2(n+m) \leq r$ reads $r \geq 4$, and} the associated rank is equal to $6$.}

\textit{2d case:  $2(n+m) > r$.} 
Since $H$ is invertible, $\operatorname{ker}( \Gamma)$ is the set of matrices $M \in \mathcal{S}$ such that $M (U;V; \dot U; \dot V)=0$. Denote $M_i$, $1 \leq i \leq 2(n+m)$ the rows of such a matrix, so that $M^\top = (M_1; \cdots; M_{2(n+m)})$. Denoting
 $C_j$, $1 \leq j \leq r$ the columns of $\left(U; V;\dot{U}; \dot{V}\right)$.
and $\mathcal{C} \coloneqq \underset{j = 1, \cdots, r}{\linspan} C_j$, we observe that since  $\left(U; V;\dot{U}; \dot{V}\right)$ has full rank $r = \min(2(m+n),r)$ the columns $C_j$ are linearly independent and $\vdim\ \mathcal{C} = r$.
Since $M\times \left(U; V;\dot{U}; \dot{V}\right) = 0$, we have $\langle M_i, C_j \rangle = 0$ for all $1 \leq i \leq 2(n+m)$ and $1 \leq j \leq  r$, i.e., each $M_i \in \R^{2(n+m)}$ belongs to $\mathcal{C}^\perp$, of dimension $\vdim\ \mathcal{C}^\perp = 2(n+m)-r$. 

To determine  $\vdim\ \operatorname{ker}(\Gamma)$ we now count the number of degrees of freedom to choose $M \in \mathcal{S}$ such that $M_i \in \mathcal{C}^\perp$ for every $i$. 
We only treat the case $(n,m) \neq (1,1)$, where $\mathcal{S}$ is simply the set of symmetric matrices characterized by $M^\top=M$.

We first show the following lemma.
\begin{lemma} \label{lemma:completion}
    The matrix $C \in \R^{2(n+m) \times r}$ has full rank $r$ if and only if there exists a subset $T$ of $2(n+m)-r$ indices such that the horizontal concatenation $\left(C, \mId{T}\right)$ is invertible, where $\mId{T} \in \R^{2(n+m) \times (2(n+m)-r)}$ is {the restriction of the identity matrix to its columns indexed by }$T$.
\end{lemma}

\begin{proof}
    The converse implication is clear. Let us show the direct one.
    By denoting $e_1, \cdots, e_{2(n+m)}$ the canonical basis in $\R^{2(n+m)}$, there is $i_1$ such that $e_{i_1}$ is linearly independent from all $C_j$: otherwise all $e_i$ would be spanned by $C_1, \cdots, C_r$, i.e. we would have ${\linspan} \{ e_i: 1 \leq i \leq 2(n+m)\}\subseteq \mathcal{C}$ hence  $2(n+m) \leq r$, which contradicts our assumption. 
    Similarly, by recursion, after finding $i_1, \cdots, i_{k}$ for some $k < 2(n+m)-r$ such that $i_1, \cdots, i_k$ are linearly independent from $C_1, \dots, C_r$ (so that $\tilde{\mathcal{C}} \coloneqq {\linspan} \{\mathcal{C} , e_{i_l}: 1\leq l \leq k \}$ has dimension $r+k < 2(n+m)$), there exists $i_{k+1}$ such that $e_{i_{k+1}}$ is linearly independent from all $C_j$ and all $e_{i_1}, \cdots, e_{i_k}$.  
    Stopping this construction when $k = 2(n+m)-r$ yields $T:= \{i_1,\ldots, i_k\}$.
\end{proof}
Consider the index set $T = \{i_1, \cdots, i_{2(n+m)-r} \}$ given by~\Cref{lemma:completion}, so that
$\left(C, \mId{T}\right) \in \R^{2(n+m) \times 2(n+m)}$ is invertible.

We first build the column $M_{i_1}$, which can be chosen arbitrarily in $\mathcal{C}^\perp$, a space of dimension $2(n+m)-r$.
Then, the $i_1$-th coordinate of $M_{i_2}$ is determined by $M_{i_1}$ (and equal to its $i_2$-th one) as $M$ is a symmetric matrix, and its remaining $2(n+m)-1$ coordinates can be freely chosen provided that
$M_{i_2}$ belongs to  $\mathcal{C}^\perp$. Thus, $M_{i_2}$ can be arbitrarily chosen in the affine space of dimension $2(n+m)-r-1$ defined by 
$$
\left(C, e_{i_1}\right)^\top M_{i_2} = \begin{pmatrix}
    0 \\ \cdots \\ 0 \\ M_{i_1}[i_2]
\end{pmatrix},
$$
where the matrix $\left(C, e_{i_1}\right)^\top \in \R^{(r+1) \times 2(n+m)}$ has full rank $r+1$ by construction.
By recursion, after building $k$ columns $M_{i_1}, \cdots, M_{i_k}$ with $k < 2(n+m)-r$, the coordinates indexed by $i_1, \cdots, i_k$ of the column $M_{i_{k+1}}$ are determined by $M_{i_1}, \cdots, M_{i_k}$ to ensure that $M$ is a symmetric matrix, and the remaining $2(n+m)-k$ coordinates must ensure that   $M_{i_{k+1}} \in \mathcal{C}^\perp$. Thus $M_{i_{k+1}}$ can be arbitrarily chosen in the affine space of dimension 
$2(n+m)-r-k$ defined by 
$$
\left(C, e_{i_1}, \cdots, e_{i_{k}} \right)^\top M_{i_{k+1}} = \begin{pmatrix}
    0 \\ \cdots \\ 0 \\ M_{i_1}[i_{k+1}] \\ \cdots \\ M_{i_k} [i_{k+1}]
\end{pmatrix},
$$
where the matrix $\left(C, e_{i_1}, \cdots, e_{i_{k}} \right)^\top\in \R^{(r+k) \times 2(n+m)}$ has full rank $r+k$ by construction.
Finally the dimension of $\mathrm{ker}(\Gamma)$ is equal to:
$$
\sum_{i=0}^{2(n+m)-r}(2(n+m)-r-i) = (2(n+m)-r)(2(n+m)-r+1)/2.
$$
Eventually we obtain $\operatorname{rank}(\Gamma)= 2(n+m)r -r(r-1)/2$.
\end{proof}

\subsection{Some derivations.} \label{subsection:groscalculs}

\paragraph{Details on how we obtained \eqref{eq:lien_chi_et_mu}.}
Given $\beta,\Delta$ the matrix $M \coloneqq \begin{pmatrix}-S_{\Delta} & 0\\
0 & \beta \mId{(n+m)}\end{pmatrix}$ is symmetric and belongs to $\mathcal{S}$ even when $n=m=1$, and $HM = \begin{pmatrix}0 & \beta \mId{n+m}\\ S_{\Delta} & 0\end{pmatrix}$ so that for any $\alpha = (t,\x,\dot \x)$ we have
\[
(\mId{r} \otimes HM) (\x;\dot \x) = \mathtt{vec}(HM (U;V;\dot U; \dot V))
=
\mathtt{vec}(
(\beta (\dot U; \dot V);S_{\Delta}(U;V)) =
(\beta \dot \x; (\mId{r} \otimes S_{\Delta})\x)
\]
and we deduce that $\chi_{\beta,\Delta} \in \mathcal{W}$ since
\[
\chi_{\beta,\Delta}(\alpha)
=
\begin{pmatrix}
\beta\\
\beta \dot \x\\
(\mId{r} \otimes S_{\Delta})\x-\tau(t)\beta \dot \x
\end{pmatrix} 
= 
\begin{pmatrix}0\\
(\mId{r}\otimes (HM)) \cdot \begin{pmatrix}\x\\\dot \x\end{pmatrix}
 \end{pmatrix}
+\beta
\begin{pmatrix}1\\
0\\
-\tau(t) \dot \x\end{pmatrix}
= \eta_{\beta,M}(\alpha).
\]

\paragraph{Details on how we obtained \eqref{eq:liebracketdeuxM}.}
Using that $(\mId{} \otimes \mathbf{A})(\mId{}\otimes \mathbf{B}) = \mId{} \otimes (\mathbf{A}\mathbf{B})$ we obtain that for any $\alpha$
\begin{align*}
\partial \eta_{0,M}(\alpha) \eta_{0,M'}(\alpha) 
&= 
\begin{pmatrix}0 & 0 \\ 0 & \mId{r} \otimes (HM)\end{pmatrix}
\begin{pmatrix} 0\\ [\mId{r} \otimes (HM')] \cdot 
	\begin{pmatrix}\x\\ \dot \x\end{pmatrix}
\end{pmatrix}
=
\begin{pmatrix}0\\ [\mId{r} \otimes (HMHM')]
	\begin{pmatrix}\x\\ \dot \x\end{pmatrix}
\end{pmatrix}\\
[\eta_{0,M},\eta_{0,M'}](\alpha) & = 
\begin{pmatrix}0\\
[\mId{r} \otimes (HMHM'-HM'HM)]\begin{pmatrix}\x\\ \dot \x\end{pmatrix}
\end{pmatrix} =
\begin{pmatrix}0\\
[\mId{r} \otimes \left(HM''\right)]\begin{pmatrix}\x\\ \dot \x\end{pmatrix}
\end{pmatrix}=\eta_{0,M''}(\alpha)
\end{align*}
with $M'' \coloneqq MHM'-M'HM$.

\paragraph{Details on how we obtained \eqref{eq:liebracketmixé}.}
Similarly one has
\begin{align*}
 \partial \eta_{1,0}(\alpha) \eta_{0,M}(\alpha) &=
 \begin{pmatrix}
0 & 0 & 0\\
0&0&0\\
-\tau'(t) \dot \x & 0 & -\tau(t)\mId{}
\end{pmatrix}
\begin{pmatrix} 0\\ [\mId{r} \otimes (HM)] \cdot 
	\begin{pmatrix}\x\\ \dot \x\end{pmatrix}
\end{pmatrix}
= 
\tau(t)\begin{pmatrix}
0\\
\begin{pmatrix}0 & 0 \\
0 &-  \mId{}\end{pmatrix}
 (\mId{r} \otimes (HM)) \cdot \begin{pmatrix}\x\\\dot\x\end{pmatrix}
\end{pmatrix}\\
\partial \eta_{0,M}(\alpha) \eta_{1,0}(\alpha) &=
 \begin{pmatrix}0 & 0 \\ 0 & \mId{r} \otimes (HM)\end{pmatrix}
\begin{pmatrix}
1\\
	\begin{pmatrix}0\\
		-\tau(t) \dot \x
	\end{pmatrix}
\end{pmatrix}
=
-\tau(t)\begin{pmatrix}
0\\
 (\mId{r} \otimes (HM)) 
 \begin{pmatrix}0 \\
 \dot \x\end{pmatrix}
 \end{pmatrix}, \text{ and thus,}
 \\
 [\eta_{1,0},\eta_{0,M}](\alpha) &= 
\tau(t)  
\begin{pmatrix}
0\\
\left[ 
	\begin{pmatrix}0 & 0\\
	0 &-\mId{}\end{pmatrix}
	(\mId{r} \otimes (HM)) 
	+(\mId{r} \otimes (HM)) 
	 \begin{pmatrix}0 & 0\\
	0 & \mId{} \end{pmatrix}
\right]	
  \begin{pmatrix}\x\\\dot\x\end{pmatrix}
  \end{pmatrix}.
\end{align*}
Since $M \in \mathcal{S}$ we can write it as $M = \begin{pmatrix}
    S_1 & S \\
    S^\top & S_2
\end{pmatrix}$ with $S \in \R^{(n+m) \times (n+m)}, S_1, S_2 \in \mathcal{S}_{n+m}$ (when $n=m=1$
we further have $S,S_{1},S_{2} \in \mathcal{S}'_{2}$). 
We now prove that 
\begin{equation}\label{eq:tmpkron}
\left[ 
	\begin{pmatrix}0 & 0\\
	0 &-\mId{}\end{pmatrix}
	(\mId{r} \otimes (HM)) 
	+(\mId{r} \otimes (HM)) 
	 \begin{pmatrix}0 & 0\\
	0 & \mId{} \end{pmatrix}
\right]
= \mId{r} \otimes (HM')
\end{equation}
with  $M' \coloneqq \begin{pmatrix}
    -S_1 & 0 \\
    0 & S_2
\end{pmatrix}$.
To establish~\eqref{eq:tmpkron}, 
denoting $\mathtt{mat}(\x) \coloneqq (U;V)$ and $\mathtt{mat}(\dot \x) \coloneqq (\dot U; \dot V)$ we compute 
\begin{align*}
HM &= \begin{pmatrix}S^{\top} & S_{2}\\ -S_{1} & -S
\end{pmatrix}\\
	\begin{pmatrix}0 & 0\\	0 &-\mId{}\end{pmatrix}
	(\mId{r} \otimes (HM))   
	\begin{pmatrix}\x\\\dot\x\end{pmatrix}
&=
	\begin{pmatrix}0 & 0\\	0 &-\mId{}\end{pmatrix}
 	\mathtt{vec}\left(HM 	
		\begin{pmatrix}\mathtt{mat}(\x)\\ \mathtt{mat}(\dot \x)
		\end{pmatrix}\right)
		\\
&=
	\begin{pmatrix}0 & 0\\	0 &-\mId{}\end{pmatrix}
	\begin{pmatrix}
		\mathtt{vec}\left(S^{\top}\mathtt{mat}(\x)
		+S_{2}\mathtt{mat}(\dot \x)\right)\\
		\mathtt{vec}\left(- S_{1}\mathtt{mat}(\x)
	-S\mathtt{mat}(\dot \x)\right)\\
	\end{pmatrix}
&=
	\begin{pmatrix}
	0\\
	\mathtt{vec}\left(S_{1}\mathtt{mat}(\x)
	+S\mathtt{mat}(\dot \x)\right)\\
	\end{pmatrix}\\
(\mId{r} \otimes (HM)) 
 \begin{pmatrix}0 & 0\\
0 & \mId{} \end{pmatrix}	
\begin{pmatrix}\x\\\dot\x\end{pmatrix}
&=
(\mId{r} \otimes (HM)) 
\begin{pmatrix}0\\\dot\x\end{pmatrix}
=
\mathtt{vec}\left(
HM \begin{pmatrix}0 \\
\mathtt{mat}(\dot\x)\end{pmatrix}
\right)
&=
\begin{pmatrix}
\mathtt{vec}\left(
	S_{2}\mathtt{mat}(\dot\x)
\right)
	\\
\mathtt{vec}\left(
	-S\mathtt{mat}(\dot\x)
\right)
\end{pmatrix}
\end{align*}
so that
\begin{align*}
\left[ 
	\begin{pmatrix}0 & 0\\
	0 &-\mId{}\end{pmatrix}
	(\mId{r} \otimes (HM)) 
	+(\mId{r} \otimes (HM)) 
	 \begin{pmatrix}0 & 0\\
	0 & \mId{} \end{pmatrix}
\right]\begin{pmatrix}\x\\\dot\x\end{pmatrix}
&=
\begin{pmatrix}
\mathtt{vec}\left(
	S_{2}\mathtt{mat}(\dot\x)
\right)
	\\
\mathtt{vec}\left(
	S_{1}\mathtt{mat}(\x)
\right)
\end{pmatrix}\\
&=
\mathtt{vec}\left(
\underbrace{\begin{pmatrix} 0 & S_{2}\\ S_{1} & 0\end{pmatrix}}_{=HM'}
\begin{pmatrix}\mathtt{mat}(\x)\\\mathtt{mat}(\dot\x)\end{pmatrix}
\right)
= (\mId{r} \otimes (HM')) \begin{pmatrix}\x\\\dot\x\end{pmatrix}.
\end{align*}

\section{Proof of \Cref{prop:lossnb}}\label{appendix:loss}

\lossnb*
\begin{proof}
Denoting $\mathtt{rk}_1 = \min(r,n+m)$ the rank of $(U; V) \in \R^{(n+m) \times r}$ and $\mathtt{rk}_2 = \min(r,2(n+m))$ the rank of $(U; V; \dot U; \dot V) \in \R^{2(n+m) \times r}$ and using \Cref{nb_GF} and \Cref{theorem:dimliealgebra}, one has
    $$
    N_{GF} \coloneqq \mathtt{rk}_1 /2 (2r + 1 - \mathtt{rk}_1) \text{ and } N_{MF} \coloneqq \mathtt{rk}_2 /2 (2r - 1 - \mathtt{rk}_2).
    $$
Thus: 
\begin{align*}
    N_{GF} - N_{MF} &= \frac{\mathtt{rk}_1}{2} (2r + 1 - \mathtt{rk}_1) - \frac{\mathtt{rk}_2}{2} (2r - 1 - \mathtt{rk}_2) \\
    &= - \frac{\mathtt{rk}_1^2 }{2} + \frac{\mathtt{rk}_2^2 }{2} + r (\mathtt{rk}_1  - \mathtt{rk}_2) + \frac{\mathtt{rk}_1 + \mathtt{rk}_2}{2} \\
    &={\tfrac{1}{2}}[(\mathtt{rk}_2 - r)^2 - (\mathtt{rk}_1 - r)^2 ]
    + \frac{\mathtt{rk}_1 + \mathtt{rk}_2}{2}. 
\end{align*}
We now distinguish 3 cases.

\textit{1st case: $r \leq n+m$.} Then $\mathtt{rk}_1= \mathtt{rk}_2 = r$, and thus $  N_{GF} - N_{MF} = \frac{\mathtt{rk}_1 + \mathtt{rk}_2}{2} = r > 0$.

\textit{2d case: $ n+m < r \leq  2(n+m)$.} Then $\mathtt{rk}_1= n+m$ and $\mathtt{rk}_2 = r$, and thus:
$
  N_{GF} - N_{MF} = \frac{-(n+m - r)^2}{2} +  \frac{r+n+m}{2}. 
$
Let us show that in that case, we always have: $ N_{GF} - N_{MF} \leq 0$.
Denoting $x \coloneqq n+m\geq 1$ and $y\coloneqq r \geq 2$, we have:
$N_{GF} - N_{MF} =  -(x-y)^2/2 + (x+y)/ 2= -{\tfrac{1}{2}} [x^2 -(1+2y)x+ y(y-1)] = -{\tfrac{1}{2}} (x-x_1) (x-x_2)$, where:
$x_1 x_2 = y(y-1)$ (and thus $x_1$ and $x_2$ have the same sign) and $x_1 + x_2 = 1 + 2y$ (and thus they are positive). By using the equality $x_1 x_2 = y(y-1)$, we necessarily have $x_1, x_2 \in [y-1, y]$. Thus as $x \leq y-1$ ($n, m, r \in \mathbb{N}$), we then have $N_{GF} - N_{MF} \leq 0$.

\textit{3rd case: $2(n+m) < r$.} Then $\mathtt{rk}_1= n+m$ and $\mathtt{rk}_2 = 2(n+m)$, and thus: 
\begin{align*}
  N_{GF} - N_{MF} &=   - \frac{\mathtt{rk}_1^2 }{2} + \frac{\mathtt{rk}_2^2 }{2} + r (\mathtt{rk}_1  - \mathtt{rk}_2) + \frac{\mathtt{rk}_1 + \mathtt{rk}_2}{2} \\
  &= (n+m) \left[ \tfrac{3}{2}(n+m) + \tfrac{3}{2} - r \right] \\
  & \leq 0 \text{ as } 2(n+m) + 1 \leq r.
\end{align*}
\end{proof}

\section{Proof of \Cref{thm:conservedfunctionsNMF}} \label{appendix:conservedfunctionsNMF}
 
\conservedfunctionsNMF*

\begin{proof}
One has:
$
    \frac{\mathrm{d}}{\mathrm{d}t} \left(1_n^\top U - 1_m^\top V \right) = 1_n^\top \dot U - 1_m^\top \dot V 
    \stackrel{{\eqref{gradientflow}}}{=} -1_n^\top \Big( U \odot \underbrace{\nabla_U \mathcal{E}_{Z}}_{\in \R^{n \times r}}\Big) + 1_m^\top \Big(V \odot \underbrace{\nabla_V \mathcal{E}_Z}_{ \in \R^{n \times r}}\Big).
$
Then by \eqref{b} one has:
$$
 \frac{\mathrm{d}}{\mathrm{d}t} \left(1_n^\top U - 1_m^\top V \right)  = - 1_n^\top \Big( U \odot \left( \nabla F(\phi(\x)) V \right) \Big) + 1_m^\top \Big(V \odot \left(\nabla F(\phi(\x))^\top U \right)\Big).
$$
Then by denoting $\nabla F(\phi(\x)) = (A_{i, j})_{i, j} \in \R^{n \times m}$, one has for all $j = 1, \cdots, r$:
\begin{align*}
   \left[  1_n^\top \Big( U \odot \left( \nabla F(\phi(\x)) V \right) \Big) - 1_m^\top \Big(V \odot \left(\nabla F(\phi(\x))^\top U \right)\Big)\right][j] &= \sum_{i = 1}^n  U_{i, j} \sum_{k=1}^m A_{i, k} V_{k, j} - \sum_{i = 1}^m  V_{i, j} \sum_{k=1}^n A_{k, i} U_{k, j} \\
   &=0.
\end{align*}
{This proves as claimed that for $1 \leq j \leq r$ the $j$-th column of $1_n^\top U - 1_m^\top V$ defines a conserved function $h_{j}(\x)$ for \eqref{gradientflow} with 
$M(\x)$. Since $h_{j}$ only depends on the corresponding columns $u_{j}, v_{j}$ 
of $U,V$, the gradients $\nabla h_{j}(\x)$, $\nabla h_{\ell})(\x)$ are orthogonal, $j \neq \ell$, hence these conserved functions are also independent. }
\end{proof}

\section{Proof of \Cref{thm:ICNN}} \label{appendix:icnn}
\thmICNN*

\begin{proof}
   {Consider the following extension of the linear transformation $T^{A}$ from \eqref{linear_transf} to cover the presence of biases, where by convention $U \in \R^{n\times r}$, $V \in \R^{m \times r}$, and $b \in \R^{1 \times r}$:}
$$
T^A(\epsilon,U, V, b) \coloneqq
   \Big(  U  \exp(\epsilon A), V \exp(-\epsilon A^\top), b \exp(-\epsilon A^\top )\Big).
$$
{Observe that as soon as } 
 $A \in \R^{r \times r}$ {is diagonal,} 
   $T^A$ is a linear transformation that leaves {$g(\cdot,x)$ invariant for each $x$, hence it also leaves} $\mathcal{E}_Z$ invariant.
 Moreover, 
   $\Delta_{T^A }(U, V, b) =  (U A, -V A^\top, - b A^\top)= (UA, -VA, -b A)$, as diagonal matrices are symmetric. Thus {by \eqref{eq:invarianceloss}}:
   $$
   \left\langle \nabla \mathcal{E}_Z (\x), \smallvec{UA  \\ -VA \\ -b A} \right\rangle = 0.
   $$
{Specializing to} 
 $A= E_{i, i} \in \R^{r \times r}$, the one-hot matrix with the $(i, i)$-th entry being 1, 
 {we obtain} 
   \begin{equation}\label{eq:TmpICNN}
   \langle \nabla_{U_i} \mathcal{E}_Z (\x), U_i \rangle  - \langle \nabla_{V_i} \mathcal{E}_Z (\x), V_i \rangle - \nabla_{b_i} \mathcal{E}_Z (\x)b_i  = 0.
   \end{equation}
{with $U_{i},V_{i}$ the columns of $U,V$}.   Finally, given any $j \in \{ 1, \ldots r \}$ we compute {using that $M(\x) = \diag[(U,1_{n\times r},1_{1 \times r})]$}
\begin{align*}
\frac{\mathrm{d}}{\mathrm{d}t} \left(1_n^\top U_j - \frac{1}{2} \left( \| V_j \|^2 + b_j^2 \right) \right) 
&= 1_n^\top \dot U_j - \left(\langle \dot V_j, V_j \rangle  +  b_j \dot b_j \right)\\
&    \stackrel{\eqref{gradientflow}}{=} 
- \underbrace{1_n^\top \Big( U_j  \odot \nabla_{U_j} \mathcal{E}_{Z}(\x)\Big)}_{= \langle \nabla_{U_j} \mathcal{E}_Z (\x), U_j \rangle } + \langle \nabla_{V_j} \mathcal{E}_{Z}(\x), V_j \rangle +  b_j \nabla_{b_j} \mathcal{E}_{Z}(\x)
\stackrel{\eqref{eq:TmpICNN}}{=}0.\qedhere
  \end{align*}
\end{proof}

\section{About the Natural gradient flow case {(cf \Cref{section:naturalgradient})}.} \label{appendix:naturalgradient}
Let $\x \in \Theta$. We consider $Z = (x_i, y_i)_i$ such that $g(\cdot,x_i)$ is $\mathcal{C}^{2}$ in a neighborhood of $\x$, and  the ODE:
\eq{\dot \x =  \underbrace{\Big( \frac{1}{n} \sum_i  [\partial_1 g(\x,x_i)]^\top \partial_1 g(\x,x_i)\Big)^\dagger}_{{=: M_{Z}(\x)}} \nabla \mathcal{E}_Z(\x),
}
where $A^{\dag}$ denotes the pseudo-inverse of $A$. We consider $\phi$ as in \Cref{as:main_assumption}.
Then {we deduce from \eqref{eq:elr-general} that $\partial_{1} g(\x,x_{i}) = \partial f_{1}(\phi(\x),x_{i}) \partial \phi(\x)$ for each $i$, so that}:
$$
{M_{z}(\x) = } 
\Big( \frac{1}{n} \sum_i  [\partial_1 g(\x,x_i)]^\top \partial_1 g(\x,x_i)\Big)^\dagger 
= \Big( \partial \phi(\x)^\top \frac{1}{n} \sum_i  \partial_1 f(\x,x_i)^\top \partial_1 f(\x,x_i) \partial \phi(\x) \Big)^\dagger.
$$
As $A^{\dag} = \lim_{ \delta \rightarrow 0} A^\top (A A^\top +  \delta I)^{-1}$, 
using the definitions of $W_{\x}^{\mathtt{grad}}$ and $\mathcal{W}_\phi(\x)$ (cf \Cref{proplinkGFnonEuclidean} and \eqref{eq:W_phi_grad}) we get
$W_{\x}^{\mathtt{grad}} \coloneqq 
\linspan_{Z \in \mathcal{Z}'_{\x}} \{ M_Z(\x) \nabla \mathcal{E}_Z(\x) \} \subseteq \linspan \{\nabla \phi_1(\x), \cdots, \nabla \phi_d(\x) \} =: \mathcal{W}_\phi(\x)$.
By \Cref{CL:nonEuclideanGF} and \eqref{eq:v-phi-GF}, any GF conservation law $h$ of $\phi$ for the Euclidean gradient flow \eqref{gradientflow} (i.e., with $M=\mId{}$) satisfies  $\nabla h(\x) \perp  \mathcal{W}_{\phi}(\x)$
for all 
$\x \in \Theta$, hence $\nabla h(\x) \perp W_{\x}^{\mathtt{grad}}$
  for every $\x \in \Theta$. 
  By \Cref{proplinkGFnonEuclidean}, this shows that $h$ is indeed locally conserved on $\Theta$ for any data set {\em through the natural gradient flow} \eqref{gradientflow} with $M_{Z}(\x)$.

\section{About Experiments with formal calculus}
\label{sec:sagemath}
Our code
is open-sourced and is available at \url{https://github.com/sibyllema/Conservation_laws_ICML}.
We used the software SageMath \cite{sagemath}, which relies on a Python interface. 
We compare the number of independent conservation laws given \Cref{thm:algebrelie}, with the number of independent polynomial conservation laws {found (with or without a change of variable)} as explained in \Cref{section:changeofvariable}.

\subsection{Euclidean MF} \label{sec:sagemathEMF} We first considered the case of Euclidean MF (the Euclidean GF has been studied in \cite{marcotte2023abide}) {from \Cref{section:linear,section:ReLU}}.
We tested both linear and ReLU neural networks (with and without biases) of various depths and widths, and observed that the two numbers matched in all our examples, with a constant $\tau$ either equal to $0$ or $1$.
For this, we 
{drew} $20$ random linear (resp. ReLU) neural networks, with depth drawn uniformly at random between $2$ to $4$ and i.i.d. layer widths drawn uniformly at random between $1$ to $4$, with a $\tau$ randomly chosen between $0$ or $1$. For ReLU architectures, the probability of including biases was $1/2$. Then we checked that the two numbers match. In particular, for ReLU neural networks, the number of conservation laws is always equal to zero.
\subsection{NMF} \label{sec:sagemathNMF} 
For the GF scenario (resp. the MF scenario) {with NMF from \Cref{section:NMF}}, we {drew} $20$ random $2$-layer linear neural networks, with i.i.d. layer widths drawn uniformly at random between $1$ to $8$ (resp. $1$ and $6$, with a $\tau$ randomly chosen between $0$ or $1$). We observed that the two numbers (number of independent conservation laws/number of independent ``polynomial'' conservation laws) matched in all our examples in the GF scenario and that there is no conservation law for the MF scenario.

\subsection{ICNN}\label{sec:sagemathICNN} 
For the GF scenario (resp. the MF scenario), we draw $20$ random $2$-layer ReLU neural networks (with and without biases), with i.i.d. layer widths drawn uniformly at random between $1$ to $6$ (resp. and with a $\tau$ randomly chosen between $0$ or $1$), with a probability of including biases of $1/2$. We observed that the two numbers (number of independent conservation laws/number of independent ``polynomial'' conservation laws) matched in all our examples in the GF scenario, and that there is no conservation law for the MF scenario.

\section{Numerical Simulation}
\label{sec:numerics}

In this section, we show numerical simulations on gradient flows and momentum flows to explore: (a) the influence of the time discretization on the preserved quantities, (b) the impact of momentum on the preservation of conservation laws for the gradient flows. Our code
is open-sourced and is available at \url{https://github.com/sibyllema/Conservation_laws_ICML}.

\subsection{Discretization of the flows}

To ease the description, we consider the following parameterization of the flows
$$
	\mu \ddot \theta + \nu \dot \theta = -{M}( \mu \dot \theta + \nu \theta ) \nabla \mathcal{E}_Z(\theta).
$$
Gradient flows correspond to $\mu=0$, while the momentum parameter is $\tau=\nu/\mu$. 
We consider the following time discretization of the flows, where time at step $k$ is $t = k \delta$ and $\delta>0$ is the time step
$$
    \mu \frac{\theta_{k+1} + \theta_{k-1} - 2 \theta_k}{\delta^2} 
    + \nu \frac{\theta_{k+1} - \theta_k }{\delta} = - {M}_k \nabla \mathcal{E}_Z(\theta_k)
    \quad\text{where}\quad
    {M}_k \coloneqq {M}\Big( \mu \frac{\theta_k-\theta_{k-1}}{\delta} + \nu \theta_k \Big)
$$
This can be re-written in the usual form of a gradient descent with momentum
$$
    \theta_{k+1} = \theta_k - \alpha {M}_k \nabla \mathcal{E}_Z(\theta_k)  + \beta (\theta_k-\theta_{k-1})
$$
where 
$$
    \alpha \coloneqq \frac{\delta}{\nu + \mu/\delta}
    \quad\text{and}\quad
    \beta \coloneqq    \frac{\mu}{ \delta \nu + \mu} < 1.
$$
Here $\beta \in [0,1)$ is the momentum (extrapolation) parameter, so that $\beta=0$ corresponds to usual gradient descent, and setting $\beta=1$ is maximum momentum (which is not in general ensured to converge). 

\subsection{MLP example}

We consider here a 3-layer MLP trained for classification on the MNIST dataset~\cite{lecun2010mnist} with the cross entropy loss function and a ReLU non-linearity. The input dimension is $28 \times 28 = 784$ (number of pixels), the inner layer dimensions are $(512, 256)$ and the output dimension is $10$ (number of classes). 
We focus on the conservation laws associated to 
{the $r=512$ neurons of the first two layers,}
and we denote $(U \in \mathbb{R}^{784 \times 512},V \in \mathbb{R}^{256  \times 512})$ the associated matrices, with associated columns neurons $(u_i \in \mathbb{R}^{784})_i$ and $(v_i \in \mathbb{R}^{256})_i$. The $r$ conserved quantities for the gradient flow, $\tau=+\infty$ are $\|u_i\|^2-\|v_i\|^2$ and there is no {{\em exactly} }preserved quantity for the momentum flow $\tau<+\infty$. 

\Cref{fig:mlp_laws}, left, shows the evolution of the loss for a range of step size $\delta$ up to almost no convergence, all with the same initialization of the weights. Note that despite the non-convexity of the loss function, the evolution converges to approximately the same loss value.  
%
\Cref{fig:mlp_laws}, 
right, shows one of the conservation laws (associated with the {neurons of the first layer}). One can see that even for relatively large step sizes, these quantities are almost perfectly conserved. It is only for step size on the edge of instabilities that these quantities are not well preserved. This validates the relevance of these conservation laws for the regime of the step size used for stable training of neural networks.
Figure~\ref{fig:mlp_laws_mu} shows how the evolution of the loss and the preserved quantities for GF is impacted by the momentum parameter $\mu=1/\tau$. As expected, increasing $\mu$ deteriorates the preservation of the conservation law.

\begin{figure}[h]
    \centering
    \includegraphics[width=.4\textwidth]{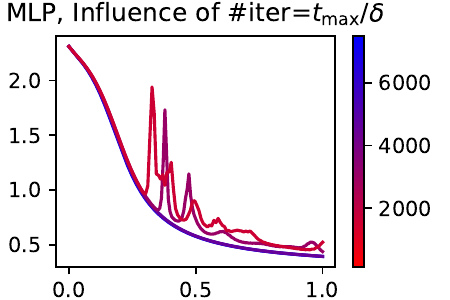} 
    \includegraphics[width=.4\textwidth]{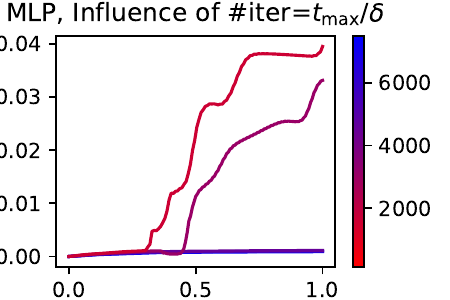} 
    \caption{Impact on the step size $\delta$ on the evolution of the loss (left) and on the preservation of one of the conservation laws (right).
    The colors are associated with the number of iterations $t_{\max}/\delta$ used to train the networks. 
    }
    \label{fig:mlp_laws}
\end{figure}

\begin{figure}[h]
    \centering
    \includegraphics[width=.4\textwidth]{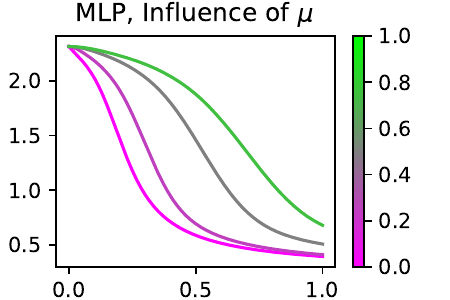} 
    \includegraphics[width=.4\textwidth]{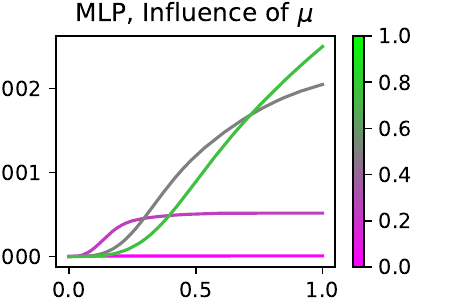} 
    \caption{Impact on the momentum parameter $\mu=1/\tau$ on the evolution of the loss (left) and on the preservation of one of the conservation laws (right).
    }
    \label{fig:mlp_laws_mu}
\end{figure}


\subsection{NMF example}

We consider here a non-negative matrix factorization so that the loss function is $\mathcal{E}_Z(U,V) = \|UV^\top - Y\|^2$ under positivity constraints. 
We thus use the metric of the mirror flow {associated to the Shannon potential function} ${M}_Z(t,\theta,\dot\theta)=\diag(\dot\theta+\tau \theta)$
The column of $Y \in \RR^{n \times p}$ are $p=6903$ images of $n=28 \times 28$ pixels from both the test and training sets of MNIST dataset~\cite{lecun2010mnist} associated to the digit 0, see Fig.~\ref{fig:mnf_data}, left.  We compute a factorization of rank $r=10$ so that $U \in \mathbb{R}^{n \times r}$ and $V \in \mathbb{R}^{p \times r}$. Figure~\ref{fig:mnf_data} shows examples of the $r$  factors (columns of $U$ displayed as positive images). Note that while the function is non-convex, in practice, gradient descent and momentum descent converge to global minimizers (and loss curves converge to the same values), as shown on the left of Figure~\ref{fig:nmf_laws}.  
We use a small step size to avoid discretization error (which impact is similar to the one reported in the previous section). 
For $\tau=+\infty$ the $r$ conservation laws are $1^\top U - 1^\top V$ and Figure~\ref{fig:nmf_laws} displays the evolution in time of the first of the quantities (associated with the first factor). As it is expected when $\mu=0$ (gradient flow) this law is perfectly conserved and is only approximately preserved for larger value of the momentum parameter $\mu$. Note however that an interesting phenomenon arises, that similarly to the MLP case, these quantities stay bounded for all time within a range depending on the momentum parameter $\mu$ (so if $\mu$ is small, approximate conservation holds for all time). Analyzing theoretically this non-trivial phenomenon is an interesting avenue for future work.

\begin{figure}[h]
    \centering
    \includegraphics[width=.4\textwidth]{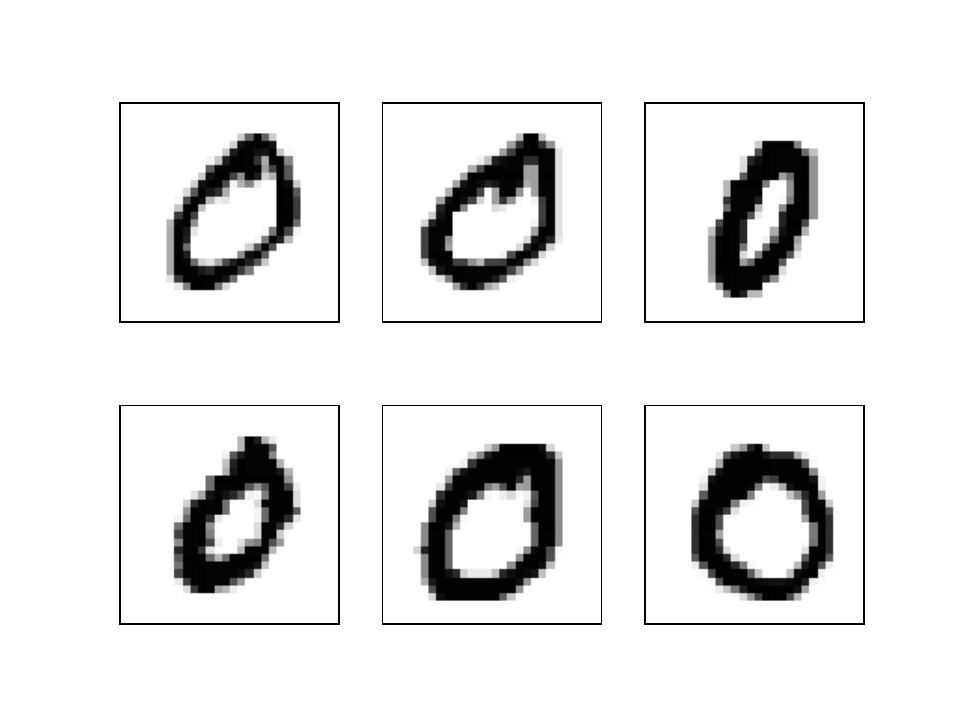} 
    \includegraphics[width=.4\textwidth]{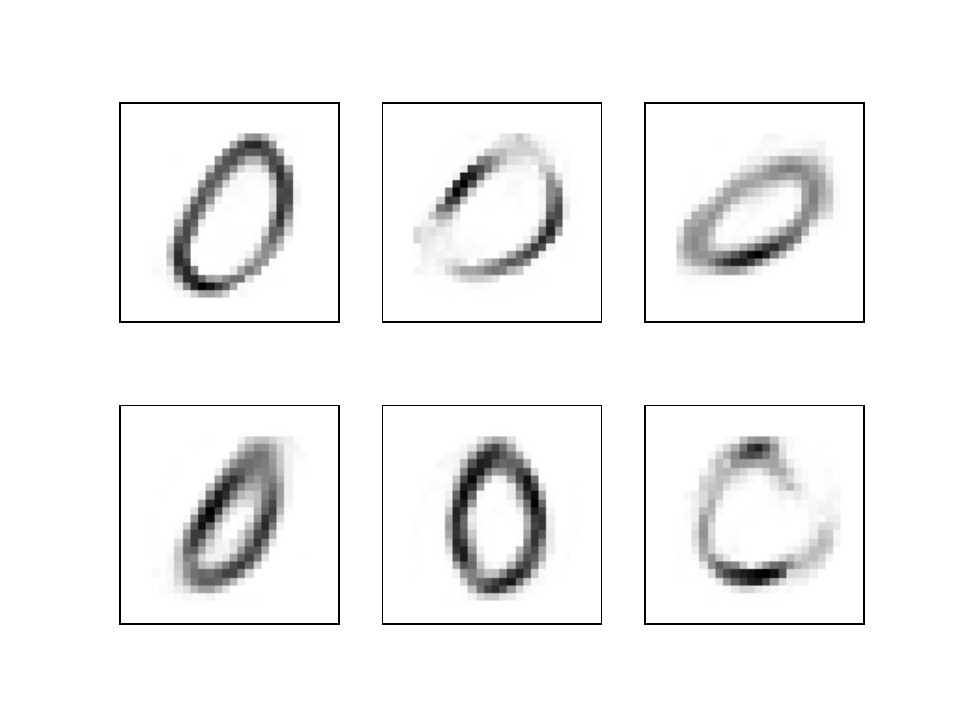} 
    \caption{Left: example of input images (columns of $Y$). Right: example of NMF factors (columns of $U$) at optimality. }
    \label{fig:mnf_data}
\end{figure}

\begin{figure}[h]
    \centering
    \includegraphics[width=.4\textwidth]{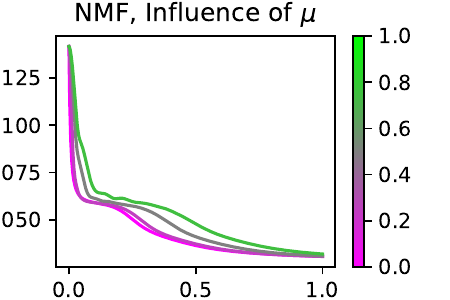} 
    \includegraphics[width=.4\textwidth]{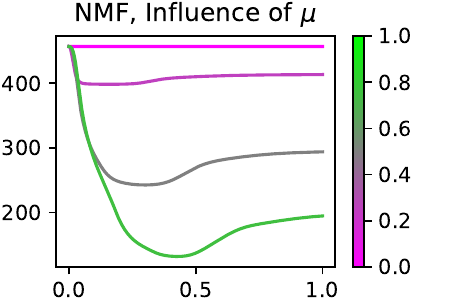} 
    \caption{Impact on the momentum parameter $\mu=1/\tau$ on the evolution of the loss (left) and on the preservation of one of the conservation laws (right). Here $\mu=0$ corresponds to the gradient flow (no momentum). }
    \label{fig:nmf_laws}
\end{figure}
\end{document}